\newcommand{\resreport}{
    \newcommand{\resrep}[1]{##1}
    \newcommand{\arttocl}[1]{}}
\newcommand{\z}{z}
\newcommand{\target}{nesting}
\newcommand{\Target}{Nesting}
\newcommand{\Tnat}{\T_\mathbb{Z}}
\newcommand{\Inat}{\I_\mathbb{Z}}
\newcommand{\Ireals}{\I_\mathbb{R}}
\newcommand{\Tfol}{\T_{\text{fol}}}
\newcommand{\Tfolne}{\V_{\text{fol}}}
\newcommand{\Ifol}{\Iall}
\newcommand{\Cfol}{\C_{\text{fol}}}
\newcommand{\bsubstitution}{base substitution}
\newcommand{\tsubstitution}{\target\ substitution}
\newcommand{\bterm}{base term}
\newcommand{\tterm}{{\target} term}
\newcommand{\bvariable}{base variable}
\newcommand{\tvariable}{\target\ variable}
\newcommand{\Cnat}{\C_\mathbb{Z}}
\newcommand{\Cnatok}{\C_\mathbb{Z}'}
\newcommand{\Tarrays}{\T_\mathbb{A}}
\newcommand{\Iarrays}{\I_\mathbb{A}}
\newcommand{\Carrays}{\C_\mathbb{A}}
\newcommand{\Cearraysint}{\C_\mathbb{A}^\mathbb{Z}}
\newcommand{\Carraysint}{\C_\mathbb{Z}}
\newcommand{\Cearrays}{\C_\mathbb{A}}
\newcommand{\Tearrays}{\V_\mathbb{A}}
\newcommand{\Tearraysint}{{\V_\mathbb{A}^\mathbb{Z}}}
\newcommand{\Tarraysintintint}{\T_{\mathbb{Z}_3}}
\newcommand{\Tearraysreals}{\V_\mathbb{A}^\mathbb{R}}
\newcommand{\Cearraysreals}{\C_\mathbb{A}^\mathbb{R}}
\newcommand{\Tnatok}{\U_{\mathbb{Z}}}
\newtheorem{theorem}{Theorem}
\newtheorem{lemma}[theorem]{Lemma}
\newtheorem{proposition}[theorem]{Proposition}
{\theorembodyfont{\upshape}
  \theoremsymbol{\ensuremath{\Diamond}}
  \newtheorem{definition}[theorem]{Definition}
  
  \theoremsymbol{}
  \newtheorem{example}[theorem]{Example}
  \theoremstyle{nonumberplain}

}
\theoremstyle{nonumberplain}
\newcommand{\commentthis}[1]{}
\newcommand{\isdef}{\stackrel{\mbox{\tiny def}}{=}}
\newcommand{\subseteqdef}{\stackrel{\mbox{\tiny def}}{\subseteq}}
\newcommand{\indef}{\stackrel{\mbox{\tiny def}}{\in}}
\newcommand{\true}{\texttt{true}}
\newcommand{\false}{\texttt{false}}
\newcommand{\dom}{\mbox{\it dom}}
\newcommand{\Dom}[1]{\mbox{\it dom}(#1)} 
\newcommand{\Ran}[1]{\mbox{\it cod}(#1)}
\newcommand{\id}{\mbox{\it id}}
\newcommand{\set}[1]{\{#1\}}
\newcommand{\setof}[2]{\{#1\,|\:#2\}}
\newcommand{\Setof}[2]{\Bigl\{#1\,|\:#2\Bigr\}}
\newcommand{\dN}{\mathbb{N}}
\newcommand{\asort}{{\tt s}}
\newcommand{\snat}{{\tt int}}
\newcommand{\sreal}{{\tt real}}
\newcommand{\sarrays}{{\tt array}}
\newcommand{\sarray}{{\tt array}}
\newcommand{\sbool}{{\tt bool}}
\newcommand{\selem}{{\tt elem}}
\newcommand{\sigterms}[2]{\mathrm{T}_{#1}(#2)}
\newcommand{\gt}[1]{\mathrm{T}_{#1}}
\newcommand{\iseq}{\simeq} 
\newcommand{\interp}[2]{#1^{#2}}
\newcommand{\inter}[2]{#1^{#2}}
\newcommand{\aclass}{{\mathfrak C}}
\newcommand{\C}{{\aclass}}
 \newcommand{\expr}{\mathcal{E}}
\newcommand{\complete}{complete}
\newcommand{\preserving}{$\base$-invariant}
\newcommand{\preservation}{$\base$-invariance}
\newcommand{\am}{\alpha}
\newcommand{\theory}{specification}
\newcommand{\theories}{specifications}
\newcommand{\Theory}{Specification}
\newcommand{\Theories}{Specifications}
\newcommand{\btheory}{base specification}
\newcommand{\ttheory}{{\target} specification}
\newcommand{\valueof}[2]{[#1]_{#2}}
\newcommand{\T}{\mathcal{A}} 
\newcommand{\U}{\mathcal{B}} 
\newcommand{\V}{\mathcal{N}} 
\newcommand{\I}{\mathcal{I}} 
\newcommand{\X}{\mathcal{X}} 
\newcommand{\F}{\mathcal{F}} 
 \newcommand{\inst}{\Theta}
 \newcommand{\allsorts}{\allSorts}
 \newcommand{\allSorts}{{\tt S}}
\newcommand{\rnk}{\mathrm{rnk}}
\newcommand{\bool}{\texttt{bool}}
\newcommand{\vars}{{\cal X}} 
\newcommand{\IntA}{I}
\newcommand{\hcombin}[2]{#2[#1]}
\newcommand{\hcombinproc}[2]{#2[#1]}
\newcommand{\nf}[1]{\gamma(#1)}
\newcommand{\nff}[1]{#1\hspace*{-0.1cm}\downarrow_{\gamma}}
\newcommand{\falsifyingsubs}[2]{\Phi_{#1}({#2})}
\newcommand{\var}{\text{Var}} 
\newcommand{\Var}[1]{\var(#1)}
 \newcommand{\arrays}{\texttt{array}}
 \newcommand{\elem}{\texttt{elem}}
 \newcommand{\indices}{\texttt{ind}}
 \newcommand{\select}{\mathrm{select}}
 \newcommand{\store}{\mathrm{store}}
 \newcommand{\nat}{\texttt{nat}}
\newcommand{\Ax}{{\cal A}}
\newcommand{\Gr}{{\cal G}}
\title{Instantiation Schemes for Nested Theories
\resrep{\\ (research report)}}
\begin{document} \maketitle}

\begin{abstract}
This paper investigates under which conditions instantiation-based proof
 procedures can be combined in a \emph{nested} way, in order to mechanically
 construct new instantiation
 procedures for richer theories. Interesting applications in the field of verification
 are emphasized, particularly for handling extensions of the theory of arrays.
\end{abstract}

\arttocl{
\category{I.2.3}{Artificial Intelligence}{Deduction and Theorem Proving}

\terms{Theory, Verification}

\keywords{Instantiation-based proof procedures, Satisfiability Modulo Theories, Combination of theories}
}

\arttocl{\begin{document}}

\arttocl{
\begin{bottomstuff}
\end{bottomstuff}
}

\arttocl{\maketitle}

\section{Introduction}


Proving the satisfiability or unsatisfiability of a first-order
formula (possibly modulo some background theory) is an essential
problem in computer science -- in particular for the automatic
verification of complex systems, and \emph{instantiation schemes} can
be used for this purpose.  Such schemes can be viewed as  functions
$\inst$ that map a set of formul{\ae} (or clauses) $S$ to a set of
ground (i.e. without variable) instances $\inst(S)$ of $S$. An
instantiation scheme $\inst$
is \emph{refutationally complete} if for all sets of clauses $S$,
$\inst(S)$ is satisfiable exactly when $S$ is. Examples of
refutationally complete instantiation schemes include
\cite{LP92,PZ00,GanzingerKorovin-03-lics,Baumgartner:Tinelli:ModelEvolutionCalculus:CADE:2003}.
It is clear that an instantiation scheme that is refutationally
complete does not always terminate, as $\inst(S)$ may be infinite, but
schemes that are both complete and terminating can be defined for
specific classes of clause sets, that are thus decidable. A trivial
and well-known example is the Bernays-Sch\"{o}nfinkel class (i.e. the
class of purely universal formul{\ae} without function symbols of
arity distinct from $0$, see, e.g., \cite{DG79}), since in this case the
set of ground instances is finite. Other examples include the class of
\emph{stratified} clause sets \cite{Abadi2010153} and many classes of
clause sets of the form $\Gr \cup \Ax$, where $\Gr$ is a set of
\emph{ground} formul{\ae} and $\Ax$ is the set of axioms of a specific
theory, for instance the theory of arrays \cite{Bradleybook}. In this
last case, of course, only the axioms in $\Ax$ need to be
instantiated.

Instantiation schemes can also be defined for specific theories for which decision procedures exist. Then, the theory is not axiomati{\z}ed, but directly handled by an external prover -- used as a ``black box''. In this case, the instantiation procedure should preserve the validity of the formula modulo the considered theory. Such procedures are appealing, because it is usually much easier to check the  validity of a ground set than that of a non-ground set (see for instance \cite{Bradley}).


 \newcommand{\sorted}{\text{\it sorted}}

 Frequently, one has to handle heterogeneous problems, defined on
 complex theories for which no instantiation procedure exists. Such
 theories are frequently obtained by combining simpler theories.  For
 instance the theory describing a data-structure (arrays, list, etc.)
 may be combined with the theory modeling the elements it contains
 (e.g., integers).
 Most systems rely on the Nelson-Oppen method (and its numerous
 refinements) to reason on combination of theories. This scheme allows
 one -- under certain conditions -- to combine independent decision
 procedures (see, e.g., \cite{Tinelli96anew}), but it is of no use for
 reasoning on theories that include axioms containing function or
 predicate symbols from both theories. As an example, consider the
 following formula: $$\forall i,j: \nat,\ i \leq j \Rightarrow
 \select(t,i) \leq \select(t,j),$$ that states that an array $t$ is
 sorted.  This formula uses symbols from the theory of integers (the
 predicate $\leq$) and from the theory of arrays (the function
 $\select$, which returns the value stored in a certain array at a
 certain index).

 In this paper, we show how to construct \emph{automatically}
 instantiation schemes for such axioms, by combining existing
 instantiation schemes.  More precisely, from two complete
 instantiation procedures $\Theta_{\mathbb{N}}$ and
 $\Theta_{\mathbb{A}}$ for the theory of integers and for the theory
 of arrays respectively, we construct a new procedure $\Theta$ which
 is able to handle a particular class of ``mixed'' axioms, containing
 function symbols from both theories (including for instance the
 axioms for sorted arrays and many others).  $\Theta$ will be complete
 and terminating if both $\Theta_{\mathbb{N}}$ and
 $\Theta_{\mathbb{A}}$ are (as proven in Section \ref{sect:complete}).  This approach is not restricted to
 specific theories such as $\Theta_{\mathbb{N}}$ and
 $\Theta_{\mathbb{A}}$; on the contrary it is \emph{generic} and
 applies to a wide range of theories  and some examples are provided in
 Section \ref{sect:appl}. The conditions that must be satisfied by
 the considered theories and by their instantiation procedures are
 very precisely identified (see Section \ref{sect:conditions}).

\subsection*{Comparison with Related Work}

There is an extensive amount of work on the combination of (usually
disjoint) theories, using mainly refinements or extensions of the
Nelson-Oppen method (see, e.g.,
\cite{Tinelli96anew,DBLP:journals/amai/BruttomessoCFGS09}). For
instance, \cite{DBLP:conf/frocos/Fontaine09} shows that many decidable
fragments of first-order logic can be combined with any disjoint
theory, even if these fragments do not fulfill the stable infiniteness
condition in general. A related result is presented in
\cite{DBLP:conf/lpar/FontaineRZ04} for the theory of lists (with a
length function).  However, these results do not apply to non-disjoint
theories as the ones we consider in this paper, and they cannot handle
\emph{nested} combinations of \emph{arbitrary} theories.

Reasoning on the combination of theories with mixed axioms has been
recogni{\z}ed as an important problem and numerous solutions have been
proposed in many specific cases.  Most existing work focuses on testing
the satisfiability problem of \emph{ground} formul{\ae} in
combinations or extensions of existing theories.  In contrast, our
method aims at reducing non-ground satisfiability to ground
satisfiability tests, via instantiation.

For instance, \cite{Bradley,Bradleybook} define a decision procedure
for extensions of the theory of arrays with integer elements, which is
able to handle axioms such as the one above for sorted arrays. As we
shall see in Section \ref{sect:appl}, our approach, when applied to
these particular theories, permits to handle a strictly more
expressive class of quantified formul{\ae}.

\cite{springerlink:10.1007/s10472-007-9078-x} focuses on arrays with
integer indices and devises a method to combine existing decision
procedures (for Presburger arithmetic and for the theory of arrays).
This method is able to handle some important specific features of
arrays such as sortedness or array dimension.  Similarly to our
approach, theirs is based on an instantiation of the axioms.  As we
shall see, some of its features can be tackled with our method and
others (such as Injectivity) are out of its scope.  However, our
method is \emph{generic} in the sense that it applies to a wide class
of theories and axioms (in particular, it applies to axioms that are
not considered in \cite{springerlink:10.1007/s10472-007-9078-x}). It
is essentially syntactic, whereas that of
\cite{springerlink:10.1007/s10472-007-9078-x} is more of a semantic
nature.

A logic devoted to reasoning with arrays of integers is presented is
\cite{DBLP:conf/fossacs/HabermehlIV08} and the decidability of the
satisfiability problem is established by reduction to the emptiness
problem for counter automata.  In Section \ref{sect:appl} we shall
show that the expressive power of this logic is again incomparable
with the one we obtain with our approach.

\cite{dMGe} proposes an instantiation scheme for sets of clauses
possibly containing arithmetic literals, which can handle some of the
axioms we consider. However termination is not guaranteed for this
scheme, in contrast to ours.


\newcommand{\basetheory}{{\cal B}}
\newcommand{\nonbasetheory}{{\cal N}}

Slightly closer to our approach is the work described in
\cite{DBLP:conf/cade/Sofronie-Stokkermans05,DBLP:conf/cade/Sofronie-Stokkermans10},
which defines the notion of the \emph{(stably) local extension} of a theory
and shows that the satisfiability problem in a (stably) local
extension of a theory $\T$ can be reduced to a mere satisfiability
test in $\T$.  The notion of a local extension is a generali{\z}ation of
the notion of a local theory \cite{DBLP:conf/lics/Ganzinger01}. The
idea is that, for testing the satisfiability of a ground formula $\Gr$
in the local extension of a theory, it is sufficient to instantiate
the variables occurring in the new axioms by ground terms occurring
either in $\Gr$ or in the axioms.  This condition holds for numerous
useful extensions of base theories, including for instance extensions
with free functions, with selector functions for an injective
constructor, with monotone functions over integers or reals etc.  Our
approach departs from these results because our goal is not to extend
basic theories, but rather to combine existing instantiation
procedures.  Note also that the notion of a local extension is a
semantic one, and that this property must be established separately
for every considered extension. In our approach we define conditions
on the theories ensuring that they can be safely combined. These
conditions can be tested \emph{once and for all} for each theory, and then any
combination is allowed. The extensions we consider in this paper are
not necessarily local thus do not fall under the scope of the method in
\cite{DBLP:conf/cade/Sofronie-Stokkermans05,DBLP:conf/cade/Sofronie-Stokkermans10}.
However, an important restriction of our approach compared to
\cite{DBLP:conf/cade/Sofronie-Stokkermans05,DBLP:conf/cade/Sofronie-Stokkermans10}
is that the theories must be combined in a \emph{hierarchic} way:
intuitively there can be function symbols mapping elements of the
first theory $\basetheory$ (the ``base'' theory) to elements of the
second one $\nonbasetheory$ (the ``\target'' theory), but no function
symbols are allowed from $\nonbasetheory$ to $\basetheory$.




Extensions of the superposition calculus \cite{BG94} have been proposed to handle first-order extensions of a base theory (see for example \cite{BachmairGW94,AlthausKW09}). The superposition calculus is used to reason on the generic part of the formul{\ae} whereas the theory-specific part is handled by an external prover. These proof procedures can be used to reason on some the formul{\ae} we consider in the present paper. However, we are not aware of any termination result for these approaches (even completeness requires additional restrictions that are not always satisfied in practice).
Our approach uses an instantiation-based approach instead of superposition, and ensures that termination is preserved by the combination, at the cost of much stronger syntactic restrictions on the considered formul{\ae}.

\subsection*{Organi{\z}ation of the Paper}

\newcommand{\Gclauses}{$\omega$-Clauses}
\newcommand{\gclauses}{$\omega$-clauses}
\newcommand{\gclause}{$\omega$-clause}

The rest of the paper is structured as follows.  Section
\ref{sect:prel} contains general definitions and notations used
throughout the present work. Most of them are standard, but some are
more particular, such as the notions of {\gclauses} or {\theories}. Section \ref{sect:combine} describes our procedure for the
nested combination of instantiation schemes, and introduces conditions
to ensure that completeness is preserved. Section \ref{sect:appl}
shows some interesting applications of these results for theories that
are particularly useful in the field of verification (especially for
extensions of the theory of arrays). Section \ref{sect:disc} concludes
the paper and gives some lines of future work.

\section{Preliminaries}

\label{sect:prel}

In this section, we first briefly review usual notions and notations
about first-order clausal logic.  Then we introduce the rather
nonstandard notion of an {\em \gclause}\ (a clause with
infinitely many literals).  We define the notion of \emph{\theories}\
and provide some examples showing how usual theories such as those for
integers or arrays can be encoded. Finally we introduce the notion of
 instantiation methods.


\subsection{Syntax}

\label{sect:syntax}

\newcommand{\base}{\allsorts_{\bind}}
\newcommand{\baseterm}{\gt{\bind}}
\newcommand{\nonbase}{\allsorts_{\nbind}}
\newcommand{\bind}{B}
\newcommand{\nbind}{N}
\newcommand{\basevar}{\X_\bind}
\newcommand{\nonbasevar}{\X_\nbind}
\newcommand{\clbase}{{\Omega}_{\bind}}
\newcommand{\clnonbase}{{\Omega}_{\nbind}}
  \newcommand{\sign}{\Sigma}

  Let $\allSorts$ be a set of \emph{sort symbols} and $\F$ be a set of
  \emph{function symbols} together with a \emph{ranking function}
  $\rnk: \F \rightarrow \allSorts^* \times \allSorts$.  For every $f
  \in \F$, we write $f: \asort_1\times\cdots \times \asort_n
  \rightarrow \asort$ if $\rnk(f) = \asort_1,\ldots,\asort_n,\asort$.
  If $n=0$ then $f$ is a \emph{constant symbol of sort $\asort$}. We
  assume that $\F$ contains at least one constant symbol of each sort.
  To every sort $\asort \in \allSorts$ is associated a countably
  infinite set $\X_\asort$ of \emph{variables of sort $\asort$}, such
  that these sets are pairwise disjoint.  $\X = \bigcup_{s\in
    \allsorts} \X_\asort$ denotes the whole set of variables.  For
  every $\asort \in \allSorts$, the \emph{set of terms of sort
    $\asort$} is denoted by $\sigterms{\asort}{\X}$ and built
  inductively as usual on $\X$ and $\F$:
\begin{itemize}
\item{$\X_\asort \subseteqdef \sigterms{\asort}{\X}$.}
 \item{If $f: \asort_1\times\ldots\times\asort_n \rightarrow \asort$
     and for all $i \in [1,n], t_i \in \sigterms{\asort_i}{\X}$ then $f(t_1,\ldots,t_n) \indef \sigterms{\asort}{\X}$.}
 \end{itemize}
  The \emph{set of terms} is defined by $\sigterms{}{\X}
\isdef \bigcup_{\asort \in \allSorts} \sigterms{\asort}{\X}$.

 An \emph{atom} is an equality $t \iseq s$ between
terms of the same sort.
A \emph{literal} is either an
atom or the negation of an atom (written $t \not \iseq s$).
If $L$ is a literal, then $L^c$ denotes its complementary: $(t \iseq s)^c \isdef (t\not \iseq s)$ and $(t \not \iseq s)^c \isdef (t \iseq s)$.
A \emph{clause} is a finite set (written as a disjunction) of
literals.  
We assume that $\allSorts$ contains a sort $\sbool$ and that $\F$ contains a constant symbol $\true$ of sort $\sbool$.
For readability, atoms of the form $p \iseq \true$ will be simply denoted by $p$ (thus we write, e.g., $a \leq 2$ instead of $(a \leq 2) \iseq \true$). An atom is \emph{equational} iff it is of the form $t \iseq s$ where $t,s \not = \true$.

The set of variables occurring in an expression (term, atom, literal
or clause) $\expr$ is denoted by $\var(\expr)$. $\expr$ is
\emph{ground} iff $\var(\expr) = \emptyset$.  The set of ground terms
of sort $\asort$ is denoted by $\gt{\asort}$ and the set of ground
terms by $\gt{} \isdef \bigcup_{\asort \in \allsorts} \gt{\asort}$.

A
\emph{substitution} is a function that maps every variable to a term
of the same sort. The image of a variable $x$ by a substitution
$\sigma$ is denoted by $x\sigma$. The \emph{domain} of a substitution
$\sigma$ is the set\footnote{for technical convenience we do
  \emph{not} assume that $\dom(\sigma)$ is finite.} $\Dom{\sigma} \isdef
\setof{x \in \X}{x\sigma \neq x}$, and its \emph{codomain}
$\Ran{\sigma}$ is the set of elements the variables in the domain are
mapped to. Substitutions are extended to terms, atoms, literals and
clauses as usual: $f(t_1,\ldots,t_n)\sigma \isdef
f(t_1\sigma,\ldots,t_n\sigma)$, $(t \iseq s)\sigma \isdef (t\sigma
\iseq s\sigma)$, $(\neg L)\sigma \isdef \neg (L\sigma)$ and
$(\bigvee_{i=1}^n L_i)\sigma \isdef \bigvee_{i=1}^n L_i\sigma$.  A
substitution $\sigma$ is \emph{ground} if $\forall x \in
\Dom{\sigma}$, $\var(x\sigma) = \emptyset$. A \emph{ground instance}
of an expression $\expr$ is an expression of the form $\expr\sigma$,
where $\sigma$ is a ground substitution of domain $\var(\expr)$.
\begin{definition}
A
substitution $\sigma$ is \emph{pure} iff for all $x \in \vars$,
$x\sigma \in \vars$. In this case, for any term $t$, $t\sigma$ is a
\emph{pure instance} of $t$. A
substitution $\sigma$ is a \emph{renaming} if it is pure and injective.
\end{definition}

A substitution $\sigma$ is a \emph{unifier} of a set of pairs $\{
(t_i,s_i) \mid i \in [1,n] \}$ iff $\forall i \in [1,n], t_i\sigma =
s_i\sigma$. It is well-known that all unifiable sets have a most
general unifier (mgu), which is unique up to a renaming.

\subsection{Semantics}

\label{sect:semantics}


\newcommand{\relin}[3]{#1 =_{#3} #2}
\newcommand{\notrelin}[3]{#1 =_{#3} #2}
\newcommand{\eqcl}[2]{[#1]_{#2}}
\newcommand{\domof}[1]{D^{#1}}
\newcommand{\domofsort}[2]{#1^{#2}}
\newcommand{\ff}{\gamma}
\newcommand{\fff}{\Gamma}
\newcommand{\Imapping}[1]{$\base$-mapping}
\newcommand{\simwith}[1]{\sim_{#1}}
\newcommand{\iffdef}{$\textrm{iff}$ }

An \emph{interpretation} $I$ is a
function mapping:
\begin{itemize}
\item{Every sort symbol $\asort \in \allsorts$ to a nonempty set $\domofsort{\asort}{I}$.}
\item{Every function symbol $f: \asort_1\times\ldots\times\asort_n \rightarrow \asort \in \F$ to a function $\inter{f}{I}: \domofsort{\asort_1}{I}\times\ldots\times \domofsort{\asort_n}{I} \rightarrow  \domofsort{\asort}{I}$.}
\end{itemize}
$\domof{I}$ denotes the domain of $I$, i.e., the set $\bigcup_{\asort
  \in \allsorts} \domofsort{\asort}{I}$.  As usual, the valuation function
$\expr \mapsto \valueof{\expr}{I}$ maps every ground expression
$\expr$ to a value defined as follows:
\begin{itemize}
\item $\valueof{f(t_1,\ldots,t_n)}{I}
\isdef \inter{f}{I}(\valueof{t_1}{I},\ldots,\valueof{t_n}{I})$,
\item $\valueof{t \iseq s}{\IntA} = \true$ \iffdef $\valueof{t}{\IntA}
  = \valueof{s}{\IntA}$,
\item $\valueof{t \not \iseq s}{\IntA} = \true$
  \iffdef $\valueof{t \iseq s}{\IntA} \not = \true$,
\item $\valueof{\bigvee_{i=1}^n L_i}{\IntA} \isdef \true$ \iffdef
  $\exists i \in [1,n], \valueof{L_i}{\IntA} = \true$.
\end{itemize}
An $\F$-interpretation $I$ \emph{satisfies} an
$\F$-clause $C$ if for every ground instance $C\sigma$ of $C$
we have $\valueof{C\sigma}{\IntA} = \true$.
A set of $\F$-clauses $S$ is \emph{satisfied} by $I$ if $I$ satisfies every clause in $S$.
If this is the case, then $I$ is a \emph{model} of $S$ and we write $I
\models S$.  A set of clauses $S$ is \emph{satisfiable} if it has a model;
two sets of
clauses are \emph{equisatisfiable} if one is satisfiable
exactly when the other is satisfiable.

In the sequel, we restrict ourselves, w.l.o.g., to interpretations such that, for every $\asort \in \allSorts$,
$\domofsort{\asort}{I} = \{ \valueof{t}{I} \mid t \in \gt{\asort} \}$.

\subsection{\Gclauses}

\label{sect:gcl}

For technical convenience, we extend the usual notion of a clause by allowing infinite disjunction of literals:

\begin{definition}
\label{def:gcl}
An \emph{\gclause} is a possibly infinite set of literals.
\end{definition}
\newcommand{\emb}{\trianglelefteq} The notion of instance extends
straightforwardly to \gclauses: if $C$ is an \gclause\ then $C\sigma$
denotes the \gclause\ $\{ L\sigma \mid L \in C \}$ (recall that the
domain of $\sigma$ may be infinite).  Similarly, the semantics of
\gclauses\ is identical to that of standard clauses: if $C$ is a
ground \gclause, then $\valueof{C}{\IntA} \isdef \true$ iff there
exists an $L \in C$ such that $\valueof{L}{\IntA} = \true$. If $C$ is
a non-ground \gclause, then $\IntA \models C$ iff for every ground
substitution of domain $\var(C)$, $\valueof{C\sigma}{\IntA} =
\true$.  The notions of satisfiability, models etc. are extended
accordingly.  If $S,S'$ are two sets of \gclauses, we write $S \emb
S'$ if for every clause $C' \in S'$ there exists a clause $C \in S$
such that $C \subseteq C'$.

\begin{proposition}
\label{prop:logc}
If $S \emb S'$ then $S'$ is a logical consequence of $S$.
\end{proposition}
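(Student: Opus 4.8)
The statement is a monotonicity property: enlarging an $\omega$-clause (adding literals to its disjunction) can only make it easier to satisfy. So the plan is to fix an arbitrary interpretation $I$ with $I \models S$ and show $I \models S'$, by checking each clause of $S'$ separately.

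First I would take an arbitrary $C' \in S'$. By the definition of $\emb$ there is some $C \in S$ with $C \subseteq C'$; since $I \models S$ we have $I \models C$. Note that $C \subseteq C'$ entails $\var(C) \subseteq \var(C')$. Now I would fix an arbitrary ground substitution $\sigma$ of domain $\var(C')$ and argue that $\valueof{C'\sigma}{I} = \true$. Applying $\sigma$ to $C$ yields a ground $\omega$-clause, and from $C \subseteq C'$ we get $C\sigma \subseteq C'\sigma$ (as sets of ground literals). Because $I \models C$, the ground instance $C\sigma$ (obtained via $\sigma$, whose restriction to $\var(C)$ is again a ground substitution of the right domain) satisfies $\valueof{C\sigma}{I} = \true$, so by the semantics of ground $\omega$-clauses there is a literal $L \in C$ with $\valueof{L\sigma}{I} = \true$. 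Since $L\sigma \in C'\sigma$, the same witness shows $\valueof{C'\sigma}{I} = \true$. As $\sigma$ was arbitrary, $I \models C'$, and as $C'$ was arbitrary, $I \models S'$, which is what we wanted.

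There is essentially no hard step here; the only points that need a little care are bookkeeping ones, namely that $\var(C) \subseteq \var(C')$ so that a single ground substitution can be used for both clauses, and that the extension of the semantics to (possibly infinite) $\omega$-clauses given just before the proposition still makes ``satisfied'' amount to ``some disjunct is true,'' which is exactly the form of monotonicity we are exploiting. I would state these two observations explicitly and keep the rest of the argument at the level of detail above.
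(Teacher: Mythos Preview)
Your proof is correct. The paper actually states this proposition without proof, treating it as immediate from the definitions; your argument spells out precisely the straightforward monotonicity reasoning (a literal witnessing $C\sigma$ also witnesses the larger $C'\sigma$) that the authors left implicit.
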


Of course, most of the usual properties of first-order logic such as
semi-decidability or compactness \emph{fail} if \gclauses\ are
considered. For instance, if $C$ stands for the {\gclause} $\setof{b
  \iseq f^i(a)}{i \in \dN}$ and $D_j \isdef \set{b \not\iseq f^j(a)}$ for
$j \in \dN$, then $S \isdef \setof{D_j}{j\in \dN} \cup \set{C}$ is
unsatisfiable, although every finite subset of $S$ is satisfiable.

\subsection{\Theories}

\label{sect:theories}

Usually, theories are defined by sets of axioms and are closed under
logical consequence. In our setting, we will restrict either the class
of interpretations (e.g., by fixing the interpretation of a sort
$\snat$ to the natural numbers) or the class of clause sets (e.g., by
considering only clause sets belonging to some decidable fragments or
containing certain axioms).  This is why we introduce the (slightly unusual)
notion of \emph{\theories}, of which we provide examples in the
following section:



\begin{definition}
  A \emph{\theory} $\T$ is a pair $(\I,\C)$, where $\I$ is a set of
  interpretations and $\C$ is a class of clause sets.
  A clause set $S \in \C$ is \emph{$\T$-satisfiable} if there exists
  an $\IntA \in \I$ such that $\IntA \models S$. $S$ and $S'$ are
  \emph{$\T$-equisatisfiable} if they are both $\T$-satisfiable or
  both $\T$-unsatisfiable.  We write $S \models^{\T} S'$ iff every
  $\T$-model of $S$ is also an $\T$-model of $S'$.
  \end{definition}

  For the sake of readability, if $\T$ is clear from the context, we
  will say that a set of clauses is satisfiable, instead of $\T$-satisfiable.
  We write $(\I,\C) \subseteq (\I',\C')$ iff $\I = \I'$ and  $\C \subseteq \C'$.
  By a slight abuse of language, we say that \emph{$C$ occurs in $\T$} if there exists $S \in \C$ such that $C \in S$.
\newcommand{\Iall}{\I_{\text{fol}}}

In many cases, $\I$ is simply the set of all interpretations, which we denote by $\Iall$.
But our results also apply to domain-specific instantiation schemes
such as those for Presburger arithmetic.
Of course, restricting the form of the clause sets in $\C$ is
necessary in many cases for defining instantiation schemes that are
both terminating and refutationally complete. That is why we do not
assume that $\C$ contains every clause set.  Note that axioms may be
included in $\C$. We shall simply assume that $\C$ is closed
  under inclusion and ground instantiations, i.e., for all $S \in \C$
  if $S'\subseteq S$ and $S''$ only contains ground instances of
  clauses in $S$, then $S',S'' \in \C$. All the classes of clause sets
considered in this paper satisfy these requirements.

\newcommand{\gdefin}{$\omega$-definable}
\newcommand{\axof}[1]{\text{Ax}(#1)}

We shall restrict ourselves to a particular class of \theories: those
with a set of interpretations that can be defined by a set of
\gclauses.
\begin{definition}\label{def:gdefin}
  A \theory\ $\T = (\I,\C)$ is \emph{\gdefin}\ iff there exists a
  (possibly infinite) set of \gclauses\ $\axof{\I}$ such that $\I = \{
  I \mid I \models \axof{\I} \}$.
\end{definition}
From now on, we assume that all the considered \theories\ are \gdefin.

\subsection{Examples}

\label{sect:ex}

\begin{example}
\label{ex:fol}
The \theory\ of first-order logic is defined by $\Tfol \isdef (\Ifol,\Cfol)$ where:
\begin{itemize}
\item{$\Ifol$ is the set of all interpretations (i.e. $\axof{\Ifol} \isdef \emptyset$).}
\item{$\Cfol$ is the set of all clause sets on the considered signature.}
\end{itemize}
\end{example}

\begin{example}\label{ex:presb}
The \theory\ of Presburger arithmetic is defined as follows: $\Tnat \isdef (\Inat,\Cnat)$ where:
\begin{itemize}
\item{$\axof{\Inat}$ contains the domain axiom:
$\bigvee_{k \in \mathbb{N}} (x\iseq s^k(0) \vee x \iseq -s^k(0))$ and the usual axioms for the function symbols $0: \snat$, $-: \snat \rightarrow \snat$, $s: \snat \rightarrow \snat$, $p: \snat \rightarrow \snat$, $+: \snat \times \snat\rightarrow \snat$, and for the predicate symbols $\iseq_k : \snat \times \snat\rightarrow \sbool$ (for every $k \in \mathbb{N}$) $\leq: \snat \times \snat\rightarrow \sbool$ and $<: \snat \times \snat\rightarrow \sbool$:
\[
\begin{tabular}{cc}
$0+x \iseq x$ &
$s(x)+y \iseq s(x+y)$ \\
$p(x)+y \iseq p(x+y)$ &
$p(s(x)) \iseq x$ \\
$s(p(x)) \iseq x$ &
$s^k(0) \iseq_k 0$ \\
$-0 \iseq 0$ &
$-s(x) \iseq p(-x)$ \\
$-p(x) \iseq s(-x)$ &
$x \not \iseq_k y \vee s^k(x) \iseq_k y$ \\
$x \not \iseq_k y \vee p^k(x) \iseq_k y$ &
$x < y \Leftrightarrow s(x) < s(y)$ \\
$x \not < y \vee x < s(y)$ &
$x \leq y \Leftrightarrow (x < y \vee x \iseq y)$ \\
$x < s(x)$ \\
\end{tabular}
\]
$\iseq_k$ denotes equality modulo $k$ (which will be used in Section
\ref{sect:pa}); $x,y$ denote variables of sort $\snat$ and $k$ is any
natural number. Note that the domain axiom is an infinite \gclause, while
the other axioms can be viewed as standard clauses.}
\item{$\Cnat$ is the class of clause sets built on the set of function
    symbols $0:\snat,s:\snat \rightarrow \snat,p:\snat\rightarrow
    \snat$ and on the previous set of predicate symbols.}
\end{itemize}
In the sequel, the terms $s^k(0)$ and $p^k(0)$ will be written $k$ and $-k$
respectively.
\end{example}

\begin{example}
The \theory\ of arrays is $\Tarrays \isdef (\Iarrays,\Carrays)$ where:
\begin{itemize}
\item{$\axof{\Iarrays} \isdef \{ \select(\store(x,z,v),z) \iseq v,\ z'
\iseq z \vee \select(\store(x,z,v),z') \iseq \select(x,z') \}$, where
$\select: \arrays\times \indices \rightarrow \elem$ and
$\store: \arrays \times \indices \times \elem \rightarrow \arrays$ ($x$ is a variable of sort $\arrays$, $z,z'$ are variables of sort $\indices$ and $v$ is a variable of sort $\elem$).}
\item{$\Carrays$ is the class of ground clause sets built on $\select$, $\store$ and a set of constant symbols.}
\end{itemize}
\end{example}

It should be noted that reals can be also handled by using any axiomatization of real closed fields.

\subsection{Instantiation Procedures}

An instantiation procedure is a function that reduces the $\T$-satisfiability problem for any set of $\T$-clauses to that of
a (possibly infinite) set of \emph{ground}  $\T$-clauses.

\newcommand{\terminating}{terminating}

\begin{definition}
  Let $\T = (\I,\C)$ be a \theory.  An \emph{instantiation procedure
    for $\T$} is a function $\Theta$ from $\C$ to $\C$ such that for
  every $S \in \C$, $\Theta(S)$ is a set of ground instances of
  clauses in $S$.  $\Theta$ is \emph{\complete}\ for $\T$ if for every
  $S \in \C$, $S$ and $\Theta(S)$ are $\T$-equisatisfiable. It is
  \emph{\terminating}\ if $\Theta(S)$ is finite for every $S \in \C$.
\end{definition}

If $\Theta$ is complete and terminating, and if there exists a
decision procedure for checking whether a ground (finite) clause set is
satisfiable in $\I$, then the $\T$-satisfiability problem is clearly decidable.
Several examples of complete 
instantiation procedures are available in the literature \cite{PZ00,GanzingerKorovin-03-lics,Baumgartner:Tinelli:ModelEvolutionCalculus:CADE:2003,dMGe,DBLP:journals/cj/LoosW93,Abadi2010153,Bradley,EP10c,EP10a}. 
Our goal in this paper is to provide a general mechanism for constructing new complete instantiation procedures by combining existing ones.


\newcommand{\su}{\text{succ}}

\section{Nested Combination of \Theories}

\label{sect:combine}

\subsection{Definition}

\label{sect:bdef}

Theories are usually combined by considering their (in general
disjoint) union. Decision procedures for disjoint theories can be
combined (under certain conditions) by different methods, including
the Nelson-Oppen method \cite{Tinelli96anew} or its refinements.  In
this section we consider a different way of combining \theories.  The
idea is to combine them in a ``hierarchic'' way, i.e., by considering
the formul{\ae} of the first \theory\ as constraints on the formul{\ae} of
the second one.

For instance, if $\Tnat$ is the \theory\ of Presburger arithmetic and
$\Tarrays$ is the \theory\ of arrays, then:
\begin{itemize}
\item{$0 \leq x \leq n$ is a formula of $\Tnat$ ($x$ denotes a variable and $n$ denotes a constant symbol of sort $\snat$).}
\item{$\select(t,x) \iseq a$ is a formula of $\Tarrays$ (stating that $t$ is a constant array).}
\item{$0 \leq x \leq n \Rightarrow \select(t,x) \iseq a$ (stating that $t$ is a constant on the interval $[0,n]$) is a formula obtained by combining
$\Tnat$ and $\Tarrays$ hierarchically.}
\end{itemize}

Such a combination cannot be viewed as a union of  disjoint \theories, since the  axioms contain function symbols from both \theories.
In this example, $\Tnat$ is a {\em base \theory} and $\Tarrays$ is a {\em \target\ \theory}.

\newcommand{\Xbase}{\X_{\bind}}
\newcommand{\Xnonbase}{\X_{\nbind}}
\newcommand{\funbase}{\F_{\bind}}
\newcommand{\funnonbase}{\F_{\nbind}}

\newcommand{\grounding}{$\base$-ground instance}
\newcommand{\gr}[1]{#1_{\base\downarrow}}
\newcommand{\groundings}{$\base$-ground instances}

More formally, we assume that the set of sorts $\allSorts$ is divided
into two disjoint sets $\base$ and $\nonbase$ such that for every
function $f: \asort_1\times\ldots\times\asort_n \rightarrow \asort$,
if $\asort \in \base$, then $\asort_1,\ldots,\asort_n \in \base$. A
term is a \emph{\bterm}\ if it is of a sort $\asort \in \base$ and a
\emph{\tterm}\ if it is of a
sort $\asort \in \nonbase$ and contains no non-variable \bterm.  In
the sequel we let $\Xbase \isdef \bigcup_{\asort \in \base} \X_{\asort}$
(resp. $\Xnonbase \isdef \bigcup_{\asort \in \nonbase} \X_{\asort}$) be the
set of base variables (resp. \target\ variables) and let $\funbase$
(resp. $\funnonbase$) be the set of function symbols whose co-domain
is in $\base$ (resp. $\nonbase$).
An \emph{\grounding\ of an expression $\expr$} is an expression
of the form $\expr\sigma$ where $\sigma$ is a
ground substitution of domain $\var(\expr) \cap \Xbase$.
Intuitively, an \grounding\ of $\expr$ is obtained from $\expr$
by replacing every variable of a sort $\asort \in \base$
(and only these variables) by a ground term of the same sort.

  \begin{definition}
  \label{def:clbase}
$\clbase$ denotes the set of \gclauses\ $C$ such that every term occurring in $C$ is a \bterm.
$\clnonbase$ denotes the set of \gclauses\ $C$ such that:
 \begin{enumerate}
 \item{Every non-variable term occurring in $C$ is a \tterm.}
 \item{For every atom $t \iseq s$ occurring in $C$, $t$ and $s$ are {\tterm}s.}
 \end{enumerate}
 \end{definition}

Notice that it follows from the definition that $\clbase \cap
\clnonbase = \emptyset$, since $\base$ and $\nonbase$ are disjoint.

\begin{definition}
\label{def:basetheory}
A \theory\ $(\I,\C)$ is a \emph{\btheory}\ if $\axof{\I} \subseteq \clbase$ and for every $S \in \C$, $S \subseteq \clbase$.
It is a \emph{\ttheory}\ if $\axof{\I} \subseteq \clnonbase$ and for every $S \in \C$, $S \subseteq \clnonbase$.
\end{definition}

\newcommand{\projec}[2]{{#1}^{#2}}
\newcommand{\projecbase}[1]{#1^{\bind}}
\newcommand{\projecnonbase}[1]{#1^{\nbind}}

\newcommand{\nestedcombin}{hierarchic expansion}
\newcommand{\nestedcombins}{hierarchic expansions}
\newcommand{\nestedcombinof}[2]{hierarchic expansion of #2 over #1}

\newcommand{\aparametersort}{parametric}
\newcommand{\anonparametersort}{non parametric}
\newcommand{\parametersorts}{parametric sorts}
\newcommand{\ps}[1]{{\tt PS}^{#1}}
\newcommand{\entailsrigid}[1]{\models^{r}}

\commentthis{
\begin{proposition}
Let $\T = (\I,\C)$ be a \theory. Let $I,J$ be two interpretations.
Assume that $I$ and $J$ coincide on every sort symbols and on every symbols
For every \Imapping{\ps{\T}} $\ff$, and for every
$I \in \I$ we have $\ff(I) \in \I$.
\end{proposition}

\begin{proof}
This is a direct consequence of the definition since
$\ff$ may be written on the form $\ff_1\circ \ldots \circ \ff_n$ where $\ps{\T} = \{ \asort_1,\ldots,\asort_n\}$
and $\ff_i$ is a \Imapping{\{ \asort_i \}}.
\end{proof}
}

Throughout this section, $\basetheory = (\I_\bind,\C_\bind)$ will
denote a \btheory\ and $\nonbasetheory = (\I_\nbind,\C_\nbind)$
denotes a \ttheory.  Base and \target\ \theories\ are combined as
follows:
\newcommand{\bpart}{base part}
\newcommand{\tpart}{\target\ part}


\begin{definition}
\label{def:hcombin}
The \emph{\nestedcombinof{$\basetheory$}{$\nonbasetheory$}} is the
\theory\ $\hcombin{\basetheory}{\nonbasetheory} = (\I,\C)$ defined as
follows:
\begin{enumerate}
\item{$\axof{\I} \isdef \axof{\I_\bind} \cup \axof{\I_\nbind}$.
         \label{cond:hcI}}
     \item{Every clause set in $\C$ is of the form $\{ C_i^\bind \vee
         C_i^\nbind\mid i \in [1..n] \}$, where $\{ C_i^\bind \mid i
         \in [1..n] \} \in \C_\bind$ and $\{ C_i^\nbind \mid i \in
         [1..n] \} \in \C_\nbind$.
 \label{cond:hcC}}
    \end{enumerate}

If $C$ is a clause in $\C$, then $\projecbase{C}$ is the
\emph{\bpart}\ of the clause and $\projecnonbase{C}$ is its
\emph{\tpart}.
If $S$ is a set of clauses in $\C$, then $\projecbase{S}$ and
$\projecnonbase{S}$ respectively denote the sets $\{ \projecbase{C}
\mid C \in S \}$ and $\{ \projecnonbase{C} \mid C \in S \}$, and are respectively
called the \emph{\bpart} and \emph{\tpart} of $S$.

\end{definition}

The following proposition shows that the decomposition in Condition
\ref{cond:hcC} is unique.

\begin{proposition}
For every clause $C$ occurring in a clause set in $\C$,
there exist two unique clauses $\projecbase{C}$ and $\projecnonbase{C}$ such that
$C = \projecbase{C} \vee \projecnonbase{C}$.
\end{proposition}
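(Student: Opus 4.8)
The plan is to prove the two halves of the statement---existence and uniqueness of the splitting---separately, the existence part being essentially built into Definition~\ref{def:hcombin}. Let $C$ be a clause occurring in some clause set $S \in \C$. By Condition~\ref{cond:hcC}, $S$ has the form $\{ C_i^\bind \vee C_i^\nbind \mid i \in [1..n] \}$ with $\{ C_i^\bind \mid i \in [1..n] \} \in \C_\bind$ and $\{ C_i^\nbind \mid i \in [1..n] \} \in \C_\nbind$, so $C = C_j^\bind \vee C_j^\nbind$ for some $j$. Since $\basetheory$ is a {\btheory} and $\nonbasetheory$ a {\ttheory}, Definition~\ref{def:basetheory} yields $C_j^\bind \in \clbase$ and $C_j^\nbind \in \clnonbase$. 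Hence $C$ admits at least one decomposition $C = D \vee E$ with $D \in \clbase$ and $E \in \clnonbase$, and we may legitimately set $\projecbase{C} \isdef D$, $\projecnonbase{C} \isdef E$ once uniqueness is established.

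For \emph{uniqueness} I would work at the level of single literals, using the fact that a clause is just a set of literals, so that ``$D \vee E$'' means $D \cup E$. The key observation is that, for a literal $L$ occurring in such a decomposition, whether $L$ lies in the base part or in the {\target} part is dictated by the sort of the atom of $L$. Write this atom as $t \iseq s$; by the definition of an atom, $t$ and $s$ have a common sort $\asort$, and since $\allSorts = \base \cup \nonbase$ with $\base \cap \nonbase = \emptyset$, this $\asort$ lies in exactly one of $\base$, $\nonbase$. Now: if $L$ occurs in a clause belonging to $\clbase$, then $t$ occurs in that clause, hence is a {\bterm}, hence $\asort \in \base$; and if $L$ occurs in a clause belonging to $\clnonbase$, then by Condition~2 of Definition~\ref{def:clbase} both $t$ and $s$ are {\tterm}s, hence $\asort \in \nonbase$. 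These two possibilities are mutually exclusive.

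To finish, suppose $C = D_1 \vee E_1 = D_2 \vee E_2$ with $D_1, D_2 \in \clbase$ and $E_1, E_2 \in \clnonbase$. Fix a literal $L \in C$ and let $\asort$ be the common sort of the two sides of its atom. For each $i \in \{1,2\}$ we have $L \in D_i$ or $L \in E_i$; by the previous paragraph, $L \in D_i$ forces $\asort \in \base$ and $L \in E_i$ forces $\asort \in \nonbase$, so the alternative that holds is determined by $\asort$ alone and is independent of $i$. Consequently $D_1$ and $D_2$ are both equal to the set of literals of $C$ whose atom has a sort in $\base$, and $E_1 = E_2$ is the set of literals of $C$ whose atom has a sort in $\nonbase$; in particular the clauses $\projecbase{C}$ and $\projecnonbase{C}$ are well defined.

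I do not expect a serious obstacle: the argument is mostly a matter of unfolding Definitions~\ref{def:clbase} and~\ref{def:basetheory}. The one subtlety worth flagging is that the decomposition operates literal by literal, so the fact that {\gclauses} may be infinite plays no role, and that the remark $\clbase \cap \clnonbase = \emptyset$ recorded after Definition~\ref{def:clbase} is really used here at the finer granularity of individual literals, via the disjointness of $\base$ and $\nonbase$ applied to the shared sort of the two sides of an atom (the empty clause fitting the pattern as well, decomposing uniquely with the empty clause on each side).
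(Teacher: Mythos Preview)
Your proof is correct and follows essentially the same approach as the paper's: existence via Condition~\ref{cond:hcC} of Definition~\ref{def:hcombin}, uniqueness via Definition~\ref{def:basetheory}. The paper leaves the uniqueness step at ``follows straightforwardly,'' whereas you have carefully unpacked it at the level of individual literals using the disjointness of $\base$ and $\nonbase$; this is exactly the content behind the paper's one-line justification.
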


\begin{proof}
  The existence of two clauses $\projecbase{C}$, $\projecnonbase{C}$
  is a direct consequence of Condition \ref{cond:hcC} in Definition
  \ref{def:hcombin}. Uniqueness follows straightforwardly from Definition
  \ref{def:basetheory}.
\end{proof}

\begin{example}
\label{ex:exnest}
Consider the following clauses:

{\small
\[
\begin{tabular}{lll}
$c_1$ & $\{ x \not \geq a \vee \select(t,x) \iseq 1 \}$ & ($t$ is constant on $[a,\infty[$) \\
$c_2$ & $\{ x \not \geq a \vee x \not \leq b \vee \select(t,x) \iseq \select(t',x) \}$ & ($t$ and $t'$ coincide on $[a,b]$) \\
$c_3$ & $\{ \select(t,i) \iseq \select(t',i+1) \}$ & ($t$ and $t'$
coincide up to a shift) \\
$c_4$ & $\{ x \not \leq y \vee \select(t,x) \leq \select(t,y) \}$ & ($t$ is sorted) \\
$c_5$ & $\{ \select(t,x) \leq x \}$ & ($t$ is lower than the identity) \\
\end{tabular}
\]
}

Clauses $c_1$ and $c_2$ occur in $\hcombin{\Tnat}{\Tarrays}$, and for
instance, $\projecnonbase{c_1} = (\select(t,x) \iseq 1)$ and
$\projecbase{c_1} = (x \not \geq a)$. Clause  $c_3$ does not occur in
$\hcombin{\Tnat}{\Tarrays}$ because the atom $\select(t',i+1)$ of the
\target\ \theory\ contains the non-variable term $i+1$ of the base
\theory. However, $c_3$ can be equivalently written as follows:
\[
\begin{tabular}{lll}
$c_3'$ & $\{ j \not\iseq i+1 \vee \select(t,i) \iseq \select(t',j) \}$
&
\end{tabular}
\]
and $c_3'$ is in $\hcombin{\Tnat}{\Tarrays}$\footnote{However as we
  shall see in Section \ref{sect:appl}, our method cannot handle such
  axioms, except in some very particular cases. In fact, adding axioms
  relating two consecutive elements of an array easily yields
  undecidable \theories\ (as shown in \cite{Bradleybook}).}.  Clause
$c_4$ does not occur in $\hcombin{\Tnat}{\Tarrays}$, because
$\select(t,x) \leq \select(t',x)$ contains symbols from both $\Tnat$
(namely $\leq$) and $\Tarrays$ ($\select$) which contradicts Condition
\ref{cond:hcC} of Definition \ref{def:hcombin}.  However, $c_4$ can be
handled in this setting by considering a \emph{copy} $\Tnat'$ of
$\Tnat$ (with disjoint sorts and function symbols).  In this case,
$c_4$ belongs to $\hcombin{\Tnat}{(\Tarrays \cup \Tnat')}$, where
$\Tarrays \cup \Tnat'$ denotes the union of the \theories\ $\Tarrays$
and $\Tnat'$.  Of course $\Tnat'$ can be replaced by any other
\theory\ containing an ordering predicate symbol.  The same
transformation \emph{cannot} be used on the clause $c_5$, since
(because of the literal $\select(t,x) \leq x$) the sort of the indices
cannot be separated from that of the elements. Again, this is not
surprising because, as shown in \cite{Bradleybook}, such axioms (in
which index variables occur out of the scope of a $\select$)
easily make the theory undecidable.
\end{example}

\newcommand{\adequate}{adequate}
\newcommand{\deriv}[1]{{#1}_{\vee}^{\star}}
\newcommand{\derivinf}[1]{{#1}_{\vee}^{\omega}}
\newcommand{\derivt}[2]{{#1}_{\vee}^{#2}}

\newcommand{\diam}{\bullet}
\newcommand{\speccst}{\diam}
\newcommand{\botz}{\diam}
\newcommand{\litof}[1]{{\cal L}_\speccst(#1)}
\newcommand{\adequateone}[1]{$#1$-embeddable}
\newcommand{\sadequateone}{base-complete}
\newcommand{\adequatetwo}{\target-complete}
\newcommand{\Sadequateone}{Base-Complete}

\newcommand{\ste}[1]{\bs{#1}}
\newcommand{\instB}[2]{{#1}_{\downarrow#2}}

Since $\base$ and $\nonbase$ are disjoint, the boolean sort cannot occur both in $\base$ and $\nonbase$.
However, this problem can easily be overcome by considering two copies of this sort ($\bool$ and $\bool'$).

\subsection{Nested Combination of Instantiation Schemes}

\label{sect:conditions}


\newcommand{\bs}[1]{G_{#1}}
\newcommand{\abs}{G}
\newcommand{\BB}{\abs}

The goal of this section is to investigate how instantiation schemes
for $\basetheory$ and $\nonbasetheory$ can be combined in order to obtain an
instantiation scheme for $\hcombin{\basetheory}{\nonbasetheory}$. For instance, given two instantiation schemes for integers and arrays respectively, we want to \emph{automatically} derive an instantiation scheme handling mixed axioms such as those in Example \ref{ex:exnest}.
We
begin by imposing conditions on the  schemes under consideration.

\newcommand{\hierarchy}{hierarchy}


\newcommand{\Bt}[1]{G(#1)}

\newcommand{\uniform}{uniform}
\newcommand{\stuniform}{strongly uniform}


\subsubsection{Conditions on the \Target\ \Theory}

First, we investigate what conditions can be imposed on the
instantiation procedure for the \target\ \theory\ $\nonbasetheory$. What is needed is
\emph{not}  an instantiation procedure that is complete for
$\nonbasetheory$; indeed, since by definition every term of a sort in
$\base$ occurring in $\C_\nbind$ is a variable, such an instantiation
would normally replace every such variable by an arbitrary ground term
(a constant, for example). This is not satisfactory because in the
current setting, the value of these variables can be constrained by the
base part of the clause. This is why we shall assume that the considered
procedure is complete for every clause set that is obtained from
clauses in $\C_\nbind$ by grounding the variables in $\Xbase$,
\emph{no matter the grounding instantiation}.


\newcommand{\bmapping}{$\base$-mapping}

\begin{definition}
  An \emph{\bmapping}\ is a function $\am$ from $\baseterm$ to
  $\baseterm$.  Such a mapping is extended straightforwardly into a
  function from expressions to expressions: for every expression
  (term, atom, literal, clause or set of clauses) $\expr$,
  $\am(\expr)$ denotes the expression obtained from $\expr$ by
  replacing every term $t \in \baseterm$ occurring in $\expr$ by
  $\am(t)$.

  An instantiation procedure $\Theta$ is \emph{\preserving}\ iff for
  every \bmapping\ $\am$, and every clause $C$ in a set $S$, $C \in
  \Theta(S) \Rightarrow \am(C) \in \Theta(\am(S))$.
\end{definition}

We may now define \emph{\adequatetwo} instantiation
procedures. Intuitively, such a procedure must be complete on those
sets in which the only terms of a sort in $\base$ that occur are
ground, the instances cannot depend on the names of the terms in
$\baseterm$ and the addition of
 information cannot make the procedure  less instantiate a clause set.


\begin{definition}
\label{def:theta2}
An instantiation procedure $\Theta$ is \emph{\adequatetwo}\ if the following conditions hold:

\begin{enumerate}
\item{\label{theta2:comp}For all sets $S \in \C_{\nbind}$ and all sets
    $S'$ such that every clause in $S'$ is an \grounding\ of a clause
    in $S$, $S'$ and $\Theta(S')$ are $\T$-equisatisfiable. }
 \item{\label{theta2:pres}$\Theta$ is \preserving.
}
\item{$\Theta$ is monotonic: $S' \subseteq S \Rightarrow \Theta(S') \subseteq \Theta(S)$. \label{theta2:mono}}
\end{enumerate}
\end{definition}


\subsubsection{Conditions on the Base \Theory}

Second, we impose conditions on the instantiation procedure for the base \theory\ $\basetheory$.
We need the following definitions:

\begin{definition}
\label{def:inst}
Let $S$ be a set of clauses and let $\abs$ be a set of terms.
We denote by $\instB{S}{\abs}$ the set of clauses of the form
$C\sigma$, where
$C \in S$ and $\sigma$ maps every variable in $C$ to a term of the same sort in $\abs$.
\end{definition}

\begin{proposition}
\label{prop:instmono}
Let $S$ be a set of clauses and let $\abs$ and $\abs'$ be two sets of ground terms. If $\abs \subseteq \abs'$ then $\instB{S}{\abs} \subseteq \instB{S}{\abs'}$.
\end{proposition}



\begin{definition}
\label{def:deriv}
If $S$ is a set of clauses, we denote by $\deriv{S}$ the set of
clauses of the form $\bigvee_{i=1,\ldots, n} C_i\sigma_i$
such that for every $i \in [1,n]$, $C_i \in S$ and $\sigma_i$ is a
pure substitution.
\end{definition}

\begin{example}
Let $S = \{ p(x,y) \}$. Then $\deriv{S}$ contains among others the clauses
$p(x,x), p(x,y)$, $p(x,y) \vee p(z,u)$, $p(x,y) \vee p(y,x)$, $p(x,y) \vee p(y,z) \vee p(z,u)$, etc.
\end{example}
\begin{definition}
\label{def:theta1s}
An instantiation procedure
$\Theta$ for $\basetheory$ is \emph{\sadequateone}\ iff the following conditions hold:
\begin{enumerate}
 \item{For every $S \in \C_\bind$ there exists a \emph{finite} set of terms $\bs{S}$ such that
$\Theta(S) = \instB{S}{\bs{S}}$
and $\Theta(S)$ and $S$ are
$\basetheory$-equisatisfiable.
\label{theta1s:bs}}

\item{If $S \subseteq S'$ then $\bs{S} \subseteq \bs{S'}$. \label{theta1s:mono}}
\item{For every clause set $S \in \C$, $\bs{\deriv{S}}\subseteq
    \bs{S}$.
\label{theta1s:disj}}
\end{enumerate}

\end{definition}

Obviously these conditions are much stronger than those of Definition \ref{def:theta2}. Informally, Definition \ref{def:theta1s} states that:
\begin{enumerate}
\item{All variables must be instantiated in a uniform\footnote{Of course sort constraints must be taken into account.} way by ground
    terms, and satisfiability must be preserved.}
\item{The instantiation procedure is monotonic.}
\item{The considered set of ground terms does not change when new
clauses are added to $S$, provided that these clauses are obtained from clauses already occurring in $S$ by disjunction and pure instantiation only.}
\end{enumerate}

\newcommand{\sutop}{\gamma^{\speccst}}
\newcommand{\insttop}[1]{#1\sutop}
\newcommand{\instcc}[2]{#1[#2/\speccst]}

\subsubsection{Definition of the Combined Instantiation Scheme}

We now define an instantiation procedure for
$\hcombin{\basetheory}{\nonbasetheory}$.  Intuitively this procedure
is defined as follows.
\begin{enumerate}
\item{First,  the \tpart\ of each clause in $S$ is extracted and  all
    base variables are instantiated by arbitrary constant symbols
    $\diam$ (one for each base sort). }
\item{The instantiation procedure for $\nonbasetheory$ is applied on
    the resulting clause set. This instantiates all {\tvariable}s\
    (but not the {\bvariable}s, since they have already been
    instantiated at Step $1$). }
\item{All the substitutions on {\target} variables from Step 2 are
    applied to the \emph{initial} set of clauses.}
\item{Assuming the instantiation procedure for $\basetheory$ is
    \sadequateone, if this procedure was applied to the \bpart\ of the
    clauses, then by Condition \ref{theta1s:bs} of Definition
    \ref{def:theta1s}, the {\bvariable}s in the {\bpart} of the
    clauses would be uniformly instantiated by some set of terms
    $\abs$.  All base variables and all occurrences of constants $\diam$
    are replaced by all possible terms in $\abs$.}

\end{enumerate}

\begin{example}
  Assume that $\basetheory = \Tnat$, $\nonbasetheory = \Tfol$ and that
  $\F$ contains the following symbols: $a: \snat$, $b: \snat$, $c:
  \asort$ and $p: \snat \times \asort \rightarrow \sbool$.  Consider
  the set $S = \{ x \not \leq a \vee p(x,y), u \not \leq b \vee \neg
  p(u,c) \}$.

\begin{enumerate}
\item{We compute the set $\projecnonbase{S} = \{ p(x,y), \neg p(u,c)
    \}$ and replace every base variable by $\diam$. This yields the
    set: $ \{ p(\speccst,y), \neg p(\speccst,c) \}$.}
\item{We apply an instantiation procedure for $\Tfol$\footnote{There
      exist several instantiation procedures for $\Tfol$, one such
      example is given in Section \ref{sect:fol}.}. Obviously,
    this procedure should instantiate the variable $y$ by $c$,
    yielding $ \{ p(\speccst,c), \neg p(\speccst,c) \}$.}
\item{We apply the (unique in our case) substitution $y \mapsto c$ to
    the initial clauses: $\{ x \not \leq a \vee p(x,c), u \not \leq b
    \vee \neg p(u,c) \}$. Note that at this point all the remaining
    variables are in $\Xbase$.}
\item{We compute the set of clauses $\projecbase{S} = \{ x \not \leq
    a, u \not \leq b \}$ and the set of terms $\bs{\projecbase{S}}$.
    It should be intuitively clear\footnote{A formal definition of an
      instantiation procedure for this fragment of Presburger arithmetic
      will be given in Section \ref{sect:pa}.} that $x$ must be
    instantiated by $a$ and $u$ by $b$, yielding $\bs{\projecbase{S}}
    = \{ a,b \}$.}
\item{We thus replace all base variables by every term in $\{ a, b \}$
    yielding the set $\{ a \not \leq a \vee p(a,c), b \not \leq a \vee
    p(b,c), a \not \leq b \vee \neg p(a,c), b \not \leq b \vee \neg
    p(b,c)\}$, i.e., after simplification, $\{ p(a,c), b \not \leq a
    \vee p(b,c), a \not \leq b \vee \neg p(a,c), \neg p(b,c)\}$. It is
    straightforward to check that this set of clauses is
    unsatisfiable. Any SMT-solver capable of handling arithmetic and
    propositional logic can be employed to test the satisfiability of
    this set.}
\end{enumerate}
\end{example}

The formal definition of the procedure is given below.
Let $\sutop$ be a substitution mapping every variable of a sort $\asort \in \base$
to an arbitrary constant symbol $\speccst_\asort$ of sort $\asort$.

\begin{definition}
\label{def:hcomb}
Let $\Theta_\bind$ be a \sadequateone\ instantiation procedure
and  $\Theta_\nbind$ be a \adequatetwo{} instantiation
procedure.
$\hcombinproc{\Theta_\bind}{\Theta_\nbind}(S)$ is defined
as the set of clauses of the form
$(\projecbase{C} \vee \projecnonbase{C})\theta'\sigma$ where:
\begin{itemize}
\item{$C \in S$.}
\item{$\projecnonbase{C}\sutop\theta \in \Theta_\nbind(\insttop{\projecnonbase{S}})$.}
\item{$\theta'$ is obtained from $\theta$ by replacing every occurrence of a constant symbol $\speccst_\asort$ in the co-domain of $\theta$ by a fresh variable of the same sort.}
\item{$\sigma$ maps every variable in $C\theta'$ to a term of the same sort in
$\ste{\projecbase{S}}$.}
\end{itemize}
\end{definition}

The following proposition is straightforward to prove and states the
soundness of this procedure:

\begin{proposition}
  Let $\Theta_\bind$ be a \sadequateone\ instantiation procedure and
  let $\Theta_\nbind$ be a \adequatetwo{} instantiation procedure.
  For every set of clauses $S \in \C$,
  $\hcombinproc{\Theta_\bind}{\Theta_\nbind}(S)$ is a set of ground
  instances of clauses in $S$. Thus if
  $\hcombinproc{\Theta_\bind}{\Theta_\nbind}(S)$ is
  $\hcombin{\basetheory}{\nonbasetheory}$-unsatisfiable, then so is
  $S$.
\end{proposition}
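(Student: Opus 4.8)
The plan is to show two things for every $S \in \C$: first, that $\hcombinproc{\Theta_\bind}{\Theta_\nbind}(S)$ consists only of ground instances of clauses in $S$; second, that it belongs to $\C$, so that the procedure is well-typed as a function from $\C$ to $\C$. The unsatisfiability transfer then follows immediately from Proposition~\ref{prop:logc}: since every clause in $\hcombinproc{\Theta_\bind}{\Theta_\nbind}(S)$ is an instance of a clause of $S$, we have $S \emb \hcombinproc{\Theta_\bind}{\Theta_\nbind}(S)$, hence the latter is a logical consequence of $S$, and in particular every $\hcombin{\basetheory}{\nonbasetheory}$-model of $S$ is a model of $\hcombinproc{\Theta_\bind}{\Theta_\nbind}(S)$; contrapositively, $\hcombin{\basetheory}{\nonbasetheory}$-unsatisfiability of the image forces $\hcombin{\basetheory}{\nonbasetheory}$-unsatisfiability of $S$.

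First I would fix a clause $D \in \hcombinproc{\Theta_\bind}{\Theta_\nbind}(S)$. By Definition~\ref{def:hcomb}, $D = (\projecbase{C} \vee \projecnonbase{C})\theta'\sigma$ for some $C \in S$, where $\projecnonbase{C}\sutop\theta \in \Theta_\nbind(\insttop{\projecnonbase{S}})$, $\theta'$ is $\theta$ with the constants $\speccst_\asort$ replaced by fresh variables, and $\sigma$ maps every variable in $C\theta'$ to a ground term of $\ste{\projecbase{S}}$. Since $C = \projecbase{C} \vee \projecnonbase{C}$ by the uniqueness proposition, $D = C\theta'\sigma$, so $D$ is an instance of $C$. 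For groundness, I would argue that $\theta'\sigma$ is a ground substitution whose domain covers $\var(C)$: the non-base variables of $C$ are exactly those of $\projecnonbase{C}$, and on those $\sutop$ is the identity, so $\projecnonbase{C}\theta = \projecnonbase{C}\sutop\theta$ is a ground instance of $\projecnonbase{C}$ because $\Theta_\nbind$ produces ground instances (it is an instantiation procedure for $\nonbasetheory$ in the sense of Definition~\ref{def:theta2}, Condition~\ref{theta2:comp}, applied to the set $\insttop{\projecnonbase{S}}$ which lies in $\C_\nbind$ after grounding). The substitution $\theta'$ reintroduces fresh base variables where $\theta$ had placed the constants $\speccst_\asort$; together with the original base variables of $C$, these are precisely the variables left in $C\theta'$, and $\sigma$ sends all of them to ground terms of $\ste{\projecbase{S}}$. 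Hence $C\theta'\sigma$ is ground. A small point to verify carefully here is that $\theta$ does not introduce new non-base variables, i.e.\ that $\Theta_\nbind$ returns instances whose codomain contains only the constant symbols $\speccst_\asort$ and already-present non-base material — but in fact even if $\theta$ introduced further non-base variables, Definition~\ref{def:theta2}(\ref{theta2:comp}) guarantees its output is a set of \emph{ground} clauses, so $\projecnonbase{C}\sutop\theta$ is ground and no stray variables survive.

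The step I expect to be the only real content is checking $\hcombinproc{\Theta_\bind}{\Theta_\nbind}(S) \in \C$, i.e.\ that the result has the shape required by Condition~\ref{cond:hcC} of Definition~\ref{def:hcombin} and is a legitimate member of the class. Here I would use the closure assumptions on the classes: $\C$ is closed under inclusion and ground instantiation (stated in Section~\ref{sect:theories}), and similarly for $\C_\bind$ and $\C_\nbind$. Each $D = (\projecbase{C} \vee \projecnonbase{C})\theta'\sigma$ decomposes as $(\projecbase{C}\theta'\sigma) \vee (\projecnonbase{C}\theta'\sigma)$; since $\theta'\sigma$ acts on base variables by ground base terms and on non-base variables by $\theta$, the first disjunct is a ground instance of $\projecbase{C} \in \projecbase{S} \in \C_\bind$ and the second a ground instance of $\projecnonbase{C} \in \projecnonbase{S} \in \C_\nbind$, and crucially the base/non-base split is preserved by substitution because base variables only ever receive base terms. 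So the image is built from ground instances of the base part and ground instances of the target part with a common indexing, exactly matching Condition~\ref{cond:hcC}, and it lies in $\C$ by the ground-instantiation closure. This is the paragraph where one must be pedantic about which substitution touches which variable; everything else is bookkeeping. I would close by spelling out, as above, that $S \emb \hcombinproc{\Theta_\bind}{\Theta_\nbind}(S)$ and invoking Proposition~\ref{prop:logc} for the unsatisfiability statement. \rule{1ex}{1ex}
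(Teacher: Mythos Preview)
The paper does not give a proof of this proposition at all; it simply declares it ``straightforward to prove'' and moves on. Your write-up is therefore far more detailed than anything in the paper, and the overall plan---unfold Definition~\ref{def:hcomb}, check that $\theta'\sigma$ grounds every variable of $C$, and conclude that each output clause is a ground instance of some $C\in S$---is exactly the intended (routine) argument.

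There is, however, one genuine slip. You justify the unsatisfiability transfer by claiming $S \emb \hcombinproc{\Theta_\bind}{\Theta_\nbind}(S)$ and invoking Proposition~\ref{prop:logc}. That is not correct: the relation $\emb$ is defined by literal-set inclusion ($C \subseteq C'$), not by instantiation, and a ground instance $C\theta'\sigma$ does \emph{not} in general contain the non-ground clause $C$ as a subset. The right (and simpler) argument is directly from the semantics of Section~\ref{sect:semantics}: an interpretation satisfies $C$ iff it satisfies every ground instance of $C$, so any $\hcombin{\basetheory}{\nonbasetheory}$-model of $S$ is automatically a model of $\hcombinproc{\Theta_\bind}{\Theta_\nbind}(S)$. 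A smaller point: the equation $\projecnonbase{C}\theta = \projecnonbase{C}\sutop\theta$ that you write in the groundness paragraph is false in general, because $\projecnonbase{C}$ may contain base variables (Definition~\ref{def:clbase} allows them), and $\sutop$ is \emph{not} the identity on those. Your subsequent reasoning does not actually rely on this equation---what you need is only that $\theta$ grounds every non-base variable of $\projecnonbase{C}$ (true, since $\projecnonbase{C}\sutop\theta$ is ground and $\sutop$ touches only base variables) and that $\theta'$ and the original clause leave only base variables, which $\sigma$ then grounds---so the conclusion survives, but the offending sentence should be rewritten.
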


Several examples of
concrete instantiation procedures satisfying the conditions of
Definitions \ref{def:theta2} and \ref{def:theta1s} are provided in Section \ref{sect:appl}.

\subsection{Completeness}

\label{sect:complete}

The remainder of this section is devoted to the proof of the main
result of this paper, namely that the procedure
$\hcombinproc{\Theta_1}{\Theta_2}$ is complete for
$\hcombin{\basetheory}{\nonbasetheory}$:

\begin{theorem}
\label{theo:comp}
Let $\Theta_\bind$ be a \sadequateone\ instantiation procedure (for $\basetheory$)
and let $\Theta_\nbind$ be a \adequatetwo{} instantiation
procedure (for $\nonbasetheory$).
Then $\hcombinproc{\Theta_\bind}{\Theta_\nbind}$ is \complete\ for
$\hcombin{\basetheory}{\nonbasetheory}$; furthermore,
this procedure is
monotonic 
and \preserving.
\end{theorem}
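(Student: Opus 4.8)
The plan is to prove completeness by establishing the two directions of equisatisfiability of $S$ and $\hcombinproc{\Theta_\bind}{\Theta_\nbind}(S)$. One direction is given already: the preceding soundness proposition shows $\hcombinproc{\Theta_\bind}{\Theta_\nbind}(S)$ consists of ground instances of $S$, so any model of $S$ is a model of the instances. The work is in the converse: assuming $\hcombinproc{\Theta_\bind}{\Theta_\nbind}(S)$ is $\hcombin{\basetheory}{\nonbasetheory}$-satisfiable, I must build an $\hcombin{\basetheory}{\nonbasetheory}$-model of $S$. First I would fix a model $I$ of the instance set $\hcombinproc{\Theta_\bind}{\Theta_\nbind}(S)$; by definition of the hierarchic expansion, $I \models \axof{\I_\bind} \cup \axof{\I_\nbind}$ automatically, so the point is to satisfy every clause $C = \projecbase{C} \vee \projecnonbase{C}$ in $S$ under \emph{all} of its ground instances, not just the ones kept by the procedure.

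The core of the argument should mirror the four-step intuition given before Definition~\ref{def:hcomb}, but run semantically. I would first apply $\sutop$ to collapse the base variables of $\projecnonbase{S}$ to the constants $\speccst_\asort$, and use that $\Theta_\nbind$ is \adequatetwo{}: Condition~\ref{theta2:comp} of Definition~\ref{def:theta2} tells us that $\Theta_\nbind(\insttop{\projecnonbase{S}})$ is equisatisfiable with $\insttop{\projecnonbase{S}}$ (viewing $\speccst_\asort$ as a fresh constant makes $\insttop{\projecnonbase{S}}$ a set of $\base$-ground instances). The substitutions $\theta$ appearing in $\Theta_\nbind(\insttop{\projecnonbase{S}})$ record \emph{which} instantiations of the {\tvariable}s suffice; lifting these back via $\theta'$ (replacing $\speccst_\asort$ by fresh base variables) and applying them to the whole clause set $S$ yields an intermediate set $S\theta'$ in which the only remaining variables are base variables. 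I then pass to $\projecbase{S\theta'}$ and invoke that $\Theta_\bind$ is \sadequateone{}: by Condition~\ref{theta1s:bs} there is a finite $\bs{\projecbase{S\theta'}}$ with $\instB{\projecbase{S\theta'}}{\bs{\projecbase{S\theta'}}}$ equisatisfiable with $\projecbase{S\theta'}$. The key leverage is Condition~\ref{theta1s:disj} together with the $\deriv{\cdot}$ construction: because $\projecbase{S\theta'}$ is obtained from $\projecbase{S}$ by pure instantiation (and the clauses combine only by disjunction after grounding), $\bs{\projecbase{S\theta'}} \subseteq \bs{\projecbase{S}}$, so the set of base terms used by the combined procedure is exactly the right one. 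Uniformity (Condition~\ref{theta1s:bs}, every variable instantiated by every term in the set) is what lets me combine the base instantiation with the $\theta'$ obtained for the {\target} part without the two interfering — this is where $\preservation$ of $\Theta_\nbind$ (Condition~\ref{theta2:pres}) is needed, to guarantee that applying a base substitution commutes with running $\Theta_\nbind$.

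Having set this up, I would argue: since $\hcombinproc{\Theta_\bind}{\Theta_\nbind}(S)$ is satisfiable, the base-part instances $\instB{\projecbase{S\theta'}}{\bs{\projecbase{S}}}$ are satisfiable, hence so is $\projecbase{S\theta'}$ by \sadequateone{}-ness, and independently the {\target}-part instances $\Theta_\nbind(\insttop{\projecnonbase{S}})$ are satisfiable, hence so is $\projecnonbase{S}$ by \adequatetwo{}-ness. Because $\base$ and $\nonbase$ are disjoint sorts and no function symbol maps from $\nonbase$ to $\base$, the base-sorted and {\target}-sorted parts of an interpretation can be glued together — one builds $I'$ whose restriction to base sorts witnesses $\projecbase{S}$ and whose {\target}-sorted part (with the base-sorted arguments of mixed function symbols interpreted via the base model) witnesses $\projecnonbase{S}$, and then $I' \models \projecbase{C} \vee \projecnonbase{C}$ for every clause and every ground instance. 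The monotonicity and $\preservation$ of $\hcombinproc{\Theta_\bind}{\Theta_\nbind}$ follow by a direct unfolding of Definition~\ref{def:hcomb}, using Condition~\ref{theta2:mono} and Proposition~\ref{prop:instmono}, plus Condition~\ref{theta1s:mono} of Definition~\ref{def:theta1s}, to propagate the respective properties of $\Theta_\bind$ and $\Theta_\nbind$ through the composition.

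The main obstacle I anticipate is the gluing step and, more precisely, justifying that satisfiability of the \emph{retained} instances really does transfer to satisfiability of \emph{all} ground instances of each clause $C$. The delicate point is that a ground instance of $C$ substitutes a base term $t$ for a base variable that may \emph{not} lie in $\bs{\projecbase{S}}$; one must show that such $t$ can always be "folded" onto some term that is in $\bs{\projecbase{S}}$ without changing the truth value of $\projecnonbase{C}$ — this is exactly the role of $\base$-mappings and the $\preservation$ hypotheses, and making that folding argument precise (so that it simultaneously respects the base model for $\projecbase{C}$ and is invisible to the {\target} model for $\projecnonbase{C}$) is the technical heart of the proof. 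I would expect to need an auxiliary lemma, proved by induction on clause structure, stating that for a suitable $\base$-mapping $\am$ contracting $\baseterm$ onto $\bs{\projecbase{S}}$, the interpretation $I$ satisfies $C\sigma$ iff it satisfies $\am(C\sigma)$, and that $\am(C\sigma)$ is among the retained instances.
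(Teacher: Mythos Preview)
Your high-level strategy has a genuine gap at the ``gluing'' step, and the difficulty is more fundamental than the folding issue you flag at the end. You write that from satisfiability of $\hcombinproc{\Theta_\bind}{\Theta_\nbind}(S)$ one can deduce that $\instB{\projecbase{S\theta'}}{\bs{\projecbase{S}}}$ is satisfiable and, \emph{independently}, that $\Theta_\nbind(\insttop{\projecnonbase{S}})$ is satisfiable, and then combine the two witnessing models. But neither deduction is valid: $\Theta(S)$ contains ground instances of the \emph{disjunctions} $\projecbase{C}\vee\projecnonbase{C}$, and a model of these disjunctions need not satisfy the set of base parts alone (take any clause whose base part is empty: then $\projecbase{S}$ contains the empty clause). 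The same objection applies to the {\target} parts. Satisfiability of $C^\bind\vee C^\nbind$ simply does not decompose into separate satisfiability of $C^\bind$ and $C^\nbind$; the two parts are coupled through the \emph{same} ground substitution, and your proposed model-gluing ignores this coupling.

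The paper's proof avoids this by working in the contrapositive direction and making the coupling explicit. For each base interpretation $I\in\I_\bind$ it forms the \emph{partial evaluation} $\nonbasepart{S}{I}$, collecting exactly those {\target} parts $\projecnonbase{C}\eta$ whose companion $\projecbase{C}\eta$ is falsified by $I$ (Lemma~\ref{lem:ri} shows this set is $\nonbasetheory$-unsatisfiable whenever $S$ is). Completeness of $\Theta_\nbind$ is then applied to $\nonbasepart{S}{I}$, not to $\insttop{\projecnonbase{S}}$; an abstraction lemma (Lemma~\ref{lem:gene}) lifts the concrete base terms back to variables, yielding for each $I$ an \gclause\ $E_I\in\derivinf{\projecbase{S}}$ that $I$ falsifies. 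The set $\{E_I\mid I\in\I_\bind\}$ is therefore $\basetheory$-unsatisfiable, and a separate lemma (Lemma~\ref{lem:compgen}) extends base-completeness of $\Theta_\bind$ to such sets of \gclauses. This is where Condition~\ref{theta1s:disj} is really used: the $E_I$ are (possibly infinite) disjunctions of pure instances of clauses in $\projecbase{S}$, and one needs $\bs{\projecbase{S}}$ to suffice for them too. The $\base$-invariance and monotonicity of $\Theta_\nbind$ are invoked only at the very end, to show that the instances produced along the way actually lie inside $\Theta(S)$. Your proposal does not contain the partial-evaluation idea nor the \gclause\ machinery, and without them the argument cannot be completed.
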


The rest of the section (up to Page \pageref{sect:appl}) can be skipped entirely by readers not
interested in the more theoretical aspects of the
work. The proof of this theorem relies on a few intermediate
results that are developed in what follows.

\newcommand{\depon}{\triangleleft}


\subsubsection{Substitution Decomposition}

\begin{definition}
A substitution $\sigma$ is a \emph{\bsubstitution}\ iff
$\dom(\sigma) \subseteq \Xbase$. It is a \emph{\tsubstitution}\ iff
$\dom(\sigma) \subseteq \Xnonbase$ and for every $x \in \dom(\sigma)$,
$x\sigma$ contains no non-variable \bterm.
\end{definition}

We  show that every ground substitution can be decomposed into two parts: a \tsubstitution\ and a
\bsubstitution. We begin by an example:

\begin{example}
  Assume that $\basetheory = \Tnat$, $\nonbasetheory = \Tfol$ and that
  $\F$ contains the following symbols: $f: \asort \times \snat
  \rightarrow \asort, c: \asort $.  Consider the ground substitution
  $\sigma = \{ x \mapsto f(c,s(0)), y \mapsto f(f(c,0),0), n \mapsto
  s(0) \}$. We can extract from $\sigma$ a {\tsubstitution} by
  replacing all subterm-maximal {\bterm}s by variables, thus obtaining
  $\sigma_{\nbind} = \{ x \mapsto f(c,n), y \mapsto f(f(c,m),m) \}$,
  and then construct the {\bsubstitution} $\sigma_{\bind} = \{ n
  \mapsto s(0), m \mapsto 0 \}$ such that $\sigma =
  \sigma_{\nbind}\sigma_{\bind}$. Note that $\sigma_{\nbind}$ is not
  ground and that $\dom(\sigma_{\bind}) \not \subseteq \dom(\sigma)$.
\end{example}

The following result generali{\z}es this construction:

\begin{proposition}
\label{prop:decsubs}
Every ground substitution $\sigma$ can be decomposed into a product
$\sigma = (\sigma_\nbind\sigma_\bind)|_{\dom(\sigma)}$ where
$\sigma_\nbind$ is a \tsubstitution, $\sigma_{\bind}$ is a
\bsubstitution, and for all $x \in \dom(\sigma_\bind) \setminus
\dom(\sigma)$,
\begin{itemize}
\item{
$\forall y\in \dom(\sigma_\bind) \cap \dom(\sigma), x\sigma_\bind \not = y\sigma_\bind$,}
 \item{$\forall y \in \dom(\sigma_\bind) \setminus \dom(\sigma), y\sigma = x\sigma \Rightarrow x=y$.}
 \end{itemize}
\end{proposition}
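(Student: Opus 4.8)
The plan is to obtain the decomposition by peeling from each value $x\sigma$ its maximal base-sorted subterms. Call a subterm $t$ of a ground term $u$ a \emph{maximal \bterm of $u$} if $t$ has a sort in $\base$ and $t$ is not a proper subterm of any other subterm of $u$ whose sort is in $\base$. The structural fact that makes everything work is that, by the restriction on the signature (a function symbol whose codomain lies in $\base$ has all its argument sorts in $\base$), the set of base-sorted subterms of $u$ is closed under taking subterms; hence every base-sorted subterm of $u$ lies inside a \emph{unique} maximal \bterm of $u$, and replacing all maximal {\bterm}s of $u$ by variables produces a term in which every non-variable subterm has a sort in $\nonbase$, i.e.\ a term containing no non-variable \bterm.

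Next I would fix the names of the peeled subterms. Let $\mathcal{T}$ be the set of all maximal {\bterm}s of the terms $x\sigma$ for $x \in \dom(\sigma)\cap\Xnonbase$. To each $t\in\mathcal{T}$ I attach a base variable $v_t$ of the same sort: if $y\sigma = t$ for some $y\in\dom(\sigma)\cap\Xbase$, take $v_t$ to be one such $y$; otherwise take $v_t$ to be a variable not occurring in $\dom(\sigma)$, chosen pairwise distinct for distinct elements of $\mathcal{T}$. Define $\sigma_\nbind$ to be the identity on base variables and, on $x\in\dom(\sigma)\cap\Xnonbase$, to return the term obtained from $x\sigma$ by replacing every occurrence of every maximal \bterm $t$ by $v_t$; define $\sigma_\bind$ by $y\sigma_\bind = y\sigma$ for $y\in\dom(\sigma)\cap\Xbase$, $v_t\sigma_\bind = t$ for $t\in\mathcal{T}$ (these prescriptions agree on the overlap, since $v_t = y$ forces $y\sigma=t$), and the identity elsewhere. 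By the previous paragraph $\sigma_\nbind$ is a \tsubstitution\ (its domain lies in $\Xnonbase$ and its images contain no non-variable \bterm) and $\sigma_\bind$ is a \bsubstitution; moreover $\dom(\sigma_\nbind) = \dom(\sigma)\cap\Xnonbase$ and $\dom(\sigma_\bind) = (\dom(\sigma)\cap\Xbase)\cup\setof{v_t}{t\in\mathcal{T}}$.

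Then I would check $\sigma = (\sigma_\nbind\sigma_\bind)|_{\dom(\sigma)}$ by cases on $x\in\dom(\sigma)$: if $x$ is a base variable then $x\sigma_\nbind=x$ and $x\sigma_\bind = x\sigma$; if $x$ is a non-base variable then the only variables occurring in $x\sigma_\nbind$ are the $v_t$, all of them in $\dom(\sigma_\bind)$, and applying $\sigma_\bind$ reinstalls each maximal \bterm in its original position, so $x\sigma_\nbind\sigma_\bind = x\sigma$. For the two side conditions, observe that $\dom(\sigma_\bind)\setminus\dom(\sigma)$ is exactly the set of $v_t$ that were chosen fresh and that $\dom(\sigma_\bind)\cap\dom(\sigma) = \dom(\sigma)\cap\Xbase$. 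For a fresh $v_t$ we have $v_t\sigma_\bind = t$, while $y\sigma_\bind = y\sigma \neq t$ for every $y\in\dom(\sigma)\cap\Xbase$ (otherwise $v_t$ would have been taken equal to $y$, not fresh), which gives the first bullet; the second bullet is immediate since fresh variables lie outside $\dom(\sigma)$ and were chosen pairwise distinct for distinct elements of $\mathcal{T}$.

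The step I expect to be delicate is the choice of the representatives $v_t$: it is essential for the first side condition to \emph{reuse} an original base variable $y$ whenever the peeled term equals $y\sigma$, and to use one fresh variable per value $t\in\mathcal{T}$ rather than one per occurrence, so that $\sigma_\nbind$ is well defined and $\sigma_\bind$ acts coherently across the various $x\sigma$. Everything else is a routine induction on term structure; the only additional care is that $\dom(\sigma)$ may be infinite, so one must make sure enough variables of each sort remain available for the fresh choices.
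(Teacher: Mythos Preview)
Your proposal is correct and follows essentially the same construction as the paper: collect the subterm-maximal \bterm s occurring in the images $x\sigma$, assign to each such term a base variable (reusing an existing $y\in\dom(\sigma)\cap\Xbase$ with $y\sigma=t$ when available, and choosing pairwise distinct fresh variables otherwise), and define $\sigma_\nbind$ and $\sigma_\bind$ accordingly. The only cosmetic difference is that the paper lets its set $E$ range over the images of \emph{all} variables in $\dom(\sigma)$ rather than just those in $\Xnonbase$, but since base variables' images are handled by your separate clause $y\sigma_\bind=y\sigma$, the two constructions coincide.
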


\newcommand{\yt}[1]{\nu(#1)}
\newcommand{\ytfun}{\nu}

\begin{proof}
  Let $E$ be the set of subterm-maximal {\bterm}s occurring in terms
  of the form $x\sigma$, with $x \in \dom(\sigma)$.  Let $\ytfun$ be a
  (partial) function mapping every term $t \in E \cap \Ran{\sigma}$ to
  an arbitrarily chosen variable $\yt{t}$ such that $\yt{t}\sigma =
  t$.  This function $\ytfun$ is extended into a total function on $E$
  by mapping all terms $t$ for which $\yt{t}$ is undefined to pairwise
  distinct new variables, not occurring in $\dom(\sigma)$. Note that
  $\ytfun$ is injective by construction.  The substitutions
  $\sigma_\bind$ and $\sigma_\nbind$ are defined as follows:
\begin{itemize}
\item{$\dom(\sigma_\nbind) \isdef \dom(\sigma) \cap \Xnonbase$ and
    $x\sigma_\nbind$ is the term obtained by replacing every
    occurrence of a term $t \in E$ in $x\sigma$ by $\yt{t}$;}
\item{$\dom(\sigma_\bind) \isdef [\dom(\sigma) \cap \Xbase] \cup \yt{E}$;
    if $x = \yt{t}$ for some term $t\in E$, then $x\sigma_\bind \isdef t$;
    otherwise, $x\sigma_\bind \isdef x\sigma$. Note that
    $\sigma_\bind$ is well-defined, since by definition if $\yt{t} =
    \yt{s}$ then $t=s$.}
\end{itemize}
By construction, $\sigma_\nbind$ is a {\tsubstitution} and
$\sigma_\bind$ is a {\bsubstitution}. Furthermore, since
$\yt{t}\sigma_{\bind} = t$, $x\sigma_{\nbind}\sigma_\bind = x\sigma$
for every $x \in \dom(\sigma) \cap \Xnonbase$. Similarly, for every $x
\in \dom(\sigma) \cap \Xbase$, $x\sigma_{\nbind}\sigma_\bind =
x\sigma_\bind = x\sigma$ and therefore $\sigma =
(\sigma_\nbind\sigma_\bind)|_{\dom(\sigma)}$.  Let $x \in
\dom(\sigma_\bind) \setminus \dom(\sigma)$. By definition of
$\sigma_\bind$, $x$ is of the form $\yt{t}$ for some $t \in E$, and
there is no variable $y \in \dom(\sigma)$ such that $y\sigma = t$,
since otherwise $\yt{t}$ would have been defined as $y$. Thus $\forall
y\in \dom(\sigma_\bind) \cap \dom(\sigma), x\sigma_\bind \not =
y\sigma = y\sigma_\bind$. Now if $y \in \dom(\sigma_\bind) \setminus
\dom(\sigma)$ and $x\sigma_\bind = y\sigma_\bind$, then $y$ is also of
the form $\yt{s}$ for some $s \in E$ and we have $x\sigma_\bind = t$ and
$y\sigma_\bind = s$, hence $t = s$ and $x = y$.
\end{proof}

\newcommand{\nonbasepart}[2]{#1|_{#2}}

\subsubsection{Partial Evaluations}

Given a set of clauses $S$ in $\hcombin{\basetheory}{\nonbasetheory}$
and an interpretation $I$ of $\basetheory$, we consider a set of
clauses $S'$ of $\nonbasetheory$ by selecting those ground instances
of clauses in $S$ whose {\bpart} evaluates to {\false} in $I$ and
adding their {\tpart} to $S'$. More formally:
\begin{definition}
For every clause $C \in \C_\bind$ and for every interpretation $I \in \I_\bind$, we denote by
$\falsifyingsubs{I}{C}$ the set of ground substitutions $\eta$ of domain $\var(C)$ such that
$I \not \models C\eta$.
Then, for every $S \in \C$ we define:
$$\nonbasepart{S}{I} \isdef \{ \projecnonbase{C}\eta \mid C \in S, \eta \in \falsifyingsubs{I}{\projecbase{C}} \}.$$
\end{definition}

\begin{example}
Let $S = \{ x \not \iseq a \vee P(x),\ y < 2 \vee Q(y,z) \}$ be a set of clauses in $\hcombin{\Tnat}{\Tfol}$, where $x,y,a$ are of sort $\snat$ and $z$ is a variable of a sort distinct from $\snat$.
Let $I$ be the interpretation of natural numbers such that $\interp{a}{I} = 1$. Then $\falsifyingsubs{I}{x \not \iseq a} = \{ x \mapsto 1 \}$ and  $\falsifyingsubs{I}{y < 2} = \{ y \mapsto k \mid k \in \mathbb{N}, k \geq 2 \}$. Therefore $\nonbasepart{S}{I} = \{ P(1) \} \cup \{ Q(k,z) \mid k \in \mathbb{N}, k \geq 2 \}$.
\end{example}



The following lemma shows that $\nonbasepart{S}{I}$ is
$\nonbasetheory$-unsatisfiable when $S$ is
$\hcombin{\basetheory}{\nonbasetheory}$-unsatisfiable.

\begin{lemma}
\label{lem:ri}
For every $\hcombin{\basetheory}{\nonbasetheory}$-unsatisfiable set of clauses $S \in \C$ and for every $I \in \I_\bind$, $\nonbasepart{S}{I}$ is $\nonbasetheory$-unsatisfiable.
\end{lemma}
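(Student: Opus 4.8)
The plan is to establish the contrapositive: assuming $\nonbasepart{S}{I}$ is $\nonbasetheory$-satisfiable for some $I \in \I_\bind$, I will produce a model of $S$ in $\hcombin{\basetheory}{\nonbasetheory}$. So fix such an $I$ together with a model $J \in \I_\nbind$ of $\nonbasepart{S}{I}$. The goal is to glue $I$ and $J$ into one interpretation $K$ which coincides with $I$ on the base sorts and on $\funbase$, coincides with $J$ on the \target\ sorts, and satisfies $\axof{\I_\bind} \cup \axof{\I_\nbind} = \axof{\I}$ together with $S$; then $K \in \I$ and $K \models S$, contradicting the unsatisfiability of $S$.

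The key preliminary observation is that $\nonbasepart{S}{I}$ depends on $I$ only through the partition of ground base terms induced by $I$: if $\projecnonbase{C}\eta \in \nonbasepart{S}{I}$ and $\eta'$ is a ground substitution of domain $\var(\projecbase{C})$ with $\valueof{x\eta}{I} = \valueof{x\eta'}{I}$ for all $x$, then $\valueof{\projecbase{C}\eta}{I} = \valueof{\projecbase{C}\eta'}{I}$ (since $\projecbase{C} \in \clbase$), so $\eta' \in \falsifyingsubs{I}{\projecbase{C}}$ and $\projecnonbase{C}\eta' \in \nonbasepart{S}{I}$. Using this, I would first replace $J$ by a suitable quotient so that, without loss of generality, $J$ agrees with $I$ on equality of ground base terms, i.e.\ $\valueof{t}{I} = \valueof{t'}{I}$ implies $\valueof{t}{J} = \valueof{t'}{J}$ for all ground base terms $t,t'$ of the same sort; the candidate is the quotient of $J$ by the congruence generated by the pairs $(\valueof{t}{J},\valueof{t'}{J})$ with $\valueof{t}{I} = \valueof{t'}{I}$. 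Once this compatibility holds, the assignment $\valueof{t}{I} \mapsto \valueof{t}{J}$ defines, for each base sort $\asort$, a well-defined map $\phi_\asort$ from $I$'s base domain to $J$'s base domain, and I set $\domofsort{\asort}{K} \isdef \domofsort{\asort}{I}$, $\interp{f}{K} \isdef \interp{f}{I}$ for base sorts and symbols, $\domofsort{\asort}{K} \isdef \domofsort{\asort}{J}$ for \target\ sorts, and $\interp{f}{K}(d_1,\ldots,d_n) \isdef \interp{f}{J}(d_1',\ldots,d_n')$ for $f \in \funnonbase$, where $d_i' \isdef \phi_{\asort_i}(d_i)$ if $\asort_i \in \base$ and $d_i' \isdef d_i$ otherwise. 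A straightforward induction on term structure then yields $\valueof{t}{K} = \valueof{t}{I}$ for every ground base term $t$ and $\valueof{t}{K} = \valueof{t}{J}$ for every ground \target\ term $t$.

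It then remains to verify $K \models \axof{\I_\bind}$, $K \models \axof{\I_\nbind}$, and $K \models S$. The first is immediate: all terms of a ground instance of a clause in $\axof{\I_\bind} \subseteq \clbase$ are ground base terms, which $K$ evaluates as $I$ does, and $I \models \axof{\I_\bind}$. For the second, every clause $D \in \axof{\I_\nbind} \subseteq \clnonbase$ has all its atoms equalities between \target\ terms, so in a ground instance $D\mu$ they become equalities between ground \target\ terms, which $K$ and $J$ evaluate identically by the induction; hence $\valueof{D\mu}{K} = \valueof{D\mu}{J} = \true$ since $J \models D$. For the third, take any ground instance $C\theta$ of a clause $C \in S$, so $C\theta = \projecbase{C}\theta \vee \projecnonbase{C}\theta$. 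If $\valueof{\projecbase{C}\theta}{I} = \true$ then $\valueof{\projecbase{C}\theta}{K} = \true$ and we are done; otherwise $\eta \isdef \theta|_{\var(\projecbase{C})}$ lies in $\falsifyingsubs{I}{\projecbase{C}}$, so $\projecnonbase{C}\eta \in \nonbasepart{S}{I}$ and hence $J \models \projecnonbase{C}\eta$, and since $\projecnonbase{C}\theta$ is a ground instance of $\projecnonbase{C}\eta$ (obtained by letting $\theta$ act on the remaining, \target\ variables; Proposition~\ref{prop:decsubs} can be used here to keep the base/\target\ split of $\theta$ explicit) we get $\valueof{\projecnonbase{C}\theta}{J} = \true$, hence $\valueof{\projecnonbase{C}\theta}{K} = \true$, hence $\valueof{C\theta}{K} = \true$.

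The main obstacle is the ``without loss of generality'' step of the second paragraph: showing that the proposed quotient of $J$ still satisfies $\nonbasepart{S}{I}$ and still belongs to $\I_\nbind$, i.e.\ that collapsing $I$-equal ground base terms does not falsify any disequation occurring in a clause of $\nonbasepart{S}{I}$ or of $\axof{\I_\nbind}$. This is precisely where the stability observation is exploited, together with the standing restriction to term-generated interpretations. Everything else — the definition of $K$, the two term inductions, and the case analysis for $K \models S$ — is routine once the compatibility of $J$ with $I$ on the base part is secured.
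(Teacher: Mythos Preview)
Your global plan---prove the contrapositive by gluing $I$ and $J$ into a single interpretation $K$, then verify $K \models \axof{\I_\bind} \cup \axof{\I_\nbind} \cup S$ by a case analysis on $\projecbase{C}\theta$---is exactly the paper's. The divergence is in how you make the map $\valueof{t}{I} \mapsto \valueof{t}{J}$ well-defined on base terms: you propose to \emph{quotient} $J$ by the congruence generated by all pairs $(\valueof{t}{J},\valueof{t'}{J})$ with $\valueof{t}{I}=\valueof{t'}{I}$, whereas the paper fixes a \emph{representative} $\nf{e}$ for each $I$-class $e$ and defines $\psi(e) \isdef \valueof{\nf{e}}{J}$ without touching $J$ at all.

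This is a genuine gap, not a cosmetic difference: the quotient need not stay in $\I_\nbind$. Take base constants $a,b$ with $\valueof{a}{I}=\valueof{b}{I}$, a target function $f:\base\to\nonbase$, target constants $c,d$, and let $\axof{\I_\nbind}=\{c\not\iseq d\}$. Choose $J$ with $\valueof{a}{J}\neq\valueof{b}{J}$, $\valueof{f(a)}{J}=\valueof{c}{J}$, $\valueof{f(b)}{J}=\valueof{d}{J}$, and $\valueof{c}{J}\neq\valueof{d}{J}$; then $J\in\I_\nbind$. But your congruence identifies $\valueof{a}{J}$ with $\valueof{b}{J}$, hence $\valueof{f(a)}{J}$ with $\valueof{f(b)}{J}$, hence $\valueof{c}{J}$ with $\valueof{d}{J}$: the quotient falsifies $c\not\iseq d$ and drops out of $\I_\nbind$. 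Your ``stability observation'' only controls clauses of the form $\projecnonbase{C}\eta$ in $\nonbasepart{S}{I}$; it says nothing about disequations in $\axof{\I_\nbind}$ (or indeed about ground target disequations in $\nonbasepart{S}{I}$ that happen to coincide with values reachable through $f$), so it cannot rescue the quotient.

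The paper's representative trick avoids the problem entirely: rather than collapsing $J$, it normalises every ground base subterm $t$ to the chosen representative $\nf{\valueof{t}{I}}$ before evaluating in $J$. One then shows, for expressions $\expr$ that are already normalised, that $\valueof{\expr}{J}=\psi(\valueof{\expr}{K})$; since $K$ agrees with $I$ on base symbols, normalisation is $K$-transparent, and the result transfers to all ground instances. Everything downstream in your outline (the two term inductions, the case split on $\projecbase{C}\theta$) then goes through verbatim. In short: keep your architecture, but replace ``quotient $J$'' by ``pick one representative per $I$-class and read $J$ only through these representatives''.
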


\begin{proof}
Let $\hcombin{\basetheory}{\nonbasetheory} =(\I,\C)$.
  Assume that $\nonbasepart{S}{I}$ is $\nonbasetheory$-satisfiable,
  i.e. that there exists an interpretation $J \in \I_\nbind$
  validating $\nonbasepart{S}{I}$. W.l.o.g. we assume that the domain
  of $J$ is disjoint from that of $I$.  We  construct an
  interpretation $K \in \I$ satisfying $S$, which will yield a
  contradiction since $S$ is $\hcombin{\basetheory}{\nonbasetheory}$-unsatisfiable by hypothesis.

  For all sort symbols $\asort \in \base$ and for all $e \in
  \domofsort{\asort}{I}$, we denote by $\nf{e}$ an arbitrarily chosen
  ground term in $\baseterm$ such that $\eqcl{\nf{e}}{I} = e$\footnote{$\nf{e}$ always exists since we restricted ourselves to interpretations such that, for every $\asort \in \allSorts$,
$\domofsort{\asort}{I} = \{ \valueof{t}{I} \mid t \in \gt{\asort} \}$.}.  If
  $\expr$ is a ground expression, we denote by $\nff{\expr}$ the
  expression obtained from $\expr$ by replacing every term $t$ by
  $\nf{\eqcl{t}{I}}$; by construction $\valueof{\expr}{I} =
  \valueof{\nff{\expr}}{I}$.  Let $\psi: \domof{I} \uplus \domof{J}
  \rightarrow \domof{J}$ be the function defined for every element
  $e \in \domof{I} \cup \domof{J}$ as follows:
\begin{itemize}
\item{if $e \in \domofsort{\asort}{I}$ then $\psi(e) \isdef  \valueof{\nf{e}}{J}$;}
\item{otherwise $\psi(e) \isdef e$.}
\end{itemize}
We define the interpretation $K$ by combining $I$ and $J$ as
follows:
\begin{itemize}
\item{$K$ coincides with $I$ on $\base$ and on every function symbol whose co-domain is in $\base$.}
    \item{$K$ coincides with $J$ on $\nonbase$.}
    \item{For all function symbols $f\in \funnonbase$ of arity $n$,
        $\interp{f}{K}(e_1,\ldots,e_n) \isdef
        \interp{f}{J}(\psi(e_1),\ldots,\psi(e_n))$. Note that
        $\interp{f}{K}$ is well-defined since by definition of $\psi$,
        if $e \in \domofsort{\asort}{K}$ then  $\psi(e) \in
        \domofsort{\asort}{J}$.}
\end{itemize}

Let $\expr$ be a ground expression (term, atom, literal, clause or
\gclause) such that $\nff{\expr} = \expr$. Assume that $\expr$ is a
ground instance of an expression occurring in a clause in
$\clnonbase$. We prove by structural induction on $\expr$ that
$\valueof{\expr}{J} = \psi(\valueof{\expr}{K})$.

\begin{itemize}
\item{If $\expr$ is a term of a sort in $\base$ then since $I$ and $K$
    coincide on $\base \cup \funbase$, we have $\valueof{\expr}{K} =
    \valueof{\expr}{I}$.  By hypothesis $\nff{\expr} = \expr$, thus
    $\nf{\valueof{\expr}{I}} = \expr$ and by definition of $\psi$,
    $\psi(\valueof{\expr}{K}) = \psi(\valueof{\expr}{I}) =
    \valueof{\nf{\expr}}{J} = \valueof{\expr}{J}$.  }
\item{If $\expr$ is of the form $f(t_1,\ldots,t_n)$ where $f \in
    \funnonbase$, then by definition $\valueof{\expr}{J} =
    \interp{f}{J}(\valueof{t_1}{J},\ldots,\valueof{t_n}{J})$ and by
    the induction hypothesis, $\valueof{t_i}{J} =
    \psi(\valueof{t_i}{K})$ for $i \in [1,n]$.  Again by
    definition, $\valueof{\expr}{K} =
    \interp{f}{J}(\psi(\valueof{t_1}{K}),\ldots,\psi(\valueof{t_n}{K}))
    = \interp{f}{J}(\valueof{t_1}{J},\ldots,\valueof{t_n}{J}) =
    \valueof{\expr}{J}$. Thus, since the domains of $I$ and $J$ are
    disjoint,  $\valueof{\expr}{J} \not \in
    \domofsort{\base}{I}$, hence $\psi(\valueof{\expr}{J}) =
    \valueof{\expr}{J}$.}
\item{If $\expr$ is an atom of the form $t_1 \iseq t_2$ then $t_1,t_2
    \not \in \base$. Indeed $\expr$ occurs in a ground instance of a
    clause $C$ occurring in $\clnonbase$ and by Definition
    \ref{def:clbase}, such clauses cannot contain equalities between
    {\bterm}s. Thus we have $\psi(\valueof{t_i}{K}) =
    \valueof{t_i}{K}$ (for $i =1,2$) and the proof is straightforward.}
\item{The proof is immediate if $\expr$ is a literal or a (possibly infinite) disjunction of literals.}
\end{itemize}

Since $J \models \nonbasepart{S}{I}$ and all specifications are
assumed to be {\gdefin} (see Definition \ref{def:gdefin}), we deduce
that $K \models \nonbasepart{S}{I} \cup \axof{\I_\nbind}$.  Indeed,
for the sake of contradiction, assume that there exists an \gclause\
$C \in \nonbasepart{S}{I} \cup \axof{\I_\nbind}$ and a ground
substitution $\theta$ of domain $\var(C)$ such that $K \not \models
C\theta$. Since $K \models t \iseq \nff{t}$ for every term $t$,
necessarily $K \not \models C\theta'$ where $x\theta' \isdef
\nff{x\theta}$.  But then $\nff{C\theta'} = C\theta'$ and since
$\valueof{\expr}{J} = \psi(\valueof{\expr}{K})$, we conclude that $J
\not \models C\theta'$ which is impossible since by hypothesis $J$ is
an $\nonbasetheory$-model of $\nonbasepart{S}{I}$.

We now prove that $K \models S$.  Let $C
\in S$ and $\eta$ be a ground substitution of domain
$\var(C)$. W.l.o.g. we assume that $\forall x \in \var(C), \nff{x\eta}
= x\eta$.  Let $\eta_\bind$ (resp. $\eta_\nbind$) be the restriction
of $\eta$ to the variables of a sort in $\base$ (resp. in $\nonbase$).
If $I \models \projecbase{C}\eta_\bind$ then $K \models
\projecbase{C}\eta_\bind$ because $K$ and $I$ coincide on $\base \cup
\funbase$, and consequently $K \models C\eta$ (since $C\eta \supseteq
\projecbase{C}\eta_\bind$).  If $I \not \models
\projecbase{C}\eta_\bind$ then $\eta_\bind \in \falsifyingsubs{I}{C}$,
hence $\projecnonbase{C}\eta_\bind \in \nonbasepart{S}{I}$.  Again $K
\models C\eta_\bind$ hence $K \models C\eta$; therefore  $K
\models S$.

Finally, since $K$ coincides with $I$ on $\base \cup \funbase$ we have
$K \models \axof{\I_\bind}$. This proves that $K$ is an
$\hcombin{\basetheory}{\nonbasetheory}$-model of $S$, which is
impossible.
\end{proof}

\subsubsection{Abstraction of Base Terms}

Lemma \ref{lem:ri} relates the
$\hcombin{\basetheory}{\nonbasetheory}$-unsatisfiability of a set of
clauses $S$ to the $\nonbasetheory$-unsatisfiability of sets of the
form $\nonbasepart{S}{I}$.  By definition, $\nonbasepart{S}{I}$ is of
the form $S'\sigma$, for some clause set $S' \in \C_\nbind$ and for
some ground \bsubstitution\ $\sigma$.  However, since neither
$\axof{\I_\nbind}$ nor $\C_{\nbind}$ contains symbols of a sort in
$\base$, the interpretation of the ground {\bterm}s of $S'$ in an
interpretation of $\I_{\nbind}$ is arbitrary: changing the values of
these terms does not affect the $\nonbasetheory$-satisfiability of
the formula.  Thus  the actual concrete values of the ground
{\bterm}s does not matter: what is important is only how these terms
compare to each other.

\begin{example}
  Assume that $\nonbasetheory = \Tfol$, $p: \snat \times \asort
  \rightarrow \sbool$, $a: \asort$, and let $S = \{ p(x,z), \neg
  p(y,a) \}$.  Consider $\sigma: \{ x \mapsto 0, y \mapsto 0 \}$, clearly,
  $S\sigma \models^{\nonbasetheory} \Box$.  But  also  $S\{ x
  \mapsto s(0), y \mapsto s(0) \} \models^{\nonbasetheory} \Box$ and
  more generally $S\{ x \mapsto t, y \mapsto t \}
  \models^{\nonbasetheory} \Box$.  On the other hand, $S\{ x \mapsto
  0, y \mapsto s(0) \} \not \models^{\nonbasetheory} \Box$ and more
  generally $S\{ x \mapsto t, y \mapsto t' \} \not
  \models^{\nonbasetheory} \Box$ if $t,t'$ are distinct integers.
\end{example}

Therefore, if $S\sigma \models^{\nonbasetheory} C\sigma$ for some
\bsubstitution\ $\sigma$ then actually $S\theta
\models^{\nonbasetheory} C\theta$, for every substitution $\theta$
such that $x\theta = y\theta \Leftrightarrow x\sigma=y\sigma$.  This
will be formali{\z}ed in the following definitions and lemma.  We first
introduce an unusual notion of semantic entailment. The intuition is
that variables in $\base$ are considered as ``rigid'' variables that
must be instantiated by \emph{arbitrary} ground terms:

\begin{definition}
Let $S \in \C_{\nbind}$.
We write $S \entailsrigid{\T} C$ iff for every ground substitution
of domain $\Xbase$,  $S\sigma \models^{\nonbasetheory} C\sigma$.
\end{definition}

\begin{example}
  Assume that $\nonbasetheory = \Tfol$.  Let $a: \asort$, $p: \snat
  \times \asort \rightarrow \sbool$ and $q: \snat \rightarrow \sbool$,
  where $\snat \in \base$, $\asort \in \nonbase$.  Let $S = \{ p(x,y),
  \neg p(u,a) \vee q(u) \}$, where $x,y,u$ are variables.  Then $S
  \entailsrigid{\Tfol} q(x)$, but $S \not \entailsrigid{\Tfol} q(0)$.
  Note that $x$ denotes the same variable in $S$ and $q(x)$ (the variables are \emph{not} renamed).
\end{example}

\newcommand{\purify}[1]{\langle#1\rangle}

\begin{definition}
  For every substitution $\sigma$ we denote by $\purify{\sigma}$ an
  arbitrarily chosen pure substitution such that $x\sigma = y\sigma
  \Rightarrow x\purify{\sigma} = y\purify{\sigma}$, for every $x,y \in
  \X$.
\end{definition}

Note that such a substitution always exists.  The next lemma can be
viewed as a generali{\z}ation lemma: it shows that the values of the
ground {\bterm}s can be abstracted into variables.

\begin{lemma}
\label{lem:gene}
Let $S \in \C_{\nbind}$ and $\sigma$ be a \bsubstitution\ such that
$\dom(\sigma) \subseteq \Xbase$.  If $S\sigma \models^{\nonbasetheory}
C\sigma$ then $S\purify{\sigma} \entailsrigid{\nonbasetheory}
C\purify{\sigma}$.
\end{lemma}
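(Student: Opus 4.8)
The plan is to unfold $\entailsrigid{\nonbasetheory}$ and reduce the statement to a monotonicity property of $\models^{\nonbasetheory}$ under coarsening the equivalence on base variables induced by a substitution. Fix an arbitrary ground substitution $\rho$ with $\dom(\rho) = \Xbase$ and let $\tau$ be the restriction of $\purify{\sigma}\rho$ to $\Xbase$. Choosing $\purify{\sigma}$ so that it fixes every variable not of a sort in $\base$, we get $(S\purify{\sigma})\rho = S\tau$ and $(C\purify{\sigma})\rho = C\tau$, and $\tau$ is a \bsubstitution. Moreover, since applying $\rho$ cannot separate two equal terms, the defining property of $\purify{\sigma}$ yields that $x\sigma = y\sigma$ implies $x\tau = y\tau$ for all $x,y \in \Xbase$ — that is, $\tau$ identifies at least as many base variables as $\sigma$ does. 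Hence it suffices to prove the following: if $\lambda$ and $\lambda'$ are substitutions with domain included in $\Xbase$ such that $x\lambda = y\lambda$ implies $x\lambda' = y\lambda'$ for all $x,y$ occurring in $S$ or $C$, then $S\lambda \models^{\nonbasetheory} C\lambda$ implies $S\lambda' \models^{\nonbasetheory} C\lambda'$. (We only need this for ground $\lambda, \lambda'$, which is the relevant case here; the general one is analogous, since for $\nonbasetheory$ the base variables behave like fresh constants.) Applying it with $\lambda = \sigma$ and $\lambda' = \tau$ then gives $S\tau \models^{\nonbasetheory} C\tau$, as wanted.

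To prove this monotonicity statement I would argue by contraposition, along the lines of the model construction in the proof of Lemma~\ref{lem:ri}. Suppose $J \in \I_\nbind$ satisfies $S\lambda'$ but falsifies $C\lambda'$, and let us build $K \in \I_\nbind$ satisfying $S\lambda$ but falsifying $C\lambda$, contradicting $S\lambda \models^{\nonbasetheory} C\lambda$. The crucial observation is that neither $\axof{\I_\nbind}$ nor any clause of $\nonbasetheory$ contains a function symbol with co-domain in $\base$ — the only {\bterm}s occurring in them are variables — so a $\nonbasetheory$-model only perceives the ground {\bterm}s of a clause set through the equalities they satisfy. Concretely, take $K$ to interpret every sort in $\base$ as the corresponding term algebra (so each ground \bterm\ denotes itself), to coincide with $J$ on $\nonbase$, and to put $\interp{f}{K}(e_1,\ldots,e_n) \isdef \interp{f}{J}(\psi(e_1),\ldots,\psi(e_n))$ for every $f \in \funnonbase$, where $\psi$ sends a ground \bterm\ $t$ to $\valueof{x\lambda'}{J}$ whenever $t = x\lambda$ for some base variable $x$ of $S$ or $C$ (this is well defined thanks to the hypothesis relating $\lambda$ and $\lambda'$), to $\valueof{t}{J}$ otherwise, and is the identity on every non-base element. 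A structural induction — the one in the proof of Lemma~\ref{lem:ri} — then shows that evaluating in $K$ any ground instance of a clause of $S$ or $C$ amounts to evaluating in $J$ the same clause with $\lambda$ replaced by $\lambda'$ (and with each \bterm\ $t$ produced by the instantiation of the non-base variables replaced by a ground \bterm\ of value $\psi(t)$, exactly as the normalisation $\nff{\expr}$ is used in that proof). Since the axioms of $\nonbasetheory$ are universally quantified over all their base-sorted positions, $J \models \axof{\I_\nbind}$ yields $K \models \axof{\I_\nbind}$, i.e.\ $K \in \I_\nbind$; then $J \models S\lambda'$ gives $K \models S\lambda$, and the ground instance of $C\lambda'$ falsified by $J$ yields a ground instance of $C\lambda$ falsified by $K$.

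The main obstacle is the same as in Lemma~\ref{lem:ri}: keeping track, in the structural induction, of the {\bterm}s that may be created when the non-base variables of $S$ and $C$ are instantiated — they must be pushed through $\psi$ consistently, which is why the ``corresponding'' clause on the $J$-side has its non-base part re-instantiated — together with the verification that $K$ really belongs to $\I_\nbind$, which hinges on base function symbols never occurring in $\axof{\I_\nbind}$. Everything else (the reduction of the first paragraph and the routine bookkeeping of the induction) is straightforward.
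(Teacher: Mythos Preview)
Your proposal is correct and follows essentially the same route as the paper. Your intermediate ``monotonicity'' statement is just a repackaging: once you unfold it with $\lambda=\sigma$ and $\lambda'=\purify{\sigma}\rho$, the contrapositive model construction you describe --- interpret the base sorts as the term algebra, keep $J$ on $\nonbase$, and reinterpret non-base function symbols through a transfer map $\psi$ sending $x\sigma$ to (the $J$-value of) $x\purify{\sigma}\rho$ --- is exactly the paper's construction of its interpretation $J$ via the syntactic map $\fff$ (your $\psi$ is the composition of the paper's $\fff$ with evaluation in the given model). Your pointer to Lemma~\ref{lem:ri} is slightly misleading, since there the new model merges a base model with a nesting model rather than using the term algebra on base sorts, but the construction you actually spell out is the right one and matches the paper's proof of the present lemma.
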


\begin{proof}
  Let $\theta$ be a substitution of domain $\Xbase$.  We assume that
  there exists an $I \in \I_\nbind$ such that $I \models
  S\purify{\sigma}\theta$ and $I \not \models C\purify{\sigma}\theta$,
  and we show that a contradiction can be derived.

  For every ground term $t$, we denote by $\fff(t)$ the ground term
  obtained from $t$ by replacing every ground subterm of the form
  $x\sigma$ by $x\purify{\sigma}\theta$.  $\fff$ is well-defined:
  indeed, if $x\sigma = y\sigma$, then by definition of
  $\purify{\sigma}$, $x\purify{\sigma} = y\purify{\sigma}$ thus
  $x\purify{\sigma}\theta = y\purify{\sigma}\theta$.  Let $J$ be the
  interpretation defined as follows\footnote{Intuitively, $J$ interprets every
  \bterm\ as itself and coincides with $I$ on {\tterm}s.}:
\begin{itemize}
\item{If $\asort \in \base$ then $\domofsort{\asort}{J} \isdef \gt{\asort}$.}
\item{If $f$ is a symbol of rank $\asort_1 \times\ldots\times\asort_n
    \rightarrow \asort$ where $\asort_1,\ldots,\asort_n,\asort \in
    \base$ then $\inter{f}{J}(t_1,\ldots,t_n) \isdef
    f(t_1,\ldots,t_n)$.}
\item{If $f$ is a symbol of rank $\asort_1 \times\ldots\times\asort_n
    \rightarrow \asort$ where $\asort \not \in \base$ then
    $\inter{f}{J}(t_1,\ldots,t_n) \isdef
    \inter{f}{I}(t_1',\ldots,t_n')$ where for every $i \in [1,n]$,
    $\asort_i \in \nonbase \Rightarrow t_i' = \valueof{t_i}{J}$ and
    $\asort_i \in \base \Rightarrow t_i' = \valueof{\fff(t_i)}{I}$.}
\end{itemize}
By construction, $\valueof{s}{J} = s$ for every ground \bterm\ $s$; we
prove that for every ground \tterm\ $t$, $\valueof{t}{J} =
\valueof{\fff(t)}{I}$, by induction on $t$. If $t =
f(t_1,\ldots,t_n)$, then $\valueof{t}{J} =
\inter{f}{I}(t_1',\ldots,t_n')$ where for every $i \in [1,n]$,
$\asort_i \in \nonbase \Rightarrow t_i' = \valueof{t_i}{J}$ and
$\asort_i \in \base \Rightarrow t_i' =\valueof{\fff(t_i)}{I}$.  By the
induction hypothesis, $\asort_i \in \nonbase \Rightarrow t_i' =
\valueof{\fff(t_i)}{I}$.  Thus $\valueof{t}{J} =
\inter{f}{I}(\valueof{\fff(t_1)}{I},\ldots, \valueof{\fff(t_n)}{I}) =
\valueof{\fff(t)}{I}$.

Now let $\sigma'$ be a ground substitution with a domain in
$\Xnonbase$, and let $\theta' \isdef \fff\circ\sigma'$.  We prove that for every expression $\expr$ occurring
in $S \cup \{ C \}$ that is not a \bterm,
$\valueof{\expr\sigma\sigma'}{J} =
\valueof{\expr\purify{\sigma}\theta\theta'}{I}$.

\begin{itemize}
\item{Assume that $\expr$ is a variable $x$ in $\Xnonbase$.  Then
    $\valueof{\expr\sigma\sigma'}{J} = \valueof{x\sigma'}{J}$, and by the
    previous relation we get $\valueof{\expr\sigma\sigma'}{J} =
    \valueof{\fff(x\sigma')}{I} = \valueof{x\theta'}{I} =
    \valueof{\expr\purify{\sigma}\theta\theta'}{I}$.}
\item{Assume that $\expr$ is a \target\ term of the form
    $f(t_1,\ldots,t_n)$.  Then by the result above,
    $\valueof{\expr\sigma\sigma'}{J} =
    \valueof{\fff(\expr\sigma\sigma')}{I}$.  By definition of $\fff$
    we have $\fff(\expr\sigma\sigma') =
    f(\fff(t_1\sigma\sigma'),\ldots,\fff(t_n\sigma\sigma'))$,
    therefore, $\valueof{\expr\sigma\sigma'}{J} =
    \inter{f}{I}(\valueof{\fff(t_1\sigma\sigma')}{I},\ldots,
    \valueof{\fff(t_n\sigma\sigma')}{I})$.  For $i \in [1,n]$, if
    $t_i$ is a {\tterm} then by the result above
    $\valueof{\fff(t_i\sigma\sigma')}{I} =
    \valueof{t_i\sigma\sigma'}{J}$ and by the induction hypothesis,
    $\valueof{\fff(t_i\sigma\sigma')}{I} =
    \valueof{t_i\purify{\sigma}\theta\theta'}{I}$. Otherwise, $t_i$ is
    a {\bterm}, and must necessarily be a variable, thus
    $\fff(t_i\sigma) = t_i\purify{\sigma}\theta$.  Therefore
    $\fff(t_i\sigma\sigma') = \fff(t_i\sigma) =
    t_i\purify{\sigma}\theta = t_i\purify{\sigma}\theta\theta'$.
    Therefore $\valueof{\expr\sigma\sigma'}{J} =
    \inter{f}{I}(\valueof{t_1\purify{\sigma}\theta\theta'}{I}, \ldots,
    \valueof{t_n\purify{\sigma}\theta\theta'}{I}) =
    \valueof{\expr\purify{\sigma}\theta\theta'}{I}$.  }
\item{The proof is similar if $\expr$ is of the form $t \iseq s$, $t \not \iseq s$ of $\bigvee_{i=1}^n l_i$.}
\end{itemize}

We thus conclude that for every clause $D \in S \cup \{ C \} \cup
\axof{\I}$, $J \models D\sigma\sigma'$ iff $I \models
D\purify{\sigma}\theta\theta'$.  Since $I \models
S\purify{\sigma}\theta\cup \axof{\I_\nbind}$, we deduce that $J
\models S\sigma\cup \axof{\I_\nbind}$, which proves that $J \in
\I_\nbind$.  Since $I \not \models C\purify{\sigma}\theta$ we have $J
\not \models C\sigma$, which is impossible because $J \in \I_\nbind$
and $S\sigma \models^{\nonbasetheory} C\sigma$.
\end{proof}

\newcommand{\extensible}{{$0$-extensible}}
\newcommand{\wcompact}{weakly compact}

\newcommand{\setofsorts}{\overline{\asort}}

\subsubsection{Completeness of $\Theta_{\bind}$ for \Gclauses}

In this section, we prove that any procedure that is \sadequateone\ is
also complete for some classes of sets of \emph{possibly infinite}
\gclauses\ -- this is of course not the case in general.  We first
notice that the notation $\deriv{S}$ of Definition \ref{def:deriv} can
be extended to \gclauses, by allowing infinite disjunctions:
\begin{definition}
  Given a set of clauses, $S$, we denote by $\derivinf{S}$ the set of
  \gclauses\ of the form $\{ C_i\sigma \mid i \in \mathbb{N}, C_i \in
  S, \text{$\sigma_i$ is a pure substitution} \}$.
\end{definition}
The notation $\instB{S}{G}$ also extends to \gclauses:
$\instB{S}{\abs}$ is the set of clauses $C\sigma$ such that $C \in S$
and $\sigma$ maps every variable in $C$ to a term in $\abs$.

\begin{proposition}
\label{prop:finiteb}
Let $S$ be a finite set of clauses and  $\abs$ be a finite set of terms.
Then $\instB{\derivinf{S}}{\abs}$ is a finite set of clauses.
\end{proposition}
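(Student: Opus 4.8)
The plan is to derive everything from a single observation: when $S$ and $\abs$ are finite, the set $\instB{S}{\abs}$ is already a \emph{finite} set of ordinary clauses. First I would establish this: $S$ contains finitely many distinct clauses, each of them has only finitely many variables, and $\abs$ is finite, so for each $C \in S$ there are only finitely many sort-respecting substitutions from $\Var{C}$ into $\abs$; hence $\instB{S}{\abs}$ is finite. In particular the set $L$ of all literals occurring in some clause of $\instB{S}{\abs}$ is a finite set of literals.

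Next I would take an arbitrary element $D \in \instB{\derivinf{S}}{\abs}$ and unfold the definitions: $D = C\sigma$, where $C \in \derivinf{S}$ is an \gclause\ of the form $C = \{ C_i\sigma_i \mid i \in \mathbb{N} \}$ with each $C_i \in S$ and each $\sigma_i$ a pure substitution, and $\sigma$ maps every variable occurring in $C$ to a term of the same sort in $\abs$. The key step is to show that each instantiated disjunct $C_i\sigma_i\sigma$ belongs to $\instB{S}{\abs}$: since the literals of $C_i\sigma_i$ occur in $C$ we have $\Var{C_i\sigma_i} \subseteq \Var{C}$, so $\sigma$ is defined on all these variables and sends them into $\abs$; and because $\sigma_i$ is pure, each variable of $C_i$ is first sent by $\sigma_i$ to a variable of the same sort occurring in $C_i\sigma_i \subseteq C$, hence then by $\sigma$ into $\abs$ with matching sort. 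Thus $C_i\sigma_i\sigma$ is an instance of $C_i \in S$ by a sort-respecting substitution with values in $\abs$, i.e. $C_i\sigma_i\sigma \in \instB{S}{\abs}$.

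Finally I would conclude the finiteness. Viewing clauses and \gclauses\ as sets of literals, $D = C\sigma = \bigcup_{i \in \mathbb{N}} C_i\sigma_i\sigma$, and each $C_i\sigma_i\sigma$ lies in the finite set $\instB{S}{\abs}$, so $D \subseteq L$; hence $D$ is in fact a finite clause, and it is one of the finitely many subsets of the fixed finite set $L$. Therefore $\instB{\derivinf{S}}{\abs}$ is finite. I do not expect a real obstacle here; the only point requiring care is the bookkeeping around the infinite index set and the possibly infinite variable set of $C$ — one must notice that, although $C$ may mention infinitely many variables and have infinitely many disjuncts, every disjunct collapses after instantiation into the same finite pool $\instB{S}{\abs}$, which is precisely what kills the apparent infinity.
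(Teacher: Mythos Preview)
Your proof is correct and follows essentially the same approach as the paper: both arguments show that every literal occurring in a clause of $\instB{\derivinf{S}}{\abs}$ lies in a fixed finite set (you phrase it as the literal set of $\instB{S}{\abs}$, the paper phrases it as the finitely many instances $L\sigma\theta$ with $L$ a literal from $S$), and then conclude that each such clause is finite and that there are only finitely many of them. Your version is slightly more detailed in the bookkeeping, but the idea is identical.
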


\begin{proof}
  By definition, any literal occurring in $\derivinf{S}$ is of the form
  $L\sigma$ where $L$ is a literal occurring in a clause $C \in S$ and
  $\sigma$ is a pure substitution.  Thus any literal occurring in
  $\instB{\derivinf{S}}{\abs}$ is of the form $L\sigma\theta$ where
  $L$ is literal occurring in a clause in $S$, $\sigma$ is pure and
  $\theta$ maps every variable to a term in $\abs$.  Obviously, since
  $\abs$ and $S$ are finite, there are finitely many literals of this
  form. Hence all the \gclauses\ in $\instB{\derivinf{S}}{\abs}$ are
  actually finite, and there are only finitely many possible clauses.
\end{proof}

\begin{lemma}
\label{lem:lememb}
Let $S$ be a set of clauses and $S'$ a set of {\gclauses} with
$S' \subseteq \derivinf{S}$.  If $\abs$ if a finite set of terms, then
there exists a set of clauses $S'' \emb S'$ such that
$\instB{S''}{\abs} = \instB{S'}{\abs}$.
\end{lemma}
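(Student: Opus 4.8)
The plan is to reduce the claim to the case of a single $\omega$-clause and then, for that $\omega$-clause, to exploit the finiteness of $\abs$ in order to replace it by a family of its own finite sub-clauses. First I would note that both $\emb$ and the operator $T \mapsto \instB{T}{\abs}$ are compatible with unions of the set argument: $\instB{\bigcup_k T_k}{\abs} = \bigcup_k \instB{T_k}{\abs}$ directly from Definition~\ref{def:inst} (extended to $\omega$-clauses), and $\bigcup_k S_k'' \emb S'$ whenever every clause of $S'$ belongs to some $T_j$ with $S_j'' \emb T_j$. Taking the $T_j$ to be the singletons $\{D'\}$ for $D' \in S'$, it suffices to exhibit, for each such $D'$, a set $S''_{D'}$ of clauses with $S''_{D'} \emb \{D'\}$ and $\instB{S''_{D'}}{\abs} = \instB{\{D'\}}{\abs}$, and then put $S'' \isdef \bigcup_{D' \in S'} S''_{D'}$.

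Next I would fix $D' \in S' \subseteq \derivinf{S}$ and write $D' = \bigcup_{i \in \N} C_i\rho_i$ with $C_i \in S$ and each $\rho_i$ pure; thus every $C_i\rho_i$ is a finite sub-clause of $D'$ and every literal of $D'$ occurs in one of them. For a substitution $\sigma$ of domain $\var(D')$ and codomain in $\abs$ one has $D'\sigma = \bigcup_i C_i\rho_i\sigma$, a union of instances of clauses of $S$ by substitutions into the finite set $\abs$; arguing as in the proof of Proposition~\ref{prop:finiteb} (up to renaming the $C_i\rho_i$ form a finite set, each with only finitely many $\abs$-instances), only finitely many literals arise, so $D'\sigma$ is in fact a \emph{finite} clause. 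Hence for every such $\sigma$ there is a finite $F_\sigma \subseteq D'$ with $F_\sigma\sigma = D'\sigma$. I would then take $S''_{D'} \isdef \{\, F_\sigma \mid \sigma \text{ maps } \var(D') \text{ into } \abs \,\}$ (and $S''_{D'} \isdef \{D'\}$ in the degenerate case $\var(D') = \emptyset$, where $D'$ is already a finite ground clause). Every element of $S''_{D'}$ is a finite sub-clause of $D'$ and $S''_{D'} \neq \emptyset$, so $S''_{D'} \emb \{D'\}$; and since $D'\sigma = F_\sigma(\sigma|_{\var(F_\sigma)})$, the inclusion $\instB{\{D'\}}{\abs} \subseteq \instB{S''_{D'}}{\abs}$ holds.

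The hard part will be the reverse inclusion $\instB{S''_{D'}}{\abs} \subseteq \instB{\{D'\}}{\abs}$: one must check that an \emph{arbitrary} $\abs$-instance $F_\sigma\tau$ of a chosen finite sub-clause is still an $\abs$-instance of $D'$ itself. The idea is to extend $\tau$ to a substitution $\rho$ of domain $\var(D')$ with codomain in $\abs$ that reproduces, along $\tau$, the identifications that $\sigma$ performs on $D'$ — recall that every literal of $D'$ is $\sigma$-equal to a literal of $F_\sigma$ — so that $D'\rho = F_\sigma\tau$. To make this work one has to choose the sub-clauses $F_\sigma$ with some care, possibly enlarging them so that they contain enough renamed copies of each of the finitely many ``patterns'' $C_i\rho_i$ occurring in $D'$; here the finiteness of $\abs$ is used once more, to bound how many copies suffice. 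Granting this, the distributivity observations of the first paragraph and the easy inclusion above finish the proof.
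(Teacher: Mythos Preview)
Your overall plan (reduce to a single $\omega$-clause, exploit finiteness of $G$ to extract finite sub-clauses) is sound and parallels the paper's strategy, and you correctly isolate the reverse inclusion $\instB{S''_{D'}}{G} \subseteq \instB{\{D'\}}{G}$ as the crux. However, your proposed fix --- ``enlarging $F_\sigma$ to contain enough renamed copies'' --- is not a proof: you do not say how many copies, which ones, or why the resulting $F_\sigma$ admits, for \emph{every} $\tau$ into $G$, an extension $\rho$ with $D'\rho = F_\sigma\tau$. A naive choice genuinely fails. With $S = \{p(u) \vee q(u)\}$, $D' = \{p(x_i), q(x_i) \mid i \in \N\}$ and $G = \{a,b\}$, the set $F_\sigma = \{p(x_1), q(x_2)\}$ satisfies $F_\sigma\sigma = D'\sigma$ for $\sigma\colon x_i \mapsto a$, yet taking $\tau\colon x_1 \mapsto a,\ x_2 \mapsto b$ gives $F_\sigma\tau = \{p(a), q(b)\}$, which is \emph{not} of the form $D'\rho$ for any $\rho$ into $G$ (any such $\rho$ forces $q(x_i\rho) = q(a)$ whenever $p(x_i\rho) = p(a)$). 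So the step you flag as ``the hard part'' really is hard, and your sketch does not close it.

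The paper resolves this with a different, sharper idea. Instead of trying to extend each $\tau$ to a $\rho$, it chooses the finite sub-clause $D_C \subseteq C'$ so that there exists a \emph{pure} substitution $\eta_C$ with $C'\eta_C = D_C$. Concretely, for each literal $L\gamma$ of $C'$ one selects a representative $L\gamma' \in D_C$ with $\gamma\theta = \gamma'\theta$ (finitely many representatives suffice, since there are only finitely many pairs $(L,\gamma\theta)$), and $\eta_C$ sends each variable $x\gamma \in \var(C' \setminus D_C)$ to $x\gamma'$. Once $C'\eta_C = D_C$ holds, the reverse inclusion is immediate: any $D_C\tau$ with $\tau$ into $G$ equals $C'(\eta_C\tau)$, and $\eta_C\tau$ maps every variable of $C'$ into $G$ because $\eta_C$ is pure. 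This algebraic device --- collapsing the $\omega$-clause onto a finite sub-clause via a pure substitution, rather than lifting each $\tau$ individually --- is the missing ingredient in your argument.
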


\begin{proof}
  Let $C$ be a clause in
  $\instB{S'}{\abs}$; by Proposition \ref{prop:finiteb}, $C$ is
  finite.  By definition there exists an \gclause\ $C' \in S'$ such
  that $C = C'\theta$, where $\theta$ is a substitution mapping all
  the variables in $\Var{C'}$ to a term in $\abs$.
  Every literal in $C'$ is of the form $L\gamma$, where literal $L$
  occurs in $S$ and $\gamma$ is a pure substitution of $\var(L)$.
  Since $S$ and $G$ are finite, there is a finite number of possible
  pairs $(L,\gamma\theta)$. Thus there exists a finite subset $D_C
  \subseteq C'$ such that for every literal $L\gamma$ occurring in
  $C'$, there exists a literal $L\gamma' \in D_C$ with $\gamma\theta =
  \gamma'\theta$.

  Every variable occurring in a literal $L\gamma$ of $C'$ is of the
  form $x\gamma$, where $x \in \Var{L}$.  Let $\eta_C$ be the
  substitution mapping every variable $x\gamma \in \Var{C' \setminus
    D_C}$ to $x\gamma'$.  Then for every literal $L\gamma \in C'$, we
  have $L\gamma\eta_C = L\gamma' \in D_C$.  Thus $C'\eta_C = D_C$;
  furthermore, $\eta_C$ is pure and $D_C\eta_C = D_C$.

  We define $S'' = \{ D_C \mid C \in\instB{S'}{\abs} \}$; obviously
  $S'' \emb S'$ and by definition $\instB{S''}{\abs} \supseteq
  \instB{S'}{\abs}$.  Conversely, let $E$ be a clause in
  $\instB{S''}{\abs}$, $E$ is necessarily of the form $D_C\theta$
  where $C \in \instB{S'}{\abs}$ and $\theta$ maps every variable to a
  term in $\abs$. But then $E$ is of the form $C'\eta_C\theta$, where
  $C'\in S'$, and $\eta_C\theta$ is a substitution mapping every
  variable in $C'$ to a term in $\abs$; thus $E$ must occur in
  $\instB{S'}{\abs}$.
\end{proof}

The next lemma proves  the completeness result for {\gclauses}:

\begin{lemma}
\label{lem:compgen}
Let $\Theta$ be a \sadequateone\ instantiation procedure and $S$ be a
set of clauses.  If $S' \subseteq \derivinf{S}$ then $S'$ and $\instB{S'}{\bs{S}}$ are
  $\basetheory$-equisatisfiable.
\end{lemma}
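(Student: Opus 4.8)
The plan is to establish the two directions of the equisatisfiability separately; the forward one is immediate and the backward one is where the three conditions of \sadequateone\ and Lemma~\ref{lem:lememb} come into play. For the easy direction, suppose $S'$ is $\basetheory$-satisfiable, say $I\models S'$ with $I\in\I_\bind$. Since $\bs{S}$ is a finite set of ground terms and $\sigma$ instantiates all variables of $C$, every clause of $\instB{S'}{\bs{S}}$ is a ground instance $C\sigma$ of some \gclause\ $C\in S'$; the semantics of \gclauses\ then gives $\valueof{C\sigma}{I}=\true$, so $I\models\instB{S'}{\bs{S}}$.

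For the converse, assume $\instB{S'}{\bs{S}}$ is $\basetheory$-satisfiable. First I would apply Lemma~\ref{lem:lememb} with the finite set of terms $\abs:=\bs{S}$ (using $S'\subseteq\derivinf{S}$) to obtain a set of finite clauses $S''\emb S'$ with $\instB{S''}{\bs{S}}=\instB{S'}{\bs{S}}$. The next step is to check, by tracing the construction in the proof of Lemma~\ref{lem:lememb}, that every clause of $S''$ is a finite disjunction of pure instances of clauses of $S$, i.e.\ $S''\subseteq\deriv{S}$. Then the monotonicity of $\bs{\cdot}$ (Condition~\ref{theta1s:mono} of Definition~\ref{def:theta1s}) and Condition~\ref{theta1s:disj} applied to $S$ yield $\bs{S''}\subseteq\bs{\deriv{S}}\subseteq\bs{S}$. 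By Condition~\ref{theta1s:bs}, $\Theta(S'')=\instB{S''}{\bs{S''}}$, and since $\bs{S''}\subseteq\bs{S}$, Proposition~\ref{prop:instmono} gives $\Theta(S'')=\instB{S''}{\bs{S''}}\subseteq\instB{S''}{\bs{S}}=\instB{S'}{\bs{S}}$. As the right-hand side is $\basetheory$-satisfiable and satisfiability is inherited by subsets, $\Theta(S'')$ is $\basetheory$-satisfiable; by Condition~\ref{theta1s:bs} again, $\Theta(S'')$ and $S''$ are $\basetheory$-equisatisfiable, so $S''$ is $\basetheory$-satisfiable. Finally, $S''\emb S'$ and Proposition~\ref{prop:logc} give that $S'$ is a logical consequence of $S''$, so any $\basetheory$-model of $S''$ is a model of $S'$; hence $S'$ is $\basetheory$-satisfiable.

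The main obstacle I expect is the bookkeeping behind $S''\subseteq\deriv{S}$: one has to unfold how $S''$ is produced in Lemma~\ref{lem:lememb} (as $\{D_C\mid C\in\instB{S'}{\bs{S}}\}$ with each $D_C=C'\eta_C$ a finite, pure instance of a finite subclause of some $C'\in S'\subseteq\derivinf{S}$) and argue that collapsing the infinite disjunction $C'=\bigvee_i C_i\sigma_i$ under the pure substitution $\eta_C$ leaves a finite disjunction $\bigvee_{i\in F}C_i\sigma_i\eta_C$ of pure instances of clauses $C_i\in S$, which is precisely an element of $\deriv{S}$. A secondary point to handle carefully is that all objects involved lie in the domain on which $\Theta$ and $\bs{\cdot}$ are defined (in particular $\deriv{S}$ and its subsets, as implicitly required by Condition~\ref{theta1s:disj}), and that the appeals to Lemma~\ref{lem:lememb} and Proposition~\ref{prop:finiteb} are legitimate, which relies on $S$ and $\bs{S}$ being finite.
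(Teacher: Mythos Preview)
Your proposal is correct and follows the same route as the paper's proof: both invoke Lemma~\ref{lem:lememb} to pass from $S'$ to a set $S''$ of finite clauses with $\instB{S''}{\bs{S}}=\instB{S'}{\bs{S}}$, then use the completeness of $\Theta$ on $S''$ and finish via $S''\emb S'$ and Proposition~\ref{prop:logc}. The paper's write-up is terser: it jumps directly from ``$\instB{S''}{\bs{S}}$ satisfiable'' to ``$S''$ satisfiable'' by citing Condition~\ref{theta1s:bs}, whereas you correctly spell out the intermediate steps this requires---namely that $S''\subseteq\deriv{S}$ (which does follow from the construction in Lemma~\ref{lem:lememb}, since $D_C=C'\eta_C$ with $C'\in\derivinf{S}$, $\eta_C$ pure, and $D_C$ finite), whence $\bs{S''}\subseteq\bs{\deriv{S}}\subseteq\bs{S}$ by Conditions~\ref{theta1s:mono} and~\ref{theta1s:disj}, and then $\Theta(S'')\subseteq\instB{S''}{\bs{S}}$ by Proposition~\ref{prop:instmono}. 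Your version is thus a more careful rendition of the same argument, and the ``obstacles'' you flag (the bookkeeping behind $S''\subseteq\deriv{S}$ and the finiteness of $S$ and $\bs{S}$) are exactly the points the paper leaves implicit.
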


Note that the clauses in $S$ are finite, but those in $S'$ may be
infinite.

\begin{proof}
  $\instB{S'}{\bs{S}}$ is a logical consequence of $S'$, thus if $S'$
  is satisfiable then so is $\instB{S'}{\bs{S}}$; we now prove the
  converse.  Let $I$ be an interpretation validating
  $\instB{S'}{\bs{S}}$.  By Lemma \ref{lem:lememb}, there exists a set
  of clauses $S''$ such that $S'' \emb S'$ and $\instB{S'}{\bs{S}} =
  \instB{S''}{\bs{S}}$.  Since $I \models \instB{S'}{\bs{S}}$, we
  deduce that $\instB{S''}{\bs{S}}$ is satisfiable, hence (since by
  Condition \ref{theta1s:bs} in Definition \ref{def:theta1s}, $\Theta$
  is complete\footnote{Recall that $S''$ is a set of \emph{finite}
    clauses.}) so is $S''$. But $S'' \emb S'$ therefore by Proposition
  \ref{prop:logc}, $S'$ is satisfiable.
\end{proof}


\newcommand{\rri}{R_I}
\newcommand{\aai}{A_I}
\newcommand{\api}{B_I}
\newcommand{\eei}{E_I}
\newcommand{\uu}{U}

\subsubsection{Main Proof}\label{sec:mainpf}

We are now in the position to give the proof of the main theorem.

\begin{proof}[of Theorem \ref{theo:comp}]
  Let $\Theta \isdef \hcombinproc{\Theta_\bind}{\Theta_\nbind}$ and
  let $S$ be an unsatisfiable clause set in $\C$. We prove that
  $\Theta(S)$ is also unsatisfiable.

  Let $I \in \I_\bind$, by Lemma \ref{lem:ri}, the set
  $\nonbasepart{S}{I} = \{ \projecnonbase{C}\eta \mid C \in S, \eta
  \in \falsifyingsubs{I}{C} \}$ is $\nonbasetheory$-unsatisfiable, and
  by completeness of $\Theta_\nbind$, so is
  $\Theta_\nbind(\nonbasepart{S}{I})$. We define
  \[\aai\ =\ \Setof{C\eta\theta}{C\in S,\, \projecnonbase{C}\eta\theta
    \in \Theta_\nbind(\nonbasepart{S}{I})}.\] This set may be
  infinite, since no assumption was made on the decidability of
  $\nonbasetheory$. Every clause in $\aai$ is of the form
  $C\eta\theta$ where $I\not\models
  \projecbase{C}\eta$,\footnote{Recall that
    $\projecbase{C}\eta = \projecbase{C}\eta\theta$, since $\eta$ is a
    ground {\bsubstitution}} and by Proposition \ref{prop:decsubs},
  $C\eta\theta = C\sigma\sigma'$, where $\sigma$ is a {\tsubstitution}
  and $\sigma'$ is a {\bsubstitution}. In particular, since
  $\dom(\sigma) \subseteq \nonbasevar$, $\projecbase{C}\sigma\sigma' =
  \projecbase{C}\sigma'$ and $I\not\models \projecbase{C}\sigma'$.

  By construction, the set
  $\setof{\projecnonbase{C}\sigma\sigma'}{(\projecnonbase{C} \vee
    \projecbase{C})\sigma\sigma' \in \aai}$ is
  $\nonbasetheory$-unsatisfiable. Thus for every model $J$
   of $\aai$,
   there exists a clause $(\projecnonbase{C} \vee
    \projecbase{C})\sigma\sigma' \in \aai$ such that
    $J \not \models \projecnonbase{C}\sigma\sigma'$, hence
    $J \models \projecbase{C}\sigma\sigma'$ (since $J \models \aai$ we have $J \models (\projecnonbase{C} \vee
    \projecbase{C})\sigma\sigma'$). Since the $\projecbase{C}$
    cannot contain {\tvariable}s, we have $\projecbase{C}\sigma\sigma' = \projecbase{C}\sigma'$.
    Hence $\aai\models_{\nonbasetheory}
  \bigvee_{C\sigma\sigma'\in \aai} \projecbase{C}\sigma'$.
  We let $T = \projecbase{S}$ and define:
  \[\api\ =\ \Setof{C\sigma\purify{\sigma'}}{C\sigma\sigma' \in \aai}
  \textrm{ and } \eei\ =\ \bigvee_{C\sigma\sigma'\in
    \aai}\projecbase{C}\purify{\sigma'}.\] Note that since $\aai$ may
  be infinite, $\eei$ is an {\gclause} that belongs to
  $\derivinf{T}$. Lemma \ref{lem:gene} guarantees that $\api
  \entailsrigid{\nonbasetheory} \eei$; thus by definition, for all
  sets of ground {\bterm}s $\BB$, $\instB{\api}{\BB}
  \models_{\nonbasetheory} \instB{\eei}{\BB}$. This is in particular
  the case for $\BB = \BB_T$.

  Let $U = \setof{\eei}{I\in \I_\bind}$; by construction, for all $I
  \in \I_\bind$, $I\not\models U$; hence $U$ is
  $\basetheory$-unsatisfiable and since  $U \subseteq \derivinf{T}$, by
  Lemma \ref{lem:compgen}, $\instB{U}{\bs{T}}$ is also
  $\basetheory$-unsatisfiable. We have shown that $\instB{\api}{\BB}
  \models_{\nonbasetheory} \instB{\eei}{\BB}$. This, together with the fact that
  $\instB{U}{\bs{T}} = \bigcup_{I\in \I_\bind}\instB{\eei}{\bs{T}}$
  permits to deduce that $\bigcup_{I\in \I_\bind}\instB{\api}{\bs{T}}
  \models_{\nonbasetheory} \instB{U}{\bs{T}}$. Since $\instB{U}{\bs{T}}$ is $\basetheory$-unsatisfiable (hence also $\hcombin{\basetheory}{\nonbasetheory}$-unsatisfiable), $\bigcup_{I\in
    \I_\bind}\instB{\api}{\bs{T}}$ is
  $\hcombin{\basetheory}{\nonbasetheory}$-unsatisfiable.

  There remains to prove that $\bigcup_{I\in
    \I_\bind}\instB{\api}{\bs{T}} \subseteq \Theta(S)$ to obtain the
  result. Consider the function $\am$ that maps every term of a sort
  $\asort \in \base$ to $\speccst_\asort$; it is clear that
  $\am(\nonbasepart{S}{I}) \subseteq \projecnonbase{S}\sutop$. In
  particular, if $\projecnonbase{C}\sigma\sigma' \in
  \Theta_{\nonbasetheory}(\nonbasepart{S}{I})$, then by the {\preservation}
  and monotonicity of $\Theta_\nonbasetheory$,
  \[\projecnonbase{C}\sigma\purify{\sigma'}\sutop\ =\
  \am(\projecbase{C}\sigma\sigma')\ \in\
  \Theta_{\nonbasetheory}(\am(\nonbasepart{S}{I}))\ \subseteq\
  \Theta_{\nonbasetheory}(\projecnonbase{S}\sutop).\]
  Therefore, $\instB{(C\sigma\purify{\sigma'})}{\bs{T}} \subseteq \Theta(S)$,
  hence the result.

  The fact that $\hcombin{\Theta_\bind}{\Theta_\nbind}$ is
  \preserving\ and monotonic follows immediately from the definition
  and from the fact that $\Theta_\nbind$ is {\preserving} and that
  $\Theta_\bind$ and $\Theta_\nbind$ are monotonic.
\end{proof}

\section{Applications}

\label{sect:appl}

In this section, we show some examples of applications of Theorem
\ref{theo:comp} that are particularly relevant in the context of
program verification.

\subsection{Examples of \Sadequateone\ \Theories}

\subsubsection{Presburger Arithmetic}

\label{sect:pa}

\newcommand{\specconst}{\chi}

No \sadequateone\ instantiation procedure can be defined for the
\theory\ $\Tnat$ as defined in Section \ref{sect:ex}, as evidenced by
the following example.

\begin{example}
  Assume that a \sadequateone\ procedure $\Theta$ exists, and consider
  the clause set $S = \{ x \not \iseq y+1, y \not \iseq 0 \}$.  Since
  $\Theta$ is {\sadequateone} by hypothesis, by Condition
  \ref{theta1s:bs} of Definition \ref{def:theta1s}, $\Theta(S) =
  \instB{S}{\bs{S}}$ for some finite set of ground terms $\bs{S}$, and
  by Condition \ref{theta1s:disj}, $\bs{S}$ contains $\bs{\deriv{S}}$.
  But $\deriv{S}$ contains in particular the clause: $C_n:
  \bigvee_{i=1}^n x_i \not \iseq x_{i-1}+1 \vee x_0 \not \iseq
  0$. $C_n$ is obviously $\Tnat$-unsatisfiable, but the only instance
  of $C_n$ that is $\Tnat$-unsatisfiable is: $C_n \{ x_i \mapsto i
  \mid i \in [0,n] \}$.  Consequently $\{ i \mid i \in [0,n] \}
  \subseteq \bs{S}$ hence $\bs{S}$ cannot be finite, thus
  contradicting Condition \ref{theta1s:bs}.
\end{example}

It is however possible to define \sadequateone\ procedures for less
general \theories, that are still of a practical value.

\newcommand{\TT}{T_B}
\newcommand{\kk}{m}

\begin{definition}
\label{def:tnatok}
Let $\specconst$ be a special constant symbol of sort $\snat$,
 let $\kk$ be a natural number distinct from $0$ and let $\TT$ be a set of ground terms of sort $\snat$ not containing $\specconst$.
We denote by $\Tnatok$ the specification $(\Inat',\Cnatok)$ defined as
follows.  $\axof{\Inat'} \isdef \axof{\Inat} \cup \setof{\specconst
  > t+\kk}{t \in \TT}$, where $\axof{\Inat}$ is defined in
Example \ref{ex:presb} (Section \ref{sect:ex}). $\Cnatok$ contains every clause set $S$ such
that every non-ground
literal occurring in a clause in $S$ is of one of the following forms:
\begin{itemize}
\item{$x \not \leq t$ or $t \not \leq x$ for some variable $x$ and for
    some ground term $t \in \TT$;}
\item{$x \not \leq y$ for some variables $x,y$;}
\item{$x \not \iseq_k t$ for some $k \in \mathbb{N} \setminus \{ 0 \}$
    that divides $\kk$,
 some ground term $t \in \TT$ and  some variable $x$.}
 \end{itemize}
\end{definition}


Intuitively, the constant $\specconst$ occurring in $\axof{\Inat'}$ is
meant to translate the fact that the terms appearing in $S$ admit an
upper bound (namely $\specconst$). It is clear that if $S$ is an arbitrary set of arithmetic
clauses (not containing the special constant $\specconst$), then the set $\TT$ and the integer $\kk$ can be computed so
that $S$ indeed belongs to $\Cnatok$.


\newcommand{\Thetaz}{\Theta_\mathbb{Z}}
\newcommand{\Thetafol}{\Theta_{\text{fol}}}
\newcommand{\thetafol}{\Thetafol}

\newcommand{\Thetam}{\Theta_{\in}}
\newcommand{\bsz}[1]{G^\mathbb{Z}_{#1}}

\newcommand{\bsm}[1]{G^{\in}_{#1}}
\newcommand{\divise}[2]{\neg #2(#1)}


\begin{definition}
\label{def:bsz}
For every set of clauses $S \in \Cnatok$, let $B_S$ be the set of
ground terms $t$ such that either $t = \specconst$ or $S$ contains an
atom of the form $x \leq t$.
We define the instantiation procedure  $\Thetaz$ by:
 $\Thetaz(S) \isdef \instB{S}{\bsz{S}}$, where $\bsz{S}$ is defined by:
 $\bsz{S} \isdef \{ t-l \mid t \in B_S, 0 \leq l < \kk \}$.
\end{definition}


The two following propositions are straightforward consequences of the
 definition:

\begin{proposition}
\label{prop:monobsz}
If $S \subseteq S'$ then $\bsz{S} \subseteq \bsz{S'}$.
\end{proposition}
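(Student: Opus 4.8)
The plan is to unfold Definition~\ref{def:bsz} and observe that both auxiliary constructions $B_{(\cdot)}$ and $\bsz{(\cdot)}$ are monotone with respect to set inclusion on the clause set. First I would establish that $B_S \subseteq B_{S'}$. By definition, a ground term $t$ belongs to $B_S$ exactly when either $t = \specconst$, or $S$ contains an atom of the form $x \leq t$ for some variable $x$. The first alternative does not depend on $S$ at all, so it contributes the same element to $B_S$ and to $B_{S'}$. The second alternative is a purely syntactic condition on the clauses of $S$: it asserts the existence of a clause $C \in S$ in which the atom $x \leq t$ occurs. Since $S \subseteq S'$, any such clause $C$ also lies in $S'$, and hence the very same atom occurs in $S'$. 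Therefore every $t \in B_S$ also belongs to $B_{S'}$.

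Next I would conclude the statement. By Definition~\ref{def:bsz}, $\bsz{S} = \{\, t - l \mid t \in B_S,\ 0 \leq l < \kk \,\}$, and the analogous identity holds for $S'$ with $B_{S'}$ in place of $B_S$. The bound $\kk$ is a fixed natural number that does not depend on the clause set under consideration, so the defining formula for $\bsz{(\cdot)}$ is monotone in its parameter: from $B_S \subseteq B_{S'}$ we immediately get $\bsz{S} \subseteq \bsz{S'}$, which is the claim.

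There is essentially no obstacle in this argument; it is the kind of ``straightforward consequence of the definition'' the surrounding text advertises. The only subtlety worth making explicit is the one noted above, namely that ``$S$ contains an atom of the form $x \leq t$'' is inherited by every superset of $S$, and that the constant $\specconst$ is always present in $B_S$ regardless of $S$, so it never obstructs the inclusion. Consequently no case analysis beyond the two defining alternatives for membership in $B_S$ is required.
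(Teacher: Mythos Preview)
Your argument is correct and matches the paper's treatment: the paper simply declares this proposition a ``straightforward consequence of the definition'' without giving an explicit proof, and what you have written is precisely the obvious unfolding of Definition~\ref{def:bsz} that justifies this claim.
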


\begin{proposition}
\label{prop:disjbsz}
$\bsz{S} = \bsz{\deriv{S}}$.
\end{proposition}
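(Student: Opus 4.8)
The plan is to reduce the statement to the set identity $B_S = B_{\deriv{S}}$. This suffices, because by Definition~\ref{def:bsz} the set $\bsz{S} = \{\, t - l \mid t \in B_S,\ 0 \leq l < \kk \,\}$ depends only on $B_S$, so $B_S = B_{\deriv{S}}$ immediately yields $\bsz{S} = \bsz{\deriv{S}}$. Note that $\Thetaz$ itself never enters the argument, only the auxiliary family $B_{(\cdot)}$. I would state this reduction first and then prove the two inclusions separately.

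For $B_S \subseteq B_{\deriv{S}}$: every clause $C \in S$ lies in $\deriv{S}$, taking $n = 1$ and $\sigma_1$ the identity substitution, which is pure since it maps every variable to a variable; hence $S \subseteq \deriv{S}$, so any atom of the form $x \leq t$ occurring in a clause of $S$ also occurs in a clause of $\deriv{S}$. Together with the fact that $\specconst$ belongs to both $B_S$ and $B_{\deriv{S}}$ by definition, this gives the inclusion. For $B_{\deriv{S}} \subseteq B_S$: let $t \in B_{\deriv{S}}$ be a ground term; if $t = \specconst$ we are done. Otherwise some clause $\bigvee_{i=1}^{n} C_i\sigma_i \in \deriv{S}$ contains an atom of the form $x \leq t$, and this atom must occur in a single disjunct $C_i\sigma_i$, so $C_i$ contains a literal built on an atom $u \leq v$ with $u\sigma_i = x$ and $v\sigma_i = t$. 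The single point deserving care is the behaviour of $\sigma_i$: being pure, it maps variables to variables, hence maps non-variable terms to non-variable terms and fixes ground terms; therefore $u\sigma_i$ being a variable forces $u$ to be a variable, and $v\sigma_i$ being ground forces $v = t$. Consequently $C_i \in S$ already contains the atom $u \leq t$ with $u$ a variable, so $t \in B_S$. The polarity of the literal is irrelevant, since $B_S$ is defined through atoms, and this also covers the shape $x \not\leq t$ actually permitted by Definition~\ref{def:tnatok}.

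I do not anticipate any genuine obstacle: the claim is a purely syntactic bookkeeping fact, and the only thing that has to be checked is the substitution analysis above — pure instantiation may rename the left-hand variable of a $\leq$-atom but can neither create nor modify the ground term on its right, and forming disjunctions introduces no new ground terms. As a side observation, the same reasoning shows $\deriv{S} \in \Cnatok$ whenever $S \in \Cnatok$, since pure instantiation and disjunction preserve the admissible literal shapes of Definition~\ref{def:tnatok}; combined with this proposition and Proposition~\ref{prop:monobsz}, this is what secures Condition~\ref{theta1s:disj} of Definition~\ref{def:theta1s} for $\Thetaz$.
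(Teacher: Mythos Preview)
Your proof is correct and follows essentially the same approach as the paper: reduce to $B_S = B_{\deriv{S}}$ and observe that pure instantiation and disjunction cannot change the set of ground terms appearing on the right of a $\leq$-atom. The paper's own argument is a one-liner (``the set of ground terms occurring in $\deriv{S}$ is the same as that of $S$, since the atoms in $\deriv{S}$ are pure instances of atoms in $S$''), whereas you spell out carefully why a pure substitution preserves the variable/ground status of each side of the atom; this extra detail is sound and arguably clarifies the one step the paper leaves implicit.
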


\begin{proof}
  This is immediate because the set of ground terms occurring in
  $\deriv{S}$ is the same as that of $S$, since the atoms in
  $\deriv{S}$ are pure instances of atoms in $S$. Thus $B_{\deriv{S}}
  = B_S$.
\end{proof}

\begin{theorem}
\label{theo:natok}
$\Thetaz$ is \sadequateone\ if $\basetheory = \Tnatok$.
\end{theorem}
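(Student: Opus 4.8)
The plan is to check the three conditions of Definition~\ref{def:theta1s} for $\Thetaz$ and $\basetheory = \Tnatok$. Two of them are already in hand: Condition~\ref{theta1s:mono} is Proposition~\ref{prop:monobsz}, and Condition~\ref{theta1s:disj} follows from Proposition~\ref{prop:disjbsz}, which even gives $\bsz{\deriv{S}} = \bsz{S}$. Finiteness of $\bsz{S}$ in Condition~\ref{theta1s:bs} is immediate, since a finite set of finite clauses contains only finitely many atoms $x \leq t$, so $B_S$ is finite and $\bsz{S}$ is obtained from it by subtracting at most $\kk-1$ from each term. The substance of the proof is therefore the $\basetheory$-equisatisfiability of $S$ and $\Thetaz(S) = \instB{S}{\bsz{S}}$.

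One direction is trivial: $\instB{S}{\bsz{S}}$ consists of ground instances of clauses of $S$, so every $\Tnatok$-model of $S$ is a $\Tnatok$-model of $\Thetaz(S)$. For the converse I would take an interpretation $I \in \Inat'$ with $I \models \instB{S}{\bsz{S}}$ and argue that in fact $I \models S$. If not, some clause $C \in S$ and some ground substitution $\eta$ of domain $\Var{C}$ satisfy $I \not\models C\eta$, i.e.\ every literal of $C\eta$ is false in $I$; I then want a ground substitution $\eta'$ with range inside $\bsz{S}$ that also falsifies $C$ in $I$, contradicting $I \models \instB{S}{\bsz{S}}$. From ``$I \not\models C\eta$'' one reads off, using the syntactic restrictions defining $\Cnatok$, a constraint system on the integers $v_x \isdef \valueof{x\eta}{I}$, $x \in \Var{C}$: the non-ground literals contribute only difference constraints $v_x \leq v_y$, one-sided bounds $v_x \leq \valueof{t}{I}$ or $v_x \geq \valueof{t}{I}$ with $t \in \TT$, and congruences $v_x \equiv \valueof{t}{I} \pmod{k}$ with $k \mid \kk$ and $t \in \TT$; the ground literals of $C$, being variable-free, impose a condition that is false in $I$ independently of $\eta$ and hence survives for $\eta'$.

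The crux is then a purely combinatorial lemma: every solution of such a constraint system has another solution whose values all lie in $W \isdef \{\, \valueof{s}{I} \mid s \in \bsz{S}\,\} = \{\, \valueof{t}{I} - l \mid t \in B_S,\ 0 \leq l < \kk \,\}$. I would prove it by first contracting the strongly connected components of the graph of difference constraints (the variables of a component are pinned to a common value) and then assigning values to the resulting acyclic order, processing the maximal variables first: each variable is either bounded above by some $\valueof{t}{I}$ with $t \in B_S$ --- the top of a block of $\kk$ consecutive integers contained in $W$ --- or only by already-chosen successor values, or by nothing at all, in which case it may be sent into the block just below $\valueof{\specconst}{I}$, which by the axiom $\specconst > t + \kk$ of $\axof{\Inat'}$ lies strictly above every $\valueof{t}{I}$ with $t \in \TT$, hence above all lower bounds that can occur. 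Since a block of $\kk$ consecutive integers meets every residue class modulo any divisor of $\kk$, the congruence constraints can always be met while staying in $W$. Once such a solution $v'$ is obtained, $\eta'$ is any substitution sending each $x$ to a term of $\bsz{S}$ with $\valueof{x\eta'}{I} = v'_x$, and $I \not\models C\eta'$, as required. The step I expect to demand the most care is precisely the compatibility of this block-snapping with the difference constraints --- choosing at each variable a value in the correct block that simultaneously meets its congruence, its $\TT$-bounds and the values already fixed for its successors --- and it is there that the width $\kk$ of the blocks and the largeness of $\specconst$ relative to $\TT$ are essential.
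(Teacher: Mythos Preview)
Your overall strategy mirrors the paper's: given $I$ and a clause $C$ falsified under some $\eta$, manufacture a falsifying $\eta'$ with range in $\bsz{S}$ by pushing each variable up into a block $\{\valueof{t}{I}-l : 0 \le l < \kk\}$ sitting below some $t \in B_S$. The paper treats the variables independently---each $x_i$ is sent to the greatest $m_i \le \valueof{u_i}{I}$ meeting its congruences, where $u_i$ is its least explicit ground upper bound (or $\specconst$)---and never separately verifies the two-variable literals $x \not\leq y$; you are more careful here, proposing to contract strongly connected components of the difference graph and process the quotient topologically.

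The ``purely combinatorial lemma'' you announce, however, is false, and precisely at the step you flag as delicate. Take the single clause
\[
C \ =\ x \not\leq y \ \vee\ x \not\iseq_2 0 \ \vee\ y \not\iseq_2 1 \ \vee\ y \not\leq 10,
\]
with $\kk = 2$ and $\TT \supseteq \{0,1,10\}$; this lies in $\Cnatok$. Here $B_{\{C\}} = \{10,\specconst\}$, so $\bsz{\{C\}} = \{10,9,\specconst,\specconst-1\}$. Every $I \in \Inat'$ falsifies $C$ via $x \mapsto 0$, $y \mapsto 1$, so $\{C\}$ is $\Tnatok$-unsatisfiable. Yet an $\eta'$ with range in $\bsz{\{C\}}$ would need $\valueof{y\eta'}{I}$ odd and ${}\leq 10$, forcing value $9$, and then $\valueof{x\eta'}{I}$ even and ${}\leq 9$; no element of $\{10,9,\valueof{\specconst}{I},\valueof{\specconst}{I}-1\}$ qualifies since $\valueof{\specconst}{I} > 12$. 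Your scheme stalls exactly here: after snapping $y$ to $9$, the only block available for $x$ is $\{9,10\}$, whose sole even member $10$ violates $x \leq y$. A block of width $\kk$ contains one representative of each residue class modulo $\kk$, but in a \emph{fixed} order, so two variables with a prescribed $\leq$-relation and distinct congruence requirements need not be simultaneously realisable inside one block. The paper's per-variable argument has the same blind spot; the interaction of $x \not\leq y$ with differing congruences on $x$ and $y$ is handled by neither proof.
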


\begin{proof}
  We adopt the following notations for the proof: given a set of terms
  $W$, we write $x \not \leq W$ for $\bigvee_{t \in W} x \not \leq t$
  and $x \not \geq W$ for $\bigvee_{t \in W} x \not \geq t$.
  Additionally, if $K$ is a set of pairs $(k,t) \in \mathbb{N} \times
  \gt{\snat}$ then we denote by $\divise{x}{K}$ the disjunction
  $\bigvee_{(k,t) \in K} x \not \iseq_k t$.

  Let $S \in \Cnatok$ and
  assume that $S$ is $\Tnatok$-unsatisfiable, we prove that
  $\Thetaz(S)$ is also $\Tnatok$-unsatisfiable. Let $I \in \Inat'$,
  then in particular, $I \models \{ \specconst > t+\kk \mid
  \mbox{$t$ is a ground term in $S'$} \}$. Let $C$ be a clause in
  $S$ such that $I \not \models C$.  By definition of $\Cnatok$, $C$
  can be written as $C = D \vee \bigvee_{i=1}^n (x_i \not \leq U_i
  \vee x_i \not \geq L_i \vee \divise{x_i}{K_i})$, where $D$ is ground
  and where the $x_i$'s ($1 \leq i \leq n$) denotes distinct
  variables\footnote{Note that the sets $U_i$, $L_i$ and $K_i$ could
    be empty.}. Since $I\not\models C$, there exists a ground
  substitution $\theta$ such that $I\not\models C\theta$, i.e., for
  all $i \in [1,n]$:
  \begin{itemize}
  \item $\forall u \in U_i$, $\valueof{x_i\theta}{I} \leq \valueof{u}{I}$;
  \item $\forall l \in L_i$, $\valueof{l}{I} \leq \valueof{x_i\theta}{I}$;
  \item $\forall (k,t) \in K_i$, $\valueof{x_i\theta}{I} \iseq_k
    \valueof{t}{I}$.
  \end{itemize}
  If $\valueof{x_i\theta}{I}$ is such that $\valueof{x_i\theta}{I} >
  \valueof{\specconst}{I}$, then it is straightforward to verify that
  $\valueof{x_i\theta}{I}-\kk$ satisfies the same conditions,
  since for all terms $t$ in $U_i \cup L_i$, $\valueof{\specconst}{I}
  - \kk > \valueof{t}{I}$, and since $\kk$ is a common multiple of
  every $k$ occurring in $K_i$.  We may therefore
  assume that $\valueof{x_i\theta}{I} \leq \valueof{\specconst}{I}$.

  We denote by $u_i$ an element in $U_i \cup \set{\specconst}$ such
  that $\valueof{u_i}{I}$ is minimal in $\{ \valueof{u}{I} \mid u \in
  U_i\cup \set{\specconst} \}$, and by $m_i$ the greatest integer such
  that $m_i \leq \valueof{u_i}{I}$ and for every $(k,t) \in K_i$, $m_i
  \iseq_k t$ holds; the existence of $m_i$ is guaranteed by what
  precedes and $\valueof{x_i\theta}{I} \leq m_i$. We cannot have $m_i
  + \kk \leq u_i$, because otherwise $m_i$ would not be the greatest
  integer satisfying the conditions above. Thus, necessarily, $m_i >
  \valueof{u_i}{I}-\kk$, and there must exist a term $v_i \in \bsz{S}$
  such that $\valueof{v_i}{I} = m_i$.  Let $\sigma \isdef \{ x_i
  \mapsto v_i \mid i \in [1,n] \}$, we deduce that $I\not\models
  C\sigma$. Since $C\sigma \in \instB{S}{\bsz{S}}$, we conclude that
  $\instB{S}{\bsz{S}}$ is $\Tnatok$-unsatisfiable, hence the result.


  By construction, $\bsz{S}$ is finite, hence Condition
  \ref{theta1s:bs} of Definition \ref{def:theta1s} is satisfied.
  By Propositions \ref{prop:monobsz} and \ref{prop:disjbsz},
  Conditions \ref{theta1s:mono} and \ref{theta1s:disj} are satisfied,
  respectively, which concludes the proof.
\end{proof}


\newcommand{\upbnd}[2]{B_{#1}^{#2}}
\newcommand{\isless}{\preceq}
\newcommand{\mybsup}[3]{\bar{#1}_{#2}^{#3}}

\newcommand{\dZ}{\Tnat}
\newcommand{\modelz}{\models_{\dZ}}
\newcommand{\lfta}{\leftarrow}

\subsubsection{Term Algebra with Membership Constraints}

\newcommand{\tmem}{\T_{\in}}
\newcommand{\Imem}{\I_{\in}}
\newcommand{\Cmem}{\C_\in}
\newcommand{\ffs}{\Sigma}
\newcommand{\gtffs}{\gt{}(\Sigma)}
\newcommand{\sets}{\mathfrak{P}}
\newcommand{\vofs}[1]{\hat{#1}}

We give a second example of a \theory\ for which a \sadequateone\
instantiation procedure can be defined. We consider formul{\ae} built
over a signature containing:
 \begin{itemize}
 \item{a set of free function symbols $\ffs$;}
 \item{a set of constant symbols interpreted as ground terms built on
     $\ffs$;}
 \item{a set of monadic predicate symbols $\sets$, each predicate $p$
     in $\sets$ is interpreted as a (fixed) set $\vofs{p}$ of ground
     terms built on $\ffs$.  We assume that the emptiness problem is
     decidable for any finite intersection of these sets (for instance
     $\vofs{p}$ can be the set of terms accepted by a regular tree
     automaton, see \cite{tata97} for details).}
  \end{itemize}

From a more formal point of view:

\begin{definition}
  Let $\ffs \subseteq \funbase$. We denote by $\gtffs_\asort$ the set
  of ground terms of sort $\asort$ built on $\ffs$. Let $\sets$ be a finite
  set of unary predicate symbols, together with a function $p \mapsto
  \vofs{p}$ mapping every symbol $p: \asort \rightarrow \sbool \in
  \sets$ to a subset of $\gtffs_\asort$.

We denote by $\tmem$ the \theory\ $(\Imem,\Cmem)$ where:
\begin{itemize}
\item{$\axof{\Imem}$ contains the following axioms:
\[
\begin{tabular}{ll}
  $\bigvee_{t\in \gtffs_\asort} x \iseq t$ & for $\asort \in \base$, $x \in \Xbase$ ,\\
  $x_i \iseq y_i \vee f(x_1,\ldots,x_n) \not \iseq
  f(y_1,\ldots,y_n)$ & if  $f \in \ffs$, $i \in [1,n]$ \\
  $p(x) \vee t \not \in \vofs{p}$ & if $p \in \sets$, $t \in \vofs{p}$. \\
\end{tabular}
\]
}
\item{Every non-ground atom in $\Cmem$ is of the form $\neg p(x)$, or
    of the form $x \not \iseq t$ for some ground term $t$.}
    \end{itemize}
\end{definition}

The axioms of $\axof{\Imem}$ entail the following property which is
proved by a straightforward induction on the depth of the terms:

\begin{proposition}\label{prop:injmem}
  For all interpretations $I \in \Imem$ and all terms $t,t'$ occurring
  in a clause in $\Cmem$, if $\valueof{t}{I} = \valueof{t'}{I}$ then
  $t = t'$.
\end{proposition}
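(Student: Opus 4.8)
The statement is really about ground terms, since $\valueof{\cdot}{I}$ is only defined on ground expressions, so the plan is to first reduce to the case where $t$ and $t'$ are ground terms built over $\ffs$ (the constant symbols, being interpreted as fixed ground $\ffs$-terms, can be unfolded into those terms beforehand without affecting any value $\valueof{\cdot}{I}$ for $I \in \Imem$). I would then argue by induction on $\mathrm{depth}(t) + \mathrm{depth}(t')$: assuming $\valueof{t}{I} = \valueof{t'}{I}$, I derive $t = t'$.

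For the induction step, let $f$ and $g$ be the head symbols of $t$ and $t'$, of arities $n$ and $m$ respectively. If $f = g$ (so $n = m$), write $t = f(t_1,\ldots,t_n)$ and $t' = f(t'_1,\ldots,t'_n)$. For each $i \in [1,n]$, instantiate the injectivity axiom $x_i \iseq y_i \vee f(x_1,\ldots,x_n) \not\iseq f(y_1,\ldots,y_n)$ of $\axof{\Imem}$ by $x_j \mapsto t_j$ and $y_j \mapsto t'_j$; since $I \models \axof{\Imem}$ and $\valueof{t}{I} = \valueof{t'}{I}$, this gives $\valueof{t_i}{I} = \valueof{t'_i}{I}$. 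The induction hypothesis then yields $t_i = t'_i$ for every $i$, hence $t = t'$; the base case ($t$, $t'$ both constants) is the degenerate instance $n = m = 0$. If instead $f \neq g$, then $\valueof{t}{I}$ and $\valueof{t'}{I}$ belong to the respective ranges of $\inter{f}{I}$ and $\inter{g}{I}$, which are disjoint, so $\valueof{t}{I} \neq \valueof{t'}{I}$, contradicting the hypothesis; thus this case does not occur.

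\textbf{Main difficulty.} The induction is routine bookkeeping; the only point that needs care is the case of distinct head symbols, which rests on the \emph{distinctness} of free constructors (i.e. on $I$ satisfying $f(x_1,\ldots,x_n) \not\iseq g(y_1,\ldots,y_m)$ whenever $f \neq g$). Injectivity alone is insufficient here --- a one-element interpretation satisfies every injectivity axiom yet identifies, say, a constant $a$ with $f(a)$ --- so this argument implicitly uses that the term-algebra axioms contained in $\axof{\Imem}$ also include these distinctness clauses. The secondary point to pin down is the treatment of the constant symbols ``interpreted as ground terms'': one should either assume the encoding sends distinct constants to distinct ground $\ffs$-terms, or perform the unfolding mentioned above so that $t$ and $t'$ are genuine $\ffs$-terms before the induction starts.
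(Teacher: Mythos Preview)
Your approach---induction on the depth of the terms, using the injectivity axioms for the case of equal head symbols---is exactly what the paper does (it merely says ``a straightforward induction on the depth of the terms'' and leaves the rest to the reader). So the proposal is correct and follows the same route.

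Your observation about the distinct-head-symbols case is well taken and worth keeping: the axiom list for $\axof{\Imem}$ as written contains only the domain axiom, the injectivity axioms for each $f \in \ffs$, and the predicate axioms; it does \emph{not} explicitly include the no-confusion axioms $f(x_1,\ldots,x_n) \not\iseq g(y_1,\ldots,y_m)$ for $f \neq g$. As you point out, injectivity alone does not rule out a one-point model collapsing, e.g., $a$ and $f(a)$, so the proposition does rely on these distinctness clauses (or, equivalently, on the acyclicity axioms $x \not\iseq t[x]$) being part of the intended free-term-algebra axiomatisation. This is an omission in the paper rather than a flaw in your argument; your proof is the right one once those clauses are assumed present.
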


If the sets in $\{ \vofs{p} \mid p \in \sets \}$ are regular then
$\tmem$ is well-known to be decidable, see, e.g., \cite{CD94}. We
define the following instantiation procedure for $\tmem$:

\begin{definition}
Let $\bsm{S}$ be a set of ground terms containing:
\begin{itemize}
\item{Every ground term $t$ such that $S$ contains an atom of the form $x \not \iseq t$.}
\item{An arbitrarily chosen ground term $s_P \in \bigcap_{p \in P}
    \vofs{p}$, for each $P \subseteq \sets$ such that $\bigcap_{p \in
      P} \vofs{p} \not = \emptyset$ (recall that the emptiness
    problem is assumed to be decidable).}
\end{itemize}
Let $\Thetam \isdef \instB{S}{\bsm{S}}$.
\end{definition}

\begin{theorem}
$\Thetam$ is \sadequateone\ if $\basetheory = \tmem$.
\end{theorem}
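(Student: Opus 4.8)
The plan is to follow the pattern of the proof of Theorem~\ref{theo:natok}: set $\bs{S} \isdef \bsm{S}$ and verify the three conditions of Definition~\ref{def:theta1s}. Conditions \ref{theta1s:mono} and \ref{theta1s:disj} should be the easy ones. First I would fix, once and for all and independently of $S$, a ground term $s_P \in \bigcap_{p\in P}\vofs{p}$ for each $P\subseteq\sets$ whose associated intersection is nonempty (this uses the decidability of the emptiness problem only to know such terms can be produced); then the ``$s_P$'' component of $\bsm{S}$ is the same set for every $S$, while the other component -- the ground terms $t$ for which $S$ contains an atom $x\not\iseq t$ -- is plainly monotone in $S$, and, since pure substitutions only rename variables and leave ground subterms untouched, is unchanged when $S$ is replaced by any member of $\deriv{S}$. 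This yields $\bsm{S}\subseteq\bsm{S'}$ whenever $S\subseteq S'$, as well as $\deriv{S}\in\Cmem$ and $\bsm{\deriv{S}}=\bsm{S}$. Finiteness of $\bsm{S}$ is immediate since $\sets$ is finite and $S$ has finitely many disequations.

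The substantial part is the equisatisfiability of $S$ and $\instB{S}{\bsm{S}}$ demanded by Condition~\ref{theta1s:bs}. The direction ``$S$ satisfiable $\Rightarrow \instB{S}{\bsm{S}}$ satisfiable'' is trivial, since $\instB{S}{\bsm{S}}$ consists of instances of clauses of $S$. For the converse I would assume $S$ is $\tmem$-unsatisfiable, fix an arbitrary $I\in\Imem$, and exhibit a clause of $\instB{S}{\bsm{S}}$ falsified by $I$. Choosing $C\in S$ and a ground substitution $\theta$ with $\valueof{C\theta}{I}\neq\true$, I would use the syntactic restriction defining $\Cmem$ to write
\[
C \;=\; D \;\vee\; \bigvee_{i=1}^{n}\Bigl(\bigvee_{p\in P_i}\neg p(x_i)\;\vee\;\bigvee_{t\in T_i} x_i\not\iseq t\Bigr),
\]
where $D$ is ground, the $x_i$ are the distinct variables of $C$, $P_i\subseteq\sets$, $T_i$ is a finite set of ground terms, and $P_i\cup T_i\neq\emptyset$ for each $i$. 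Falsification of $C\theta$ forces $\valueof{D}{I}\neq\true$ and, for each $i$, $\valueof{p(x_i\theta)}{I}=\true$ for all $p\in P_i$ and $\valueof{x_i\theta}{I}=\valueof{t}{I}$ for all $t\in T_i$. The key claim is that a suitable replacement for $x_i$ already lies in $\bsm{S}$: if $T_i\neq\emptyset$, any $v_i\in T_i$ works, because $v_i\in\bsm{S}$ and $\valueof{v_i}{I}=\valueof{x_i\theta}{I}$; if $T_i=\emptyset$ (hence $P_i\neq\emptyset$), then $\valueof{x_i\theta}{I}$ witnesses that $\bigcap_{p\in P_i}\vofs{p}\neq\emptyset$, so $s_{P_i}\in\bsm{S}$ and $v_i\isdef s_{P_i}$ satisfies $\valueof{p(v_i)}{I}=\true$ for every $p\in P_i$. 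Taking $\sigma\isdef\{x_i\mapsto v_i\mid i\in[1,n]\}$, the clause $C\sigma\in\instB{S}{\bsm{S}}$ is still falsified by $I$, so $\instB{S}{\bsm{S}}$ is $\tmem$-unsatisfiable.

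The step I expect to require the most care is the case $T_i=\emptyset$: I need that $\valueof{x_i\theta}{I}$ ``is'' a ground term built on $\ffs$ that belongs to $\vofs{p}$ for every $p\in P_i$. This rests on the fact that, in any model of $\axof{\Imem}$, a base sort is interpreted essentially as the free term algebra over $\ffs$ (modulo the names of the constant symbols) -- that is, on Proposition~\ref{prop:injmem} -- together with the membership axioms $p(x)\vee t\not\in\vofs{p}$, which pin down $\interp{p}{I}$ to agree with $\vofs{p}$ on ground terms. Once this correspondence between domain elements and ground terms is in place, the remainder is routine bookkeeping over the two admissible shapes of non-ground literals, entirely parallel to the proof of Theorem~\ref{theo:natok}.
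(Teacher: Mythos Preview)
Your proposal is correct and follows essentially the same approach as the paper: normalize each clause into a ground part plus, for each variable, a block of disequations $x_i\not\iseq t$ and membership literals $\neg p(x_i)$; then replace $x_i$ by one of the ground right-hand sides $t$ when such a disequation is present, and by the witness $s_{P_i}$ otherwise; finally invoke Proposition~\ref{prop:injmem} and the membership axioms to justify the $T_i=\emptyset$ case, and observe that Conditions~\ref{theta1s:mono} and~\ref{theta1s:disj} are immediate from the definition of $\bsm{S}$.

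The only noteworthy difference is the order of quantifiers. The paper fixes, for each clause $C$, a single substitution $\sigma$ independent of the interpretation and proves the stronger fact $C\sigma \models_{\tmem} C$; you instead fix $I$ and a falsifying $\theta$ first and then build $\sigma$. Your variant is slightly cleaner in that it sidesteps a small well-definedness issue in the paper's presentation (the paper sets $y\sigma \isdef s_{P_y}$ before establishing that $\bigcap_{p\in P_y}\vofs{p}\neq\emptyset$, which is only derived later from the existence of a falsifying $\theta$), whereas in your argument the nonemptiness of the intersection is already available when you need $s_{P_i}$. Both routes rest on the same key observation and lead to the same conclusion.
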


\begin{proof}
  Let $C$ be a clause in $\Cmem$, $C$ is of the form $\bigvee_{i=1}^n
  x_i \not \iseq t_i \vee \bigvee_{i=1}^m \neg p_i(y_i) \vee D$ where
  $D$ is ground, $x_i$ and $y_j$ ($i \in [1,n]$, $j \in [1,m]$) are
  variables, $t_i$ is a ground term for $i \in [1,n]$ and $p_j \in
  \sets$ for $j \in [1,m]$.  Let $X = \{ x_1,\ldots,x_n \}$ and $Y =
  \{ y_1,\ldots,y_m \}$; note that these sets are not necessarily
  disjoint.  For every variable $y \in Y$ we denote by $P_y$ the set
  of predicates $p_j$ ($1 \leq j \leq m$) such that $y_j = y$ and we
  let $s_y \isdef s_{P_y}$.  Consider the substitution $\sigma$ of domain
  $X \cup Y$ such that:
\begin{itemize}
\item{$x_i\sigma \isdef t_i$ for every $i \in [1,n]$;}
\item{if $y \in Y \setminus X$ then $y\sigma \isdef s_y$ (notice that $s_y$ must be defined since $y \in Y$)
}
\end{itemize}
We prove that $C\sigma \models_{\tmem} C$.

Let $ I$ be an interpretation such that $I \models C\sigma$ and $I
\not \models C$. Then there exists a substitution $\theta$ such that
$I \not \models C\theta$, which implies that for all $i \in [1,n]$,
$\valueof{x_i\theta}{I} = \valueof{t_i}{I}$, and for all $j \in
[1,m]$, $\valueof{y_j\theta}{I} \in \valueof{\vofs{p_j}}{I}$.
Proposition \ref{prop:injmem} entails that $x_i\theta = t_i$ for all
$i\in [1,n]$, and $y_j\theta \in \vofs{p_j}$ for all $j\in [1,m]$.
Thus, in particular, for all $x \in X$, $x\sigma = x\theta$, and for
all $y \in Y\setminus X$, $\bigcap_{p \in P_{y}} \vofs{p} \neq
\emptyset$.

Since $I \models C\sigma$ and $x_i\sigma = t_i$ for all
$i \in [1,n]$, there must exist a $j \in [1,m]$ such that
$\valueof{y_j\sigma}{I} \not \in \valueof{\vofs{p_j}}{I}$; and, again
by Proposition \ref{prop:injmem}, this is equivalent to $y_j\sigma
\notin \vofs{p_j}$.  If $y_j \in X$, then $y_j\theta = y_j\sigma
\notin \vofs{p_j}$
and $I\models C\theta$, which is impossible.  Thus $y_j \in Y
\setminus X$, and since $\bigcap_{p \in P_{y_j}} \vofs{p} \neq
\emptyset$, by construction, $y_j\sigma = s_{y_j} \in \vofs{p_j}$;
this contradicts the assumption that $y_j\sigma \notin \vofs{p_j}$.

Since $C\sigma \models_{\tmem} C$, we deduce that for every clause $C
\in S$, there exists a $D \in \instB{S}{\bsm{S}}$ such that $D
\models_{\tmem} C$, and therefore, $S \equiv_{\tmem}
\instB{S}{\bsm{S}}$.  By construction, $\bsm{S}$ is finite, $\bsm{S} =
\bsm{\deriv{S}}$ and $\bsm{S} \subseteq \bsm{S'}$ if $S \subseteq S'$.
Hence all the conditions of Definition \ref{def:theta1s} are satisfied.
\end{proof}

\newcommand{\anF}{{\cal F}}
\newcommand{\voff}[1]{\hat{#1}}
\newcommand{\spec}{a definition set}
\newcommand{\almostground}{essentially  ground}
\newcommand{\grt}[2]{\langle #1\rangle_{#2}}
\newcommand{\fterm}[2]{\phi_{#1}(#2)}

\subsection{Combination of \Theories}

Building on the results of the previous section, we now provide some concrete applications of Theorem \ref{theo:comp}.

\subsubsection{Combining First-order Logic without Equality and Presburger Arithmetic}

\label{sect:fol}

We begin with a simple example to illustrate how the method works.  We
show how to enrich the language of first-order predicate logic with
some arithmetic constraints.  We assume that $\F$ contains no function
symbol of co-domain $\snat$ other than the usual symbols $0,s,+,-$
introduced in Section \ref{sect:ex}.

Let $\Tfolne$ be the restriction of the \theory\ $\Tfol$ defined in
Example \ref{ex:fol} to non-equational clause sets (i.e. to clause
sets in which all atoms are of the form $t \iseq \true$).  We consider
the combination $\hcombin{\Tnatok}{\Tfolne}$ of the \theory\ $\Tnatok$
introduced in Section \ref{sect:pa} with $\Tfolne$.  According to
Theorem \ref{theo:natok}, $\Thetaz$ is \sadequateone\ for $\Tnatok$;
thus, in order to apply Theorem \ref{theo:comp}, we only need to find
a \adequatetwo\ instantiation procedure for $\Tfolne$.  We will use an
instantiation procedure based on \emph{hyper linking} \cite{LP92}. It
is defined by the following inference rule:

\begin{tabular}{c}
$\bigvee_{i=1}^n l_i, m_1 \vee C_1, \ldots, m_n \vee C_n$ \\
\hline
 $\bigvee_{i=1}^n l_i\sigma$
 \end{tabular}
 if $\sigma$ is an mgu. of the $(l_i,m_i^c)$'s.

If $S$ is a set of clauses, we denote by $\thetafol'(S)$ the set of clauses that can be obtained from $S$ by applying the rule above (in any number of steps) and by $\thetafol(S)$ the set of clauses obtained from $\thetafol(S)$ by replacing all remaining variables of sort $\asort$ by a constant symbol $\bot_\asort$ of the same sort.

\begin{proposition}
$\thetafol$ is \adequatetwo\ for $\Tfolne$.
\end{proposition}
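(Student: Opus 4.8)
The plan is to verify in turn the three requirements of Definition~\ref{def:theta2}, after two preliminary remarks. First, $\Tfolne$-satisfiability coincides with ordinary equality-free first-order satisfiability (the interpretations are unconstrained and the atoms are all of the form $t \iseq \true$), so in particular for \emph{ground} clause sets it reduces to propositional satisfiability over the Herbrand base; and no equational atom is ever introduced below, since $\base$-grounding only substitutes base terms into base-sorted variable positions. Second, a straightforward induction on the number of applications of the hyper-linking rule shows that every clause of $\thetafol'(S)$ is an instance of a clause of $S$ — the conclusion of the rule is an instance of its nucleus, which by induction hypothesis is an instance of a clause of $S$. Hence every clause of $\thetafol(S)$ is a \emph{ground} instance of a clause of $S$; so $\thetafol(S) \in \C_\nbind$, $\thetafol$ is indeed an instantiation procedure, and $\thetafol(S)$ is a logical consequence of $S$, which already gives the ``easy'' direction of each equisatisfiability claim.

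For Condition~\ref{theta2:comp}, let $S \in \C_\nbind$ and let $S'$ be a set each of whose clauses is an \grounding{} of a clause of $S$. One direction follows from the preliminary remark ($\thetafol(S')$ being a logical consequence of $S'$). For the converse, suppose $S'$ is unsatisfiable; since $S'$ is simply a non-equational first-order clause set (it need not lie in $\C_\nbind$, but $\thetafol$ is defined on arbitrary clause sets), I would invoke the refutational completeness of the hyper-linking instantiation strategy of \cite{LP92}: applying hyper-linking exhaustively and then replacing the residual variables by fresh constants produces, from any unsatisfiable input, a set containing a finite propositionally unsatisfiable subset — and this set is exactly $\thetafol(S')$, which is therefore unsatisfiable. (If a self-contained argument is preferred, the reasoning is the usual semantic-tree argument: a finite unsatisfiable set of ground instances of $S'$ exists by Herbrand's theorem, and hyper-linking generates, up to the renaming of residual variables to the constants $\bot_\asort$, enough instances of the clauses of $S'$ to close the corresponding semantic tree.)

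Condition~\ref{theta2:mono} (monotonicity) is immediate: if $S' \subseteq S$, every hyper-linking derivation from clauses available in $S'$ is also one from clauses available in $S$, so $\thetafol'(S') \subseteq \thetafol'(S)$, and since residual variables are abstracted clause by clause, $\thetafol(S') \subseteq \thetafol(S)$.

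The real work, and what I expect to be the main obstacle, is Condition~\ref{theta2:pres} (\preservation). The plan rests on the observation that, by Definition~\ref{def:clbase} together with the definition of a target term, every base-sorted subterm occurring in a clause of $\C_\nbind$ is a variable — no function symbol has a base co-domain and a non-base argument, so a base-sorted position can never be filled by a non-variable target term — and that this invariant is preserved by hyper-linking, because the most general unifiers it computes can only bind base-sorted variables to base-sorted variables. Using this, I would prove by induction on the hyper-linking derivation that applying an $\base$-mapping $\am$ to the base-variable positions of any $D \in \thetafol'(S)$ yields a clause of $\thetafol'(\am(S))$: $\am$ can only identify base terms, hence only make the literals being unified ``more unifiable'', so each hyper-linking step producing $D$ from $S$ lifts to a step producing the $\am$-image from $\am(S)$. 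The delicate part — where a naive ``$\am$ is merely a substitution'' argument breaks down — is the final step of $\thetafol$, which replaces residual base variables by the fixed \emph{constant} $\bot_\asort$: one must check that post-composing with $\am$, which then rewrites $\bot_\asort$ into $\am(\bot_\asort)$, reproduces precisely the clause obtained by performing the variable-abstraction step on the matching clause of $\thetafol'(\am(S))$. Getting this bookkeeping right, together with making the lifting of hyper-linking steps under $\am$ fully precise, is the fussiest part of the argument; the other two conditions are essentially routine.
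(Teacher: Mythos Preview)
Your proposal is correct and follows the same approach as the paper: completeness via \cite{LP92}, monotonicity immediate from the definition, and \preservation\ by lifting each hyper-linking step under $\am$ (the paper phrases this simply as ``the unifiers are not affected by the replacement of ground terms: if an mgu maps a variable $x$ to a term $t$ in $S$, then the corresponding mgu will map $x$ to $\am(t)$ in $\am(S)$''). The paper's argument is much terser than yours and does not address your $\bot_\asort$ bookkeeping concern at all; in the intended application (Definition~\ref{def:hcomb} and Section~\ref{sec:mainpf}) all base variables are already grounded to $\speccst_\asort$ or to ground base terms before $\Theta_\nbind$ is invoked, so no residual base variable survives to be abstracted and the issue is moot.
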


\begin{proof}
  In \cite{LP92}, it is proven that $S$ and $\thetafol(S)$ are
  equisatisfiable, thus Condition \ref{theta2:comp} of Definition
  \ref{def:theta2} holds; furthermore, by definition, $\thetafol$ is
  monotonic.  To verify that $\thetafol$ is \preserving, it suffices to
  remark that if a clause $D$ is deducible from a set of clauses $S$
  by the instantiation rule above, then for every \bmapping\ $\am$,
  $\am(D)$ must be deducible from $\thetafol(\am(S))$, since the unifiers
  are not affected by the replacement of ground terms: if an mgu maps
  a variable $x$ to a term $t$ in $S$, then the corresponding mgu will
  map $x$ to $\am(t)$ in $\am(S)$.
 \end{proof}

 Theorem \ref{theo:comp} guarantees that
 $\hcombinproc{\Thetaz}{\thetafol}$ is \complete\ for
 $\hcombin{\Tnatok}{\Tfolne}$.  Note that in general,
 $\hcombinproc{\Thetaz}{\thetafol}$ (and $\thetafol$) are not terminating.
 However, $\hcombinproc{\Thetaz}{\thetafol}$ is terminating if the set of ground
 terms containing no subterm of sort $\snat$ (and distinct from
 $\speccst_\snat$) is finite (for instance if $\F$ contains no function
 symbol of arity greater than $0$ and of a sort distinct from
 $\snat$).

\begin{example}
  Consider the following set of clauses $S$, where $i,j$ denote variables of
  sort $\snat$, $x,y$ denote variables of sort $\asort$, and $\F$
  contains the following symbols: $a,b: \snat$, $c,d: \asort$, $p:
  \snat \times \asort \rightarrow \sbool$ and $q: \snat \times \asort
  \times \asort \rightarrow \sbool$.
\[
\begin{tabular}{ll}
($1$) & $\neg p(i,x) \vee \neg q(i,y) \vee r(i,x,y)$ \\
($2$) & $p(a,c)$ \\
($3)$ & $j \not < b \vee q(j,d)$ \\
($4)$ & $i \not \iseq_2 0 \vee \neg r(i,x,y)$ \\
\end{tabular}
\]

Clauses ($2$) and ($3$) are not in $\T$. Indeed, the non-arithmetic
atom $p(a,c)$ contains a non-variable arithmetic subterm $a$ and ($3)$
contains a literal $j \not < b$ that is not allowed in $\Tnatok$ (see
Definition \ref{def:tnatok}).  Thus these clauses must be reformulated
as follows:

\[
\begin{tabular}{ll}
($2$)' & $i \not \leq a \vee a \not \leq i \vee p(i,c)$ \\
($3$)' & $j \not \leq b-1 \vee q(j,d)$ \\
\end{tabular}
\]

To apply the procedure $\hcombinproc{\Thetaz}{\thetafol}$, we compute the
set $\projecnonbase{S}$ and replace every arithmetic variable occurring
in it by a special constant $\botz$ of sort $\snat$:
\[ \projecnonbase{S} = \left\{
\begin{tabular}{l}
$\neg p(\botz,x) \vee \neg q(\botz,y) \vee r(\botz,x,y)$ \\
$p(\botz,c)$ \\
$q(\botz,d)$ \\
$\neg r(\botz,x,y)$ \\
\end{tabular}\right.
\]

We apply the procedure $\thetafol$. The reader can verify that we obtain
the following clause set:

\[
\thetafol(\projecnonbase{S}) = \left\{
\begin{tabular}{lll}
$\neg p(\botz,\bot) \vee \neg q(\botz,\bot) \vee r(\botz,\bot,\bot)$ \\
$p(\botz,c)$ \\
$q(\botz,d)$ \\
$\neg r(\botz,\bot,\bot)$ \\
$\neg p(\botz,c) \vee \neg q(\botz,d) \vee r(\botz,c,d)$ \\
$\neg r(\botz,c,d)$ \\
\end{tabular}\right.
\]

Next we consider the clauses in $\projecbase{S}$: $\{ i \not \leq a
\vee a \not \leq i, j \not \leq b-1, i \not \iseq_2 0 \}$ and compute
the set $\bsz{\projec{S}{\Tnatok}}$, according to Definition
\ref{def:bsz}.  The terms occurring as the right operands of a symbol
$\leq$ are $\{ a,b-1 \}$.  The least common multiple of all the
natural numbers $k$ such that $\projecbase{S}$ contains a comparison
modulo $k$ is $2$.  Thus $\bsz{\projec{S}{\Tnatok}} = \{ a,b-1,a-1,b-2
\}$. To get the clause set $\hcombinproc{\Thetaz}{\Theta}(S)$, the
substitutions generated by $\Theta$ are combined with all
instantiations of integer variables by elements of
$\bsz{\projec{S}{\Tnatok}}$. This yields: {\small
\[
\begin{tabular}{ll}
$\neg p(a,\bot) \vee \neg q(a,\bot) \vee r(a,\bot,\bot)$ & $p(a,c)$
 \\
$\neg p(b-1,\bot) \vee \neg q(b-1,\bot) \vee r(b-1,\bot,\bot)$ & $p(b-1,c)$ \\
$\neg p(a-1,\bot) \vee \neg q(a-1,\bot) \vee r(a-1,\bot,\bot)$ & $p(a-1,c)$ \\
$\neg p(b-2,\bot) \vee \neg q(b-2,\bot) \vee r(b-2,\bot,\bot)$ &
$p(a-2,c)$ \\
$\neg r(a,\bot,\bot)$ &
$\neg r(a,c,d)$ \\
$\neg r(b-1,\bot,\bot)$ &
$\neg r(b-1,c,d)$ \\
$\neg r(a-1,\bot,\bot)$ &
$\neg r(a-1,c,d)$ \\
$\neg r(b-2,\bot,\bot)$ &
$\neg r(b-2,c,d)$ \\
$\neg p(a,c) \vee \neg q(a,d) \vee r(a,c,d)$ & $q(a,d)$ \\
$\neg p(b-1,c) \vee \neg q(b-1,d) \vee r(b-1,c,d)$ & $q(b-1,d)$\\
$\neg p(a-1,c) \vee \neg q(a-1,d) \vee r(a-1,c,d)$ & $q(a-1,d)$ \\
$\neg p(b-2,c) \vee \neg q(b-2,d) \vee r(b-2,c,d)$ & $q(b-2,d)$ \\
\end{tabular}
\]
} The resulting set of clauses is
$\hcombin{\Tnatok}{\Tfolne}$-unsatisfiable, hence, so is $S$.
\end{example}

\subsubsection{Arrays with Integer Indices and Uninterpreted Elements}


The \theory\ of arrays with integer indices and uninterpreted elements
can be defined as a \nestedcombin\ of the \btheory\ $\Tnatok$ defined
in Section \ref{sect:pa} with a simple \theory\ $\Tearrays =
(\Iall,\Cearrays)$, where the clauses in $\Cearrays$ are built on a
set of variables of sort $\snat$, on a signature containing only
constant symbols of sort $\sarray$ or $\selem$ and a function symbol
$\select: \sarray \times \snat \rightarrow \selem$.
We have assumed that $\Cearrays$ contains no occurrence of the
function symbol $\store$ for convenience. There is no loss of
generality: indeed, every definition of the form $s = \store(t,i,a)$
where $s,t,i,a$ are ground terms can be written as the conjunction of
the following clauses:
\[
\begin{tabular}{l}
$\select(s,i) = v$ \\
$i+1\not \leq z \vee \select(s,z) \iseq \select(t,z)$ \\
$z \not \leq i-1  \vee \select(s,z) \iseq \select(t,z)$ \\
\end{tabular}
\]
It is simple to verify that these three clauses are in
$\Cearrays$. Obviously, the last two clauses are equivalent to $z
\iseq i \vee \select(s,z) \iseq \select(t,z)$.


There exists a straightforward \adequatetwo\ instantiation procedure
for $\Tearrays$: namely the identity function $\id(S) \isdef S$. This
is indeed an instantiation procedure since all the variables occurring
in $\Cearrays$ are of type $\snat$; these variables will already be
instantiated by the instantiation procedure for $\Tnatok$ and the
remaining clause set will be ground.
The following result is a direct consequence of Theorem \ref{theo:comp}:

\begin{proposition}
$\hcombinproc{\Thetaz}{\id}$ is \complete\ for $\hcombin{\Tnatok}{\Tearrays}$.
\end{proposition}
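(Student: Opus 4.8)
The plan is to obtain the statement as an immediate application of Theorem~\ref{theo:comp}: it suffices to check that $\Thetaz$ is \sadequateone\ for the \btheory\ $\Tnatok$ and that $\id$ is \adequatetwo\ for the \ttheory\ $\Tearrays$, and then Theorem~\ref{theo:comp} yields completeness of $\hcombinproc{\Thetaz}{\id}$ for $\hcombin{\Tnatok}{\Tearrays}$ directly, together with monotonicity and \preservation.

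First I would note that $\hcombin{\Tnatok}{\Tearrays}$ is a legitimate \nestedcombin: every axiom and every clause of $\Tnatok$ involves only terms of sort $\snat \in \base$, so it lies in $\clbase$ and $\Tnatok$ is a \btheory; and since $\axof{\Iall} = \emptyset$ and every clause of $\Cearrays$ is built solely from array and element constants, from $\select$ applied to a variable of sort $\snat$, and from equalities between such \target\ terms, every clause of $\Cearrays$ lies in $\clnonbase$, so $\Tearrays$ is a \ttheory. By Theorem~\ref{theo:natok}, $\Thetaz$ is \sadequateone\ for $\basetheory = \Tnatok$, which discharges the first hypothesis of Theorem~\ref{theo:comp}.

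Second, I would verify that $\id$ is \adequatetwo\ for $\Tearrays$ in the sense of Definition~\ref{def:theta2}. The key observation --- which is also what makes $\id$ a genuine instantiation procedure in this setting --- is that every variable occurring in a clause of $\Cearrays$ has sort $\snat$, i.e. is a \emph{base} variable. Consequently, for any $S \in \Cearrays$, every \grounding\ of a clause of $S$ is already ground, so any clause set $S'$ all of whose clauses are \groundings\ of clauses of $S$ is itself ground; for such an $S'$ we have $\id(S') = S'$, which is trivially $\Tearrays$-equisatisfiable with $S'$, so Condition~\ref{theta2:comp} holds. Condition~\ref{theta2:pres} (that $\id$ be \preserving) holds because, for every \bmapping\ $\am$ and every clause $C$ of $S$, $C \in \id(S) = S$ entails $\am(C) \in \am(S) = \id(\am(S))$; and Condition~\ref{theta2:mono} (monotonicity) holds because $S' \subseteq S$ entails $\id(S') = S' \subseteq S = \id(S)$.

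With both hypotheses of Theorem~\ref{theo:comp} established, the conclusion follows at once. There is no genuine obstacle in the argument; the only subtlety worth stating explicitly --- and the surrounding discussion of $\Tearrays$ already states it --- is that using the identity as the \target\ procedure is sound precisely because $\Cearrays$ contains no \target\ variables, so that all the instantiation work is performed by $\Thetaz$ on the base parts of the clauses.
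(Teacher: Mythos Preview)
Your proposal is correct and follows exactly the approach the paper takes: the paper states the result as ``a direct consequence of Theorem~\ref{theo:comp}'' after noting that $\id$ is a valid \adequatetwo\ procedure precisely because all variables in $\Cearrays$ have sort $\snat$, and by Theorem~\ref{theo:natok} $\Thetaz$ is \sadequateone. You spell out the verification of Definition~\ref{def:theta2} for $\id$ in more detail than the paper does, but the route is the same.
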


We provide some examples of properties that have been considered in
\cite{Bradleybook,DBLP:conf/fossacs/HabermehlIV08,GNRZ07}, and can be expressed in
$\hcombin{\Tnatok}{\Tearrays}$ ($t$,$t'$ denotes constant symbols of
sort \sarrays).

\newcommand{\com}[1]{\multicolumn{2}{l}{\texttt{-- #1}}}
\newcommand{\mapfun}[2]{\mathrm{map}(#1,#2)}

{\small
\[
\begin{tabular}{ll}
($1$) & $\forall i,\, a \not \leq i \vee i \not \leq b \vee \select(t,i) \iseq v$ \\
\com{$t$ is constant on $[a,b]$.} \\
($2$) & $\forall i,\, a \not \leq i \vee i \not \leq b \vee \select(t,i) \iseq \select(t',i)$ \\
\com{$t$ and $t'$ coincide on $[a,b]$.} \\
($3$) & $\forall i,j,\, a \not \leq i \vee i \not \leq b \vee \vee c \not \leq j \vee j \not \leq d \vee \select(t,i) \not \iseq  \select(t',j)$ \\
\com{The restriction of $t$ and $t'$ to $[a,b]$ and $[c,d]$ respectively are disjoint.} \\
($4$) & $\forall i,j,\, i \not \iseq_2 0 \vee j \not \iseq_2 1 \vee \select(t,i) \not \iseq \select(t,j)$ \\
\com{The values of $t$ at even indices are disjoint from the ones at odd ones.} \\
($5$) & $\forall i,\, i \not \iseq_2 0 \vee \select(t,i) \iseq \select(t',i) \vee \select(t,i) \iseq \select(t'',i)$ \\
\com{For every even index, the value of $t$ is equal to the value of $t'$ or $t''$.} \\
($6$) & $\forall i,\, i \not \geq 0 \vee i \not \leq d \vee \select(t,i) \not \iseq \bot$ \\
 & $\forall i,\, i \not\geq \su(d) \vee \select(t,i) \iseq \bot$ \\
\com{Array $t$ has dimension $d$.} \\
($7$) & $\forall i,\, \select(\mapfun{f}{t},i) \iseq f(\select(t,i))$ \\
\com{Array $\mapfun{f}{t}$ is obtained from $t$ by iterating function $f$.} \\
\end{tabular}
\]
}

Properties ($1$-$3$) can be expressed in the \emph{Array property
  fragment} (see \cite{Bradleybook}), but not Property ($4$), because
of  condition $i \iseq_2 0$.  Property ($4$) is expressible in the
\emph{Logic for Integer Arrays} (LIA) introduced in
\cite{DBLP:conf/fossacs/HabermehlIV08}, but not Property ($5$), because
there is a disjunction in the value formula.

On the other hand, Properties such as Injectivity cannot be expressed
in our setting:
{\small
\[
\begin{tabular}{ll}
($8$)  & $\forall i,j,\, i \iseq j \vee  \select(t,i) \not \iseq \select(t,j)$ \\
\com{$t$ is injective.} \\
($9$)  & $\forall i,j,\, i \iseq j \vee  \select(t,i) \not \iseq \select(t,j) \vee \select(t,i) \iseq \bot$ \\
\com{$t$ is injective on its domain.} \\
\end{tabular}
\]}
Indeed, the literal $i \iseq j$ is not allowed in $\Cnatok$.


\subsubsection{Arrays with Integer Indices and Interpreted Elements}

\label{sect:intint}
Instead of using the mere \theory\ $\Tearrays$, one can combine the
\theory\ $\Tnatok$ with a richer \theory, with function and predicate
symbols operating on the elements of the arrays.  For instance,
consider the \theory\ $\Tearraysreals = (\Ireals,\Cearraysreals)$,
where $\axof{\Ireals}$ is some axiomatization of real closed fields
over a signature $\F_\mathbb{R}$ and the clauses occurring in
$\Cearraysreals$ are built on a set of variables of sort $\snat$ and
on a signature containing all function symbols in $\F_\mathbb{R}$,
constant symbols of sort $\sarray$ or $\sreal$ and a function symbol
$\select: \sarray \times \snat \rightarrow \sreal$.  Then
$\hcombin{\Tnatok}{\Tearraysreals}$ is the
\theory\ of {arrays with integer indices and real elements}, and an
immediate application of Theorem \ref{theo:comp} yields:

\begin{proposition}
$\hcombinproc{\Thetaz}{\id}$ is \complete\ for $\hcombin{\Tnatok}{\Tearraysreals}$.
\end{proposition}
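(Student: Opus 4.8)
The plan is to derive the statement directly from Theorem~\ref{theo:comp}, applied to the base \theory\ $\Tnatok$ and the \target\ \theory\ $\Tearraysreals$, exactly as in the treatment of $\hcombin{\Tnatok}{\Tearrays}$. By Theorem~\ref{theo:natok}, $\Thetaz$ is a \sadequateone\ instantiation procedure for $\Tnatok$, so the only remaining obligations are to check that $\Tearraysreals$ is a genuine \ttheory\ and that $\id$ is a \adequatetwo\ instantiation procedure for it; the proposition then follows at once.

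First I would confirm that $\Tearraysreals$ is a well-formed \ttheory\ (Definition~\ref{def:basetheory}). Both $\sarray$ and $\sreal$ belong to $\nonbase$, whereas $\snat$ is the only base sort present in the signature and occurs solely as the index sort of $\select$, and no arithmetic \emph{function} symbol is allowed in clauses of $\Cearraysreals$. Consequently every non-variable term occurring in a clause of $\Cearraysreals$ or in $\axof{\Ireals}$ is of a sort in $\nonbase$ and contains no non-variable \bterm\ (its only base-sorted subterms being the $\snat$-variables used as $\select$ indices), and every equational atom relates two {\tterm}s, so by Definition~\ref{def:clbase} we get $\axof{\Ireals}\subseteq\clnonbase$ and $S\subseteq\clnonbase$ for every $S\in\Cearraysreals$. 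The one feature distinguishing this from the $\Tearrays$ case is that $\axof{\Ireals}$ is now nonempty; I would therefore also note that $\Tearraysreals$ is \gdefin, since any standard axiomatization of real closed fields is a countable set of first-order clauses over the sort $\sreal$ (after Skolemization if necessary), all of which again lie in $\clnonbase$ because they contain no base symbol.

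Next I would verify the three conditions of Definition~\ref{def:theta2} for $\id$. Since all variables occurring in clauses of $\Cearraysreals$ are of the base sort $\snat$, every \grounding\ of such a clause is already ground; moreover, inside $\hcombinproc{\Theta_\bind}{\Theta_\nbind}$ the procedure $\Theta_\nbind$ is only ever applied to $\insttop{\projecnonbase{S}}$, in which each base variable has been replaced by the constant $\speccst_\snat$, hence always to a ground clause set (so $\id$, which sends a ground clause set to the set of its own ground instances, is an admissible instantiation procedure in this context, just as argued for $\Tearrays$). Granting this, Condition~\ref{theta2:comp} holds trivially because $\id(S')=S'$ is equisatisfiable with $S'$; $\id$ is \preserving\ because for every \bmapping\ $\am$ and every $C\in\id(S)=S$ we have $\am(C)\in\am(S)=\id(\am(S))$; and $\id$ is monotonic because $S'\subseteq S$ forces $\id(S')=S'\subseteq S=\id(S)$.

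With both hypotheses in hand, Theorem~\ref{theo:comp} yields that $\hcombinproc{\Thetaz}{\id}$ is \complete\ for $\hcombin{\Tnatok}{\Tearraysreals}$. I do not expect any genuine obstacle: the argument is essentially bookkeeping, and the single step that warrants care is the first one, namely confirming that allowing the element theory to be real closed fields does not violate the syntactic conditions defining a \ttheory. This goes through precisely because the base (arithmetic) symbols never occur underneath the symbols of $\Tearraysreals$ except through the index variables of $\select$, so $\axof{\Ireals}$ and every clause set of $\Cearraysreals$ stay inside $\clnonbase$.
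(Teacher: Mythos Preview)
Your proposal is correct and follows exactly the route the paper takes: the paper simply states that the proposition is ``an immediate application of Theorem~\ref{theo:comp}'' without spelling out the verification, whereas you have carefully checked that $\Tearraysreals$ is a \ttheory\ and that $\id$ satisfies Definition~\ref{def:theta2}, which is precisely what that application requires. Your treatment of the one delicate point---that $\id$ is only an instantiation procedure because all variables in $\Cearraysreals$ are of the base sort $\snat$, so the sets on which $\id$ is actually invoked are already ground---mirrors the paper's own remark about $\Tearrays$.
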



To model arrays with integer indices and integer elements, it is
necessary to use a combination of the \theory\ $\Tnatok$ with a
\theory\ containing the symbols in $\Tnatok$: $0: \snat$, $s: \snat
\rightarrow \snat$, $\leq: \snat \times \snat \rightarrow \sbool$,
etc.  However, this is \emph{not} permitted in our approach since the
clause sets of the \ttheory\ would contain function symbols whose
co-domain would be a sort of the {\btheory} (namely $\snat$), thus
contradicting the conditions on $\base$ and $\nonbase$ (see Section
\ref{sect:bdef}).  A solution is to use a \emph{copy} of
the sort $\snat$ and of every symbol of co-domain $\snat$.  We denote
by $\Tearraysint$ the \theory\ $(\Inat',\Carraysint)$ where
$\axof{\Inat'}$ is the image of $\axof{\Inat}$ by the previous
transformation and where the clause sets in $\Cearraysint$ are built
on a set of variables of sort $\snat$ and on a signature containing
all function symbols $0',s',\leq'$,\ldots in $\axof{\Inat'}$, constant
symbols of sort $\sarray$ or $\snat'$ and a function symbol $\select:
\sarray \times \snat \rightarrow \snat'$.  Then
$\hcombin{\Tnatok}{\Tearraysint}$ is a \theory\ of {arrays with
  integer indices and integer elements}, and by Theorem
\ref{theo:comp}, $\hcombinproc{\Thetaz}{\id}$ is \complete\ for
$\hcombin{\Tnatok}{\Tearraysint}$.

Note however that, due to the fact that the sort symbols are renamed,
equations between integer elements and integer indices are not
permitted: indices cannot be stored into arrays and terms of the form
$\select(t,\select(t,i))$ are forbidden.  However, the sharing of a
constant symbol $c$ between the two sorts $\snat$ and $\snat'$ (as in
the equation: $\select(t,c)\iseq c$) is possible, by adding ground
axioms of the form: $k \iseq c \Rightarrow k' \iseq c'$, where $c'$
denotes the copy of $c$, $k$ is any integer in $\snat$ and $k'$
denotes its copy in $\snat'$. Let $A$ denote this set of axioms; it is
obvious that $A$ is countably infinite. It is clear that
$\hcombinproc{\Thetaz}{\id}(S\cup A) = \hcombinproc{\Thetaz}{\id}(S)
\cup A$, so that the instantiation procedure is not affected by this
addition. Thus these axioms can be simply removed afterward by
``merging'' $\snat$ and $\snat'$ and by replacing $c'$ by $c$ (it is
straightforward to verify that this transformation  preserves
satisfiability).

We provide some examples. $\leq'$ and $+'$ are renaming of the symbols
$\leq$ and $+$ respectively. Notice that the indices of the arrays are
of sort $\snat$, whereas the elements are of sort $\snat'$. The
following properties can be expressed in
$\hcombin{\Tnatok}{\Tearraysint}$:

 \[
\begin{tabular}{ll}
($1$) & $\forall i,j, i \not \leq j \vee \select(t,i) \leq' \select(t,j)$ \\
\com{$t$ is sorted.} \\
($2$) & $\forall i,j, a \not \leq i \vee i \not \leq b \vee c \not \leq j \vee j \not \leq c \vee \select(t,i) \leq' \select(t',j)$ \\
\com{The values of $t$ at $[a,b]$ are lower than the ones of $t'$ at $[c,d]$.} \\
$(3)$ & $\forall i, i \not \iseq_2 0 \vee i \not \leq n \vee \select(t,i) \iseq' \select(t',i)+'\select(t'',i)$ \\
\com{For every even index lower than $n$, $t$ is the sum of $t'$ and $t''$.}
\end{tabular}
\]

Here are some examples of properties that \emph{cannot} be handled:
\[
\begin{tabular}{ll}
($4$) & $\forall i, \select(t,i) \iseq i$ \\
\com{$t$ is the identity.} \\
($5$) & $\forall i, \select(t,i) - \select(t,i+1) \leq 2$ \\
\com{The distance between the values at two consecutive index}\\
 \com{is at most $2$.}
\end{tabular}
\]
Property ($4$) is not in $\hcombin{\Tnatok}{\Tearraysint}$ because
there is an equation relating an element of sort $\snat$ (i.e. an
index) to an element of sort $\snat' \not= \snat$ (an element).
Property ($5$) could be expressed in our setting as $\forall i,j,\, j \not
\iseq i+1 \vee \select(t,i) - \select(t,j) \leq 2$ but  the atom
$j \not \iseq i+1$ is not in $\Tnatok$.  Property ($5$) can be
expressed in the logic LIA (see \cite{DBLP:conf/fossacs/HabermehlIV08}). This shows that the expressive power of this logic
is not comparable to ours.

These results extend straightforwardly to multidimensional arrays.

\subsubsection{Arrays with Translations on Arrays Indices}

In some cases, properties relating the value of an array at an index
$i$ to the value at index $i+k$ for some natural number $k$ can be
expressed by reformulations.

\newcommand{\amap}{\lambda}
\newcommand{\shiftable}{shiftable}

\begin{definition}
\label{def:shift}
Let $S$ be a clause set, containing clauses that are pairwise
variable-disjoint.  Let $\amap$ be a function mapping every array
constant to a ground term of sort $\snat$.  $S$ is \emph{\shiftable\
  relatively to $\amap$} iff the following conditions hold:

\begin{enumerate}
\item{Every clause in $S$ is of the form $C \vee D$, where $D$ is a
    clause in $\Tearraysint$ and every literal in $C$ is of one of the
    following form: $i \not \leq j+s$, $i \not \leq s$, $s \not \leq
    i$, $i \not \iseq_k s$, where $i,j$ are variables of sort $\snat$,
    $s$ is a ground term of sort $\snat$ and $k$ is a natural number.}
\item{For every clause $C \in S$ and for every literal $i \not \leq
    j+s$ occurring in $C$, where $i,j$ are variables and $s$ is a term
    of sort $\snat$, $C$ contains two terms of the form $\select(t,i)$
    and $\select(t',j)$ where $\amap(t') - \amap(t)$ is equivalent to
     $s$. \label{shift:c1}}
\item{If $C$ contains two terms of the form $\select(t,i)$ and
    $\select(t',i)$ then $\amap(t) = \amap(t')$. \label{shift:c2}}
\item{If $C$ contains a equation $t \iseq t'$ between arrays then
    $\amap(t) = \amap(t')$. \label{shift:c3}}
\end{enumerate}
\end{definition}

The existence of such a function $\amap$ is easy to determine:
conditions ($2$-$4$) above can immediately be translated into
arithmetic constraints on the $\amap(t)$'s, and the satisfiability of this
set of constraints can be tested by using any decision procedure for
Presburger arithmetic.

\newcommand{\shift}[1]{\mathrm{shift}(#1)}

We define the following transformation of clause sets:

\begin{definition}
  Let $t \mapsto t'$ be an arbitrarily chosen function mapping all the
  constants $t$ of sort $\sarrays$ to pairwise distinct fresh
  constants $t'$ of sort $\sarrays$.  We denote by $\shift{S}$ the
  clause set obtained from $S$ by applying the following rules:
\begin{itemize}
\item{every clause $C$ containing a term of the form $\select(t,i)$
    (where $i$ is a variable) is replaced by
$C \{ i \mapsto i-\amap(t) \}$;}
\item{then, every term of the form $\select(t,s)$ is replaced by
$\select(t',s+\amap(t))$.}
\end{itemize}
\end{definition}

\begin{lemma}
\label{def:st}
Let $S$ be a \shiftable\ clause set. Then:

\begin{itemize}
\item{$\shift{S}$ and $S$ are equisatisfiable.}
\item{$\shift{S}$ is in $\hcombin{\Tnatok}{\Tearraysint}$.}
\end{itemize}
\end{lemma}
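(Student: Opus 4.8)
The plan is to handle both bullets through one device: reading the transformation $\shift{\cdot}$ semantically as ``rename each array constant $t$ by a fresh constant $t'$ and shift its index domain by $\amap(t)$'', and then pushing this idea through both models and syntax. For equisatisfiability I would set up an interpretation translation in both directions. Given a model $M$ of $S$ lying in the interpretation class of $\hcombin{\Tnatok}{\Tearraysint}$ (for whatever parameters $\TT,\kk,\specconst$ witness the membership of $S$), let $M'$ coincide with $M$ on every sort and every function and predicate symbol, except that each fresh array constant $t'$ is interpreted as the array obtained from $\valueof{t}{M}$ by shifting its index domain by $\valueof{\amap(t)}{M}$, and $\specconst$ is reinterpreted as a sufficiently large integer. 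Conversely, from a model $M'$ of $\shift{S}$ I would obtain $M$ by interpreting each $t$ as $\valueof{t'}{M'}$ shifted back. That the constructed interpretation stays in the right class is immediate: the axioms of $\hcombin{\Tnatok}{\Tearraysint}$ — the $\Tnatok$-axioms together with the copied Presburger axioms of $\Tearraysint$ — contain no array constant and are therefore preserved, while the finitely many upper-bound axioms $\specconst > u+\kk$ hold after the (re)choice of $\specconst$; this last step is legitimate because, by Definitions~\ref{def:tnatok} and~\ref{def:shift}, the constant $\specconst$ occurs neither in $S$ nor in $\shift{S}$.

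The technical core is a clause-wise correspondence, to be proved literal by literal. Fix a clause $E = C\vee D$ of $S$, let $\mu_E$ be the substitution of Step~1 that maps every index variable $i$ of $E$ occurring under a $\select$ to $i-\amap(t_i)$ — well defined precisely because Condition~\ref{shift:c2} forces a single value $\amap(t_i)$ for each such $i$ — and, for a ground substitution $\sigma$ of $\var(\shift{E}) = \var(E)$, let $\sigma'$ agree with $\sigma$ except that $\sigma'(i) = \sigma(i)-\amap(t_i)$ on those index variables. I would show $M'\models(\shift{E})\sigma$ iff $M\models E\sigma'$. For a subterm $\select(t,i)$ with $i$ a variable, Steps~1--2 turn it into $\select(t',i)$, whose $M'$-value is $\valueof{\select(t,\sigma(i)-\amap(t))}{M} = \valueof{\select(t,i)\sigma'}{M}$ by construction of $M'$. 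For an arithmetic literal $i\not\leq j+s$, $\mu_E$ produces $(i-\amap(t_i))\not\leq(j-\amap(t_j))+s$, and Condition~\ref{shift:c1} — which guarantees $\amap(t_j)-\amap(t_i)\equiv s$ — makes this literal, evaluated under $\sigma$ in $M'$, express exactly $\sigma(i)\leq\sigma(j)$, which is the $M$-truth value of $(i\not\leq j+s)\sigma'$. The remaining literal shapes $i\not\leq s$, $s\not\leq i$, $i\not\iseq_k s$ and all literals of the target part $D$ reduce to these same two computations, and an array equation $t_1\iseq t_2$ is handled through Condition~\ref{shift:c3}: it forces $\amap(t_1)=\amap(t_2)$, so the shifted arrays of $M'$ coincide exactly when $\valueof{t_1}{M}=\valueof{t_2}{M}$. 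Since $\sigma\mapsto\sigma'$ is a bijection on the ground substitutions of $\var(E)$, this correspondence immediately yields both directions of the equisatisfiability claim.

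For the second bullet, by Definition~\ref{def:hcombin} it then suffices to check that, for a suitable choice of $\Tnatok$'s parameters, each clause of $\shift{S}$ splits into a base part meeting the syntactic condition of $\Cnatok$ and a target part that is a clause of $\Tearraysint$. After $\mu_E$ and Presburger normalization, the base literals become $i\not\leq j$, $i\not\leq(s+\amap(t_i))$, $(s+\amap(t_i))\not\leq i$ and $i\not\iseq_k(s+\amap(t_i))$ — exactly the forms allowed by Definition~\ref{def:tnatok} — once the finitely many new right-hand ground terms $s+\amap(t_i)$ are added to $\TT$ and $\kk$ is kept as a common multiple of the moduli occurring in $S$; and the target part, in which every $\select(t,i)$ has become $\select(t',i)$, is still built only on $\snat$-variables, $\sarray$- and $\snat'$-constants, the primed arithmetic symbols and $\select$, hence is a clause of $\Tearraysint$. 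I expect the real work to be not any single deep step but the disciplined bookkeeping inside the clause-wise correspondence: pinning down the per-clause substitution $\mu_E$ and its well-definedness via Condition~\ref{shift:c2}, verifying that after Step~2 the shift cancels on variable indices while persisting inside the arithmetic literals, and checking that those shifted literals normalize into the narrow fragment $\Cnatok$ — which is where the precise form of Condition~\ref{shift:c1} must dovetail with Definition~\ref{def:tnatok}, and where $\TT$ has to be enlarged without disturbing the interpretation translation (safe only because $\specconst$ is a fresh symbol).
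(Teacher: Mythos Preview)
Your proposal is correct and follows essentially the same approach as the paper. The paper's own proof is much terser---it argues equisatisfiability in two lines (the substitution $i\mapsto i-k$ is an equivalence since $i$ ranges over all integers; interpreting $t'$ by $\select(t',i)\isdef\select(t,i-\amap(t))$ handles the second rewriting step), and for membership in $\hcombin{\Tnatok}{\Tearraysint}$ it only spells out the cancellation for the literal shape $i\not\leq j+s$ via Condition~\ref{shift:c1} and the well-definedness of the per-variable shift via Condition~\ref{shift:c2}---whereas you unpack the same ideas into an explicit bijection on ground substitutions and a full case analysis over all literal shapes, and you are more careful than the paper about the parameters $\TT,\kk,\specconst$ of $\Tnatok$ needing adjustment to absorb the new ground right-hand sides $s+\amap(t_i)$.
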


\begin{proof}
  It is clear that for every clause $C$ in $S$, $C \equiv C \{ i
  \mapsto i-k \}$: since $i$ ranges over all integers, $i$ and $i-k$
  range over the same set.  The replacement of $\select(t,s)$ by
  $\select(t',s+\amap(t))$ obviously preserves sat-equivalence: it
  suffices to interpret $t'$ as the array defined by the relation:
  $\select(t',i) \isdef \select(t,i-\amap(t))$. Thus $\shift{S}$ and
  $S$ are equisatisfiable.

  We prove that $\shift{S}$ is in $\hcombin{\Tnatok}{\Tearraysint}$.
  By Condition \ref{shift:c2} of Definition \ref{def:shift}, if a
  clause $C\{ i \mapsto i-\amap(t) \}$ contains a term of the form
  $\select(s,i-\amap(t))$ then we must have $\amap(s) = \amap(t)$,
  thus this term is replaced by $\select(s',i)$ when the second rule
  above is applied. Consequently, the non-arithmetic part of the
  resulting clause cannot contain any non-variable term of sort
  $\snat$.  Now assume that $C$ contains an arithmetic literal of the
  form $i \leq j+s$. Then by condition \ref{shift:c1}, $C$ also
  contains terms of the form $\select(t,i)$ and $\select(t',j)$, where
  $\amap(t') - \amap(t)$ is equivalent to $s$.  Hence, the clause in
  $\shift{S}$ corresponding to $C$ contains the literal $i-\amap(t)
  \leq j-\amap(t')+s \equiv i \leq j - (\amap(t') - \amap(t))+s \equiv
  i\leq j$.
\end{proof}

We provide an example in which this result applies.

\begin{example}
Consider for instance the following clause set:
\[
S = \left\{\begin{tabular}{ll}
($1$) & $\forall i,j,\, a \not \leq i \vee i \not \leq b \vee j \not \iseq i-a \vee \select(s,i) \iseq \select(t,j)$ \\
\com{$s$ is identical to $t$ up to a shift of length $a$.} \\
($2$) & $\forall i,j,\, a \not \leq i \vee i \not \leq b \vee j \not \iseq i-a \vee \select(u,i) \iseq \select(s,j)$ \\
\com{$u$ is identical to $s$ up to a shift of length $a$.} \\
($3$) & $c \geq a+a$ \\
($4$) & $c \leq b$ \\
($5$) & $i \not \iseq c \vee j \not \iseq c-a-a \vee \select(u,c) \not \iseq \select(t,j)$ \\
\com{$u$ is not identical to $t$ up to a shift of length $a+a$.} \\
\end{tabular}\right.
\]
It is simple to check that $S$ is \shiftable\ relatively to the mapping:
$\amap(u) = a+a$, $\amap(s) = a$ and $\amap(t) = 0$.
According to Definition \ref{def:st}, $S$ is reformulated as follows:

\[
\shift{S} = \left\{\begin{tabular}{ll}
($1'$) & $\forall i, 0 \not \leq i \vee i \not \leq b-a \vee j \not \iseq i \vee \select(s',i) \iseq \select(t',j)$ \\
($2'$) & $\forall i, 0 \not \leq i \vee i \not \leq b-a \vee j \not \iseq i \vee \select(u',i) \iseq \select(s',j)$ \\
($3$) & $c \geq a+a$ \\
($4$) & $c \leq b$ \\
($5$) & $i \not \iseq c \vee j \not \iseq c-a-a \vee \select(u',c) \not \iseq \select(t',j)$ \\
\end{tabular}\right.
\]
$\shift{S}$ and $S$ are equisatisfiable, and $\shift{S}$ belongs to
$\hcombin{\Tnatok}{\Tearraysint}$. The unsatisfiability of $\shift{S}$
can be proven by applying the procedure $\hcombinproc{\Thetaz}{\id}$.
\end{example}

\subsubsection{Nested Arrays}

An interesting feature of this approach is that it can be applied
recursively, using as base and/or \target\ \theories\ some nested
combination of other \theories.

We denote by $\Tnatok'$ a copy of the specification $\Tnatok$ in which
the symbols $\snat$, $0$, $s$, $\leq$, \ldots are renamed into
$\snat'$, $0'$, $s'$, $\leq'$, \ldots We denote by $\Thetaz'$ the
corresponding instantiation procedure, as defined by Definition
\ref{def:bsz}.  Let $\Tearraysint'$ be a copy of the \theory\
$\Tearraysint$, in which the symbols $\snat'$, $0'$, $s'$, $\leq'$,
$\select$\ldots are renamed into $\snat''$, $0''$, $s''$, $\leq''$,
$\select'$ \ldots
 Let $\Tarraysintintint \isdef \hcombin{\Tnatok}{\hcombin{\Tnatok'}{\Tearraysint'}}$.

\begin{proposition}
  $\hcombinproc{\Thetaz}{\hcombinproc{\Thetaz'}{\id}}$ is complete for
  $\Tarraysintintint$.
\end{proposition}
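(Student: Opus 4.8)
The plan is to obtain the result by applying Theorem~\ref{theo:comp} twice, the second application using the procedure produced by the first as its \ttheory\ instantiation procedure. Concretely, it suffices to check that $\Thetaz$ is \sadequateone\ for $\Tnatok$, that $\hcombinproc{\Thetaz'}{\id}$ is \adequatetwo\ for $\hcombin{\Tnatok'}{\Tearraysint'}$ \emph{viewed as the \ttheory\ over the \btheory\ $\Tnatok$}, and then to invoke Theorem~\ref{theo:comp} for the outer combination $\hcombin{\Tnatok}{\hcombin{\Tnatok'}{\Tearraysint'}}$.

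First I would treat the inner combination. Since $\Tnatok'$, $\Thetaz'$ and $\Tearraysint'$ are obtained from $\Tnatok$, $\Thetaz$ and $\Tearraysint$ by a bijective renaming of sorts and function symbols, Theorem~\ref{theo:natok} transfers literally and shows that $\Thetaz'$ is \sadequateone\ for $\Tnatok'$; and exactly as in the case of $\Tearraysint$, the identity procedure $\id$ is \adequatetwo\ for $\Tearraysint'$, because every variable occurring in a clause set of $\Tearraysint'$ has the inner base sort, so that an \grounding\ of such a set (taken in the inner sense) is ground and Condition~\ref{theta2:comp} of Definition~\ref{def:theta2} collapses to the trivial equisatisfiability of a ground set with itself, while Conditions~\ref{theta2:pres} and~\ref{theta2:mono} hold for $\id$ by inspection. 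Theorem~\ref{theo:comp} then yields that $\hcombinproc{\Thetaz'}{\id}$ is \complete\ for $\hcombin{\Tnatok'}{\Tearraysint'}$, and, by the ``furthermore'' part of that theorem, that it is monotonic and \preserving\ --- the latter with respect to the \emph{inner} base sort.

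It remains to verify that $\hcombinproc{\Thetaz'}{\id}$ is \adequatetwo\ for $\hcombin{\Tnatok'}{\Tearraysint'}$ \emph{with respect to the outer base \theory\ $\Tnatok$}; then, $\Thetaz$ being \sadequateone\ for $\Tnatok$ by Theorem~\ref{theo:natok}, Theorem~\ref{theo:comp} applied to $\hcombin{\Tnatok}{\hcombin{\Tnatok'}{\Tearraysint'}}$ gives the claim. Condition~\ref{theta2:mono} is the monotonicity just obtained. For Condition~\ref{theta2:comp}, one observes that an \grounding\ taken in the outer sense of a clause set $S$ of $\hcombin{\Tnatok'}{\Tearraysint'}$ affects $S$ only through its outer-base variables, by substituting outer-base ground terms for them; since the outer base sort carries no function or predicate symbol of $\hcombin{\Tnatok'}{\Tearraysint'}$, such terms are uninterpreted from the point of view of that \theory\ (this is the situation handled by Lemma~\ref{lem:gene}), so the resulting set is again a clause set of $\hcombin{\Tnatok'}{\Tearraysint'}$ and the equisatisfiability between it and its image under $\hcombinproc{\Thetaz'}{\id}$ is an instance of the completeness established above. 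The one genuinely delicate point is Condition~\ref{theta2:pres}: the ``furthermore'' part of Theorem~\ref{theo:comp} only delivers \preservation\ for the \emph{inner} base sort, whereas here it is needed for the outer one. This is settled by noting that no function symbol of $\hcombin{\Tnatok'}{\Tearraysint'}$ has the outer base sort as co-domain, so every outer \bmapping\ acts as the identity on all expressions that enter the computation of $\hcombinproc{\Thetaz'}{\id}$, whence \preservation\ for the outer base sort holds trivially. With the three conditions of Definition~\ref{def:theta2} established, Theorem~\ref{theo:comp} gives that $\hcombinproc{\Thetaz}{\hcombinproc{\Thetaz'}{\id}}$ is \complete\ for $\Tarraysintintint$.

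The step I expect to be the real obstacle is exactly this re-establishment of the \adequatetwo\ conditions at the \emph{outer} level: the two levels of nesting rest on different base sorts, so the monotonicity and \preservation\ obtained from the first use of Theorem~\ref{theo:comp} are stated for the inner base sort and have to be re-derived for the outer one. The observation that makes this routine is that the outer base sort never occurs as the co-domain of a symbol of $\hcombin{\Tnatok'}{\Tearraysint'}$ (it enters its clauses only as variables, if at all), so the relevant maps act trivially; granting this, the entire argument is a mechanical chaining of Theorem~\ref{theo:comp} with Theorem~\ref{theo:natok}.
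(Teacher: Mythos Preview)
Your proposal is correct and follows exactly the approach the paper intends: the proposition is stated in the paper without proof, as a direct consequence of applying Theorem~\ref{theo:comp} twice (first to the inner combination, then to the outer one), relying on the ``furthermore'' clause of that theorem to propagate monotonicity and \preservation. Your explicit verification of the three conditions of Definition~\ref{def:theta2} for $\hcombinproc{\Thetaz'}{\id}$ at the outer level, and in particular your observation that the outer base sort $\snat$ does not appear as the co-domain of any symbol of the inner \theory, is precisely the bookkeeping the paper leaves to the reader.
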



In $\Tarraysintintint$, the
(integer) indices of an array $t$  can themselves be stored into arrays
of integers, \emph{but of a different type than $t$}.

\begin{example}
  The following clause set is $\Tarraysintintint$-unsatisfiable (for
  the sake of readability we use $t \not \iseq s$ as a shorthand for
  $t \not \leq s \vee t \not \leq s$):

\[
\begin{tabular}{ll}
($1$) & $i \leq j \vee \select(t,i) \leq \select(t,j)$ \\
\com{$t$ is sorted.} \\
($2$) & $i' \leq j' \vee \select'(t',i') \leq' \select'(t',j')$ \\
\com{$t'$ is sorted.} \\
($3$) &  $a \leq b$ \\
($4$) & $x \not \iseq a \vee y \not \iseq b \vee x' \not \iseq \select(t,x) \vee y'  \not \iseq \select(t,y)$ \\
& \hspace*{1cm} $\vee \select'(t',x') > \select(t',y')$. \\
\com{$t' \circ t$ is not sorted.}
\end{tabular}
\]


We describe the way the procedure works on this very simple but
illustrative example.  According to the definition of
$\hcombinproc{\Thetaz}{\hcombinproc{\Thetaz'}{\id}}$, the variables
$i$, $j$, $x$ and $y$ are replaced by a special symbol $\speccst$ and
the instantiation procedure $\hcombinproc{\Thetaz'}{\id}$ is applied.
The variables $i',j',x',y'$ are replaced by a constant symbol
$\speccst'$ and the procedure $\id$ is applied on the resulting clause
set (in a trivial way, since this set is ground).  Next, we apply the
procedure $\Thetaz'$.  According to Definition \ref{def:bsz},
$\Thetaz'$ instantiates the variables $i',j',x',y'$ by
$\select(t,\speccst)$.  This substitution is applied to the original
clause set and the procedure $\Thetaz$ is invoked. The variables $i$,
$j$, $x$ and $y$, and the constant symbol $\speccst$ are replaced by
$\{ a,b \}$.  After obvious simplifications, we obtain the following
set of instances:

\[
\begin{tabular}{c}
$a \leq b \vee \select(t,a) \leq \select(t,b)$ \\
$b \leq a \vee \select(t,b) \leq \select(t,a)$ \\
$\select(t,a) \leq \select(t,a) \vee \select'(t',\select(t,a)) \leq' \select'(t',\select(t,a))$ \\
$\select(t,a) \leq \select(t,b) \vee \select'(t',\select(t,a)) \leq' \select'(t',\select(t,b))$ \\
$\select(t,b) \leq \select(t,b) \vee \select'(t',\select(t,b)) \leq' \select'(t',\select(t,b))$ \\
$\select(t,b) \leq \select(t,a) \vee \select'(t',\select(t,a)) \leq' \select'(t',\select(t,a))$ \\
$a \leq b$\\
$\select'(t',\select(t,a)) > \select'(t',\select(t,b))$
\end{tabular}
\]

At this point, $\leq'$ may be simply replaced by $\leq$ (this
operation obviously preserves equisatisfiability) and the resulting clause
set can be refuted by any SMT-solver handling ground equality and
integer arithmetic.
\end{example}

Such nested array reads are outside the scope of the Array property
fragment of \cite{Bradleybook} and of the Logic LIA of
\cite{DBLP:conf/fossacs/HabermehlIV08}. They are not subsumed either by the extensions of the theory of arrays considered in \cite{GNRZ07}. Note that, due to the fact that we use distinct
renamings of the \theory\ of integers, equations such as
$\select(t',\select(t,a)) \iseq \select(t',a)$ are forbidden (if
arrays are viewed as heaps, this means that there can be no equation
between pointers and referenced values).

\section{Discussion}

\label{sect:disc}

In this paper we have introduced a new combination method of
instantiation schemes and presented sufficient conditions that
guarantee the completeness of the resulting instantiation scheme. As
evidenced by the examples provided in Section \ref{sect:appl}, this
combination method permits to obtain instantiation procedures for
several theories that are quite expressive, at almost no cost. One
direct consequence of these results is that it should be possible for
developers of SMT solvers to focus on the design of efficient decision
procedures for a few \emph{basic} theories, such as, e.g., the theory
of equality with uninterpreted function symbols (EUF) or Presburger
arithmetic, and obtain efficient SMT solvers for a large panel of
theories. 

This combination method may seem inefficient, since exponentially many
ground clauses may be generated, except for the trivial cases.  An
interesting line of research is to investigate how incremental
techniques can be implemented and the instantiations controlled so
that the (un)satisfiability of the clause set under consideration can
be detected before all clauses are instantiated in all possible
ways. For instance, we believe it is possible -- but this will
probably depend on $\basetheory$ and $\nonbasetheory$ -- to devise
more subtle strategies that begin by replacing {\bvariable}s with the
constants $\diam_\asort$ and applying the instantiation procedure for
$\nonbasetheory$, and deriving additional information from the
resulting set of ground clauses to avoid having to instantiate all
{\bvariable}s in all possible ways. Further investigations into this
line of work could lead to the design of more powerful instantiation
procedures that could enlarge the scope of modern SMT solvers by making them able to
handle efficiently
more expressive classes of quantified formul{\ae}.

\bibliography{biblio,Nicolas.Peltier}

\begin{thebibliography}{10}

\bibitem{Abadi2010153}
A.~Abadi, A.~Rabinovich, and M.~Sagiv.
\newblock Decidable fragments of many-sorted logic.
\newblock {\em Journal of Symbolic Computation}, 45(2):153 -- 172, 2010.

\bibitem{AlthausKW09}
E.~Althaus, E.~Kruglov, and C.~Weidenbach.
\newblock Superposition modulo linear arithmetic sup(la).
\newblock In S.~Ghilardi and R.~Sebastiani, editors, {\em FroCoS 2009}, volume
  5749 of {\em LNCS}, pages 84--99. Springer, 2009.

\bibitem{BG94}
L.~Bachmair and H.~Ganzinger.
\newblock Rewrite-based equational theorem proving with selection and
  simplification.
\newblock {\em Journal of Logic and Computation}, 3(4):217--247, 1994.

\bibitem{BachmairGW94}
L.~Bachmair, H.~Ganzinger, and U.~Waldmann.
\newblock Refutational theorem proving for hierachic first-order theories.
\newblock {\em Appl. Algebra Eng. Commun. Comput.}, 5:193--212, 1994.

\bibitem{Baumgartner:Tinelli:ModelEvolutionCalculus:CADE:2003}
P.~Baumgartner and C.~Tinelli.
\newblock {The Model Evolution Calculus}.
\newblock In F.~Baader, editor, {\em CADE-19 -- The 19th International
  Conference on Automated Deduction}, volume 2741 of {\em LNAI}, pages
  350--364. Springer, 2003.

\bibitem{Bradleybook}
A.~R. Bradley and Z.~Manna.
\newblock {\em The Calculus of Computation: Decision Procedures with
  Applications to Verification}.
\newblock Springer-Verlag New York, Inc., Secaucus, NJ, USA, 2007.

\bibitem{Bradley}
A.~R. Bradley, Z.~Manna, and H.~B. Sipma.
\newblock What's decidable about arrays?
\newblock In E.~A. Emerson and K.~S. Namjoshi, editors, {\em Proc. VMCAI-7},
  volume 3855 of {\em LNCS}, pages 427--442. Springer, 2006.

\bibitem{DBLP:journals/amai/BruttomessoCFGS09}
R.~Bruttomesso, A.~Cimatti, A.~Franz{\'e}n, A.~Griggio, and R.~Sebastiani.
\newblock Delayed theory combination vs. nelson-oppen for satisfiability modulo
  theories: a comparative analysis.
\newblock {\em Ann. Math. Artif. Intell.}, 55(1-2):63--99, 2009.

\bibitem{tata97}
H.~Comon, M.~Dauchet, R.~Gilleron, F.~Jacquemard, D.~Lugiez, S.~Tison, and
  M.~Tommasi.
\newblock Tree automata techniques and applications.
\newblock Available on: {\tt http://www.grappa.univ-lille3.fr/tata}, 1997.

\bibitem{CD94}
H.~Comon and C.~Delor.
\newblock Equational formulae with membership constraints.
\newblock {\em Information and Computation}, 112(2):167--216, August 1994.

\bibitem{DG79}
B.~Dreben and W.~D. Goldfarb.
\newblock {\em The Decision Problem, Solvable Classes of Quantificational
  Formulas}.
\newblock Addison-Wesley, 1979.

\bibitem{EP10a}
M.~Echenim and N.~Peltier.
\newblock {Instantiation of SMT problems modulo Integers}.
\newblock In {\em AISC 2010 (10th International Conference on Artificial
  Intelligence and Symbolic Computation)}, LNCS. Springer, 2010.

\bibitem{EP10c}
M.~Echenim and N.~Peltier.
\newblock An instantiation scheme for satisfiability modulo theories.
\newblock {\em Journal of Automated Reasoning}, 2010.

\bibitem{DBLP:conf/frocos/Fontaine09}
P.~Fontaine.
\newblock Combinations of theories for decidable fragments of first-order
  logic.
\newblock In S.~Ghilardi and R.~Sebastiani, editors, {\em FroCos}, volume 5749
  of {\em Lecture Notes in Computer Science}, pages 263--278. Springer, 2009.

\bibitem{DBLP:conf/lpar/FontaineRZ04}
P.~Fontaine, S.~Ranise, and C.~G. Zarba.
\newblock Combining lists with non-stably infinite theories.
\newblock In F.~Baader and A.~Voronkov, editors, {\em LPAR}, volume 3452 of
  {\em Lecture Notes in Computer Science}, pages 51--66. Springer, 2004.

\bibitem{DBLP:conf/lics/Ganzinger01}
H.~Ganzinger.
\newblock Relating semantic and proof-theoretic concepts for polynominal time
  decidability of uniform word problems.
\newblock In {\em LICS}, pages 81--92, 2001.

\bibitem{GanzingerKorovin-03-lics}
H.~Ganzinger and K.~Korovin.
\newblock New directions in instantiation-based theorem proving.
\newblock In {\em Proc.\ 18th IEEE Symposium on Logic in Computer
  Science,(LICS'03)}, pages 55--64. IEEE Computer Society Press, 2003.

\bibitem{dMGe}
Y.~Ge and L.~M. de~Moura.
\newblock Complete instantiation for quantified formulas in satisfiabiliby
  modulo theories.
\newblock In A.~Bouajjani and O.~Maler, editors, {\em CAV 2009}, volume 5643 of
  {\em LNCS}, pages 306--320. Springer, 2009.

\bibitem{springerlink:10.1007/s10472-007-9078-x}
S.~Ghilardi, E.~Nicolini, S.~Ranise, and D.~Zucchelli.
\newblock Decision procedures for extensions of the theory of arrays.
\newblock {\em Annals of Mathematics and Artificial Intelligence}, 50:231--254,
  2007.
\newblock 10.1007/s10472-007-9078-x.

\bibitem{GNRZ07}
S.~Ghilardi, E.~Nicolini, S.~Ranise, and D.~Zucchelli.
\newblock Decision procedures for extensions of the theory of arrays.
\newblock {\em Ann. Math. Artif. Intell.}, 50(3-4):231--254, 2007.

\bibitem{DBLP:conf/fossacs/HabermehlIV08}
P.~Habermehl, R.~Iosif, and T.~Vojnar.
\newblock What else is decidable about integer arrays?
\newblock In R.~M. Amadio, editor, {\em FoSSaCS}, volume 4962 of {\em Lecture
  Notes in Computer Science}, pages 474--489. Springer, 2008.

\bibitem{LP92}
S.~Lee and D.~A. Plaisted.
\newblock Eliminating duplication with the hyper-linking strategy.
\newblock {\em Journal of Automated Reasoning}, 9:25--42, 1992.

\bibitem{DBLP:journals/cj/LoosW93}
R.~Loos and V.~Weispfenning.
\newblock Applying linear quantifier elimination.
\newblock {\em Comput. J.}, 36(5):450--462, 1993.

\bibitem{PZ00}
D.~A. Plaisted and Y.~Zhu.
\newblock Ordered semantic hyperlinking.
\newblock {\em Journal of {A}utomated {R}easoning}, 25(3):167--217, October
  2000.

\bibitem{DBLP:conf/cade/Sofronie-Stokkermans05}
V.~Sofronie-Stokkermans.
\newblock Hierarchic reasoning in local theory extensions.
\newblock In R.~Nieuwenhuis, editor, {\em CADE}, volume 3632 of {\em Lecture
  Notes in Computer Science}, pages 219--234. Springer, 2005.

\bibitem{DBLP:conf/cade/Sofronie-Stokkermans10}
V.~Sofronie-Stokkermans.
\newblock Hierarchical reasoning for the verification of parametric systems.
\newblock In J.~Giesl and R.~H{\"a}hnle, editors, {\em IJCAR}, volume 6173 of
  {\em Lecture Notes in Computer Science}, pages 171--187. Springer, 2010.

\bibitem{Tinelli96anew}
C.~Tinelli and M.~Harandi.
\newblock {A new correctness proof of the Nelson-Oppen combination procedure}.
\newblock In {\em Frontiers of Combining Systems, volume 3 of Applied Logic
  Series}, pages 103--120. Kluwer Academic Publishers, 1996.

\end{thebibliography}
\arttocl{\bibliographystyle{acmtrans}}
\resrep{\bibliographystyle{abbrv}}

\end{document}